\documentclass[journal]{IEEEtran}

\usepackage{cite}
\usepackage{amsmath,amssymb,amsfonts,amsthm}
\usepackage{graphicx}
\usepackage{booktabs}
\usepackage{duckuments}
\usepackage[ruled,vlined,linesnumbered,noend]{algorithm2e}
\usepackage{xcolor}
\usepackage{url}
\usepackage[font=footnotesize,labelsep=period]{caption}
\usepackage{subcaption}
\usepackage{comment} 
\usepackage{tikz}
\usepackage[dvipsnames,x11names,table]{xcolor}

\usepackage[hidelinks]{hyperref}

\newcommand{\Dep}{\mathrm{Dep}}
\newtheorem{theorem}{Theorem}
\newtheorem{lemma}{Lemma}
\newtheorem{proposition}{Proposition}
\newtheorem{corollary}{Corollary}
\DontPrintSemicolon
\SetKwInOut{Input}{Input}
\SetKwInOut{Output}{Output}
\SetKw{Return}{return}
\SetKw{Continue}{continue}
\SetKw{Break}{break}

\newcommand{\Guide}{\textsc{Guide}}
\newcommand{\Hit}{\textsc{Hit}}
\newcommand{\Target}{\textsc{Target}}
\newcommand{\Cand}{\textsc{ViewCand}}
\newcommand{\Conn}{\textsc{Connect}}
\newcommand{\Prev}{\textsc{Prev}}
\newcommand{\Traj}{\textsc{Traj}}
\newcommand{\Obs}{\textsc{Observe}}

\newcommand{\RuntimePlanner}{\textsc{RuntimePlanner}}
\newcommand{\Child}{\textsc{Child}}

\newcommand{\PropagateNoGood}{\mathrm{PropagateExclusion}}
\newcommand{\Fail}{\textsc{Fail}}
\newcommand{\Exh}{\textsc{Exh}}
\newcommand{\Refut}{\textsc{Refut}}

\newcommand{\PlannerName}{\mbox{SCOPE}}
\newcommand{\PlannerNameSpaced}{\mbox{SCOPE }}

\begin{document}

\title{SCOPE: Field-of-View-Aware Path Planning in Unknown Space via Safety-Volume Certification}

\author{Anonymous Authors}

\author{Junbin Yuan, Muqing Cao, Yunwoo Lee, Brady Moon, and Sebastian Scherer
\thanks{This work was supported by Shimizu Corporation.}%
\thanks{Junbin Yuan and Muqing Cao contributed equally to this work.}
\thanks{Junbin Yuan is with the Department of Mechanical Engineering at Carnegie Mellon University, Pittsburgh, PA 15213 USA
(junbiny@andrew.cmu.edu).}%
\thanks{Muqing Cao, Yunwoo Lee, and Sebastian Scherer are with the Robotics Institute, School of Computer Science, Carnegie Mellon University, Pittsburgh, PA 15213 USA.}%
\thanks{Brady Moon is with the Department of Mechanical Engineering, Brigham Young University, Provo, UT 84602 USA.}
}


\IEEEaftertitletext{\vspace{-1\baselineskip}}

\maketitle

\begin{abstract}
Safe navigation with a body-mounted limited-field-of-view
sensor requires the complete robot-inflated volume of an
intended motion to be observed and verified free before
execution. We formulate this requirement as online
safety-volume certification in an unknown voxel map and
construct a certified graph whose vertices correspond exactly
to positions with fully known-free safety volumes. Based on
this representation, we propose \PlannerNameSpaced(Safety Certification through Observation Planning and Execution), 
a planning framework that decouples optimistic goal-directed guidance from certified execution.
\PlannerNameSpaced converts the first uncertified point along an optimistic route
into an explicit observation obligation, resolves it through target-centric viewpoint search, and recursively clears intermediate obligations when useful viewpoints are not yet certified-reachable. A certified preview mechanism and an observation-aware trajectory optimization backend enable smooth execution.
We prove conditional completeness: under ideal monotone
sensing and exhaustive finite-domain graph search, \PlannerNameSpaced reaches the goal
whenever a finite feasible sequence of certified sensing actions exists
within its planning primitives.
Across 100 randomized tasks in five unknown 3D environments,
\PlannerNameSpaced reaches every goal while maintaining near-zero entry into
non-certified inflated space, and an ablation shows that the certified preview mechanism reduces mean mission time by 27\%.
Finally, we validate the complete system through real-robot demonstrations in four scenarios.
Project website: \href{https://yuanjunbin.github.io/scope-planner/}{yuanjunbin.github.io/scope-planner}.
\end{abstract}

\begin{IEEEkeywords}
Motion and path planning,
reactive and sensor-based planning,
aerial systems: perception and autonomy,
autonomous vehicle navigation.
\end{IEEEkeywords}

\section{Introduction}
\label{sec:intro}
Autonomous micro aerial vehicles are increasingly expected to operate in cluttered and partially unknown three-dimensional environments, such as industrial facilities, collapsed buildings, tunnels, atriums, and multi-level indoor structures. In these settings, safe navigation is not only a problem of generating dynamically feasible and collision-free motion with respect to the currently reconstructed map. A vehicle equipped with a body-mounted depth sensor must also ensure that the space it is about to occupy has been observed with sufficient range and field of view (FOV). This requirement becomes particularly important when the desired motion is not aligned with the camera's nominal viewing direction.

Most quadrotor planning systems implicitly exploit a convenient property of horizontal flight: yaw can be controlled to approximately align the sensor with the direction of motion. Under this ``look-where-you-fly'' assumption, observing the forward frustum often provides enough information for local obstacle avoidance. However, this assumption breaks down in fully three-dimensional navigation. When the robot must climb through a floor opening, pass under overhangs, or move through vertically separated structures, 
yaw control alone cannot align a
forward-facing sensor with the vertical safety volume required by the intended motion. 
In addition, safety cannot be certified by observing only the trajectory
centerline or a sparse set of future waypoints: the entire robot-inflated volume
swept by the executed motion must be observed and verified free before
traversal.

\begin{figure}[t]
    \centering
    \begin{tikzpicture}
        \node[anchor=south west, inner sep=0] (image) at (0,0) {
            \includegraphics[trim={1300px 250px 1100px 250px},clip,width=\columnwidth]{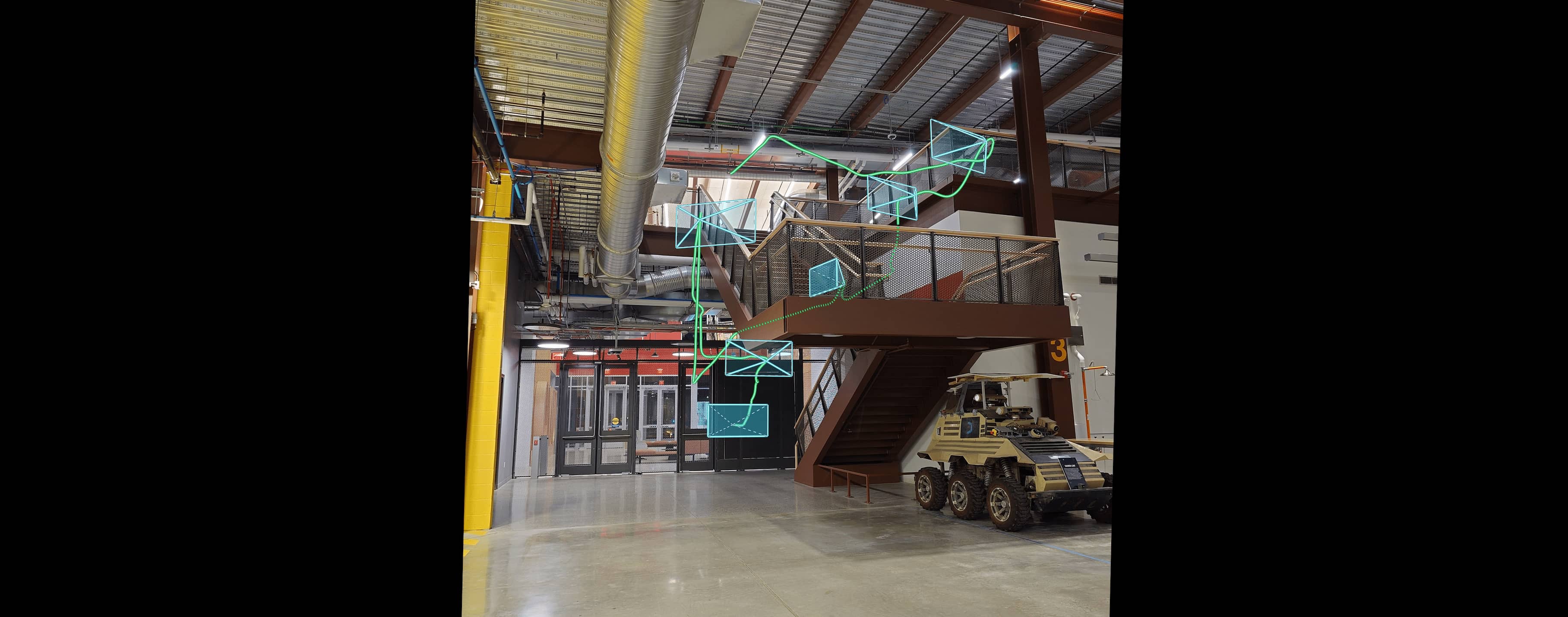}
        };
        \begin{scope}[x={(image.south east)},y={(image.north west)}]
            \fill[ForestGreen, opacity=0.9] (0.32, 0.21) circle (3pt);
            \node[
                anchor=west,
                font=\scriptsize,
                text=Snow2
            ] at ([yshift=6.5pt,xshift=-10pt]0.32, 0.21) {Start};
            \fill[Firebrick3, opacity=0.9] (0.32, 0.835) circle (3pt);
            \node[
                anchor=west,
                font=\scriptsize,
                text=Snow2
            ] at ([yshift=6.5pt,xshift=-15pt]0.32, 0.835) {Goal};
        \end{scope}
    \end{tikzpicture}
    \caption{
    Real-world demonstration of \PlannerNameSpaced Planner in a previously unknown multi-level
    indoor scenario. The UAV incrementally clears the space required for ascent
    using the viewpoints illustrated in cyan, and safely reaches the upper
    level along the green trajectory.
    }
    \label{fig:intro-figure}
    \vspace{-1.0em}                 
\end{figure}


Existing methods address important aspects of navigation in unknown
environments, including local replanning with respect to the map reconstructed
from onboard sensing, goal-directed navigation with incomplete maps, and
perception-aware or FOV-constrained trajectory generation. However, these
methods typically reason about collision avoidance in the current map, progress
toward a goal, or visibility of selected waypoints and targets. These objectives
are not equivalent to certifying the full robot-inflated volume required by the
next motion segment to be executed. As a result, a planner may either produce a
path that appears feasible while leaving safety-critical portions of the volume
to be traversed unobserved, or fail to make progress because it lacks a specific
observation target that connects active perception to the blocked goal-directed
motion. This paper instead asks the question central to safe limited-FOV
navigation: before executing a motion segment, has the entire inflated safety
volume needed by that segment been observed and certified free?


This paper proposes \PlannerName, a field-of-view-aware planning framework for
goal-directed quadrotor navigation in unknown three-dimensional environments.
The central idea is to decouple task progress from safety certification.
\PlannerNameSpaced maintains two complementary graph views of the map: an
optimistic graph that provides goal-directed guidance through regions not yet
certified for execution, and a certified graph that restricts execution to
known-free voxels whose robot-inflated safety volumes have already been
verified. Rather than treating perception as a generic exploration objective,
\PlannerNameSpaced derives sensing objectives from the certification boundary
encountered by goal-directed progress. The planner then resolves the resulting
safety-volume certification deficit through targeted observation before the
corresponding motion is allowed to execute.

This formulation enables behavior that differs from previous FOV-aware planners.
The vehicle may move sideways, backward, or to an intermediate vantage point in
order to observe and certify the safety volume required for future progress;
Fig.~\ref{fig:intro-figure} shows an example in which the vehicle clears the
ascent volume from a sequence of viewpoints before reaching the upper level. If
a useful viewpoint is not currently reachable through certified space,
\PlannerNameSpaced recursively treats that viewpoint as a subgoal and clears the
safety volumes needed to reach it. A preview mechanism further allows the robot
to continue moving smoothly through ordinary clearing actions instead of
repeatedly stopping at each sensing pose. The resulting planner therefore
separates where the robot should move from what it must certify, while
maintaining a hard certification boundary for executed motion.
We establish a
conditional-completeness guarantee for the recursive search. Under a bounded workspace, ideal monotone sensing, and exhaustive finite-domain graph
search, if there exists a finite feasible sequence of certified sensing
actions that can be generated and connected by the planner, then exhaustive
\PlannerName-search reaches the goal after finitely many sensing and replanning
episodes. 


The contributions of this paper are:
\begin{enumerate}
    \item We formulate limited-FOV goal-directed navigation as an online
    safety-volume certification problem, requiring every executed motion
    segment to remain inside robot-inflated space that has already been
    observed and certified free.

    \item We propose \PlannerName-search, which decouples optimistic goal-directed
    guidance from certified execution by converting the first uncertified
    hitpoint into an explicit safety-volume clearing obligation. Target-centric
    viewpoint search, recursive viewpoint subgoals, and context-local exclusion
    memory enable the planner to resolve observation obligations whose useful
    viewpoints are not initially certified-reachable.

    \item We develop a certified preview mechanism and an observation-aware
    trajectory backend that realize the discrete search output as smooth,
    dynamically feasible motion while retaining explicit observation
    constraints.

    \item We establish  conditional completeness of our search algorithm: the exhaustive planner reaches the goal
    whenever a finite feasible sequence of certified sensing actions exists within its candidate-generation and certified-connection primitives. 
\end{enumerate}

We validate the complete system on goal-directed navigation tasks in challenging
3D scenes with vertical passages, occlusions, and multi-level structures, and
compare it against representative perception-aware and target-reach planners.


\section{Related Work}

Related work relevant to our problem spans several complementary roles of
perception in robot planning: enabling goal-directed navigation as an
environment is revealed, selecting viewpoints to acquire useful observations,
shaping aerial motion through perception-aware objectives, and constraining
safe motion under limited sensor FOV. These directions address different
parts of the interaction between motion and sensing. Our problem lies at
their intersection: the robot must make goal-directed progress in an unknown
3D environment while actively acquiring the observations needed to certify
safe execution. We review these lines in turn, from broader navigation and
viewpoint-planning methods to approaches most directly concerned with
limited-FOV safety and certification.

\subsection{Goal-Directed Navigation with Incomplete Maps}

Goal-directed navigation with incomplete environment knowledge has a long history in robotics. 
Classical sensor-based navigation, including Bug-type methods, DistBug, and
TangentBug~\cite{lumelsky1986dynamic,kamon1997distbug,kamon1998tangentbug},
together with incremental graph-search methods such as D* and D*
Lite~\cite{stentz1995dstar,koenig2002dstar}, established the paradigm of
goal-directed motion with online environment revelation. Their sensing
models, however, generally abstract perception as local occupancy or range
information rather than explicitly reasoning about limited sensor FOV and
pre-execution certification of the robot's swept volume. 

Modern aerial planners combine online path search
and trajectory optimization for fast navigation in unknown cluttered
environments~\cite{zhou2019fastplanner,zhou2021ego,zhou2020topotraj,ren2022bubble}.
Within this broader line of work, FASTER maintains safe fallback trajectories
while planning through unknown space~\cite{tordesillas2021faster}, FAR Planner
dynamically updates a global visibility graph as the environment is
revealed~\cite{yang2021far}, and SUPER directly plans high-speed trajectories
from onboard LiDAR point clouds~\cite{ren2025super}.
These methods provide strong mechanisms for replanning, collision avoidance,
and goal progress, but do not explicitly require the complete body-inflated
volume of the next motion to be observed and certified before execution.

Conceptually closest to our safety objective are the ``look before you
sweep'' formulation and VAMP~\cite{goretkin2018look,goretkin2022vamp},
which require workspace regions to be observed before the robot sweeps
through them. Our setting differs in considering online multirotor navigation
in unknown 3D voxel maps, where motion and sensing directions can be strongly
misaligned, particularly during vertical maneuvers, and where body-inflated
volume certification must be maintained through closed-loop replanning and
trajectory execution rather than for a planar geometric path alone.

\subsection{Viewpoint Selection for Exploration and Visibility}

A broader line of work explicitly reasons about where the robot should observe
and how visibility should be acquired or maintained.
Exploration and active-view planners select sensing targets using frontiers,
next-best-view criteria, or information gain~\cite{yamauchi1997frontier,
connolly1985nbv,bircher2016nbvp,harutyunyan2025mapexrl,baek2025pipe}.
Hierarchical frameworks such as FUEL, TARE, and GBPlanner further combine
local exploration decisions with global planning to scale exploration to
large or complex environments~\cite{zhou2021fuel,cao2021tare,dang2019gbplanner}.
Their objective is typically map coverage or information acquisition rather
than certifying the specific safety volume required for the next goal-directed
motion. Regions with high frontier or information gain need not coincide with
the unresolved space that actually blocks goal-directed progress.

Visibility-aware motion planning has also coupled robot motion with the need
to acquire or maintain visibility of regions or targets, as in watchman-route,
pursuit-evasion, and autonomous-observer problems~\cite{chin1988watchman,
guibas1997pursuit,lavalle1997visibility,latombe1997visibility}.
These formulations typically consider known or planar environments and
region- or target-visibility objectives, whereas our planner selects
observations to certify the safety volume required for goal-directed motion
in an unknown 3D environment.

\subsection{Perception-Aware Aerial Planning}

A distinct line of work treats perception not merely as an input to planning,
but as an explicit objective or constraint on aerial motion.
Existing approaches optimize vehicle motion and sensing direction to maintain
visibility of targets or obstacles, improve visual-inertial estimation, or
reduce visual degradation during tracking~\cite{falanga2018pampc,
bartolomei2020semantic,wang2021visibility,tordesillas2022panther}.
Beyond these objectives, perception-aware planning has also been applied to
gaze-guided teleoperation, safe-and-visible inspection corridors, yaw
optimization under FOV constraints, and trajectory optimization with obstacle
threat and sensing urgency~\cite{wang2022gpa,liu2022starconvex,
wu2024globalyaw,zhang2025spot}.

These methods actively shape motion to improve perception, but their sensing
objectives differ from ours: they typically concern target, obstacle, or
landmark visibility, estimation quality, inspection, or trajectory-level
sensing utility. They do not generally require the body-inflated volume of each executed
segment to have been previously observed and certified free.
This distinction is important in vertical and multi-level environments,
where yaw alone cannot generally expose the required safety volume to a
forward-looking sensor, particularly for upward or downward motion.
Our planner instead decouples motion from observation direction and explicitly
plans observations to certify the safety volume required for execution.

\subsection{Limited-FOV Safety Planning and Certification}

A more directly related line of work considers how limited onboard sensing
constrains safe aerial motion.
Limited-FOV aerial planners have addressed restricted onboard perception
through local sensing histories, motion-primitive reuse, safe stopping
policies, and visibility-constrained search or trajectory optimization~
\cite{liu2016limited,lopez2017aggressive3d,lopez2017limitedfov,
florence2018nanomap,nieuwenhuisen2019sensorvisibility}.
Together, these methods establish mechanisms for safe motion under limited
sensing, ranging from local collision avoidance and reuse of previously
observed information to constraining planned motion to remain observable
within the sensor FOV.

More closely related are CPA-Planner~\cite{yu2022cpa},
multi-FOV-constrained planning~\cite{wang2024multifov} based on fast marching tree (FMT*), and
OmniPlanner~\cite{zacharia2026omni}.
CPA-Planner requires unknown portions of a planned trajectory to be observed
in advance and from a safe distance, while multi-FOV-constrained planning
explicitly accounts for multiple sensor FOVs during path and trajectory
generation. OmniPlanner combines persistent global planning with
exploration-based information acquisition to support progress toward an
unreached goal.
Despite these advances, their observation requirements are tied primarily to
trajectory-level observability, FOV-admissible motion, or general exploration
objectives rather than an explicit certification obligation over the
body-inflated transit volume whose unresolved portion blocks goal-directed
progress.

Our method, \PlannerName, instead separates optimistic goal-directed progress,
active observation, and certified execution. It turns the first uncertified
point along an optimistic route into a safety-volume observation obligation
and plans viewpoints to resolve that obligation before execution, allowing
motion and sensing directions to differ in fully three-dimensional structures
such as vertical transitions and narrow openings.

\section{Problem Formulation}
\label{sec:problem}
We consider goal-directed navigation for a multirotor equipped with a sensor rigidly mounted to its body. The robot can yaw to steer the viewing direction,
but it cannot freely roll or pitch its body to align the sensor with an
arbitrary three-dimensional direction of motion. Therefore, when the desired
motion has a strong vertical component, a look-where-you-fly strategy breaks
down: the absence of obstacles in the current forward frustum does not certify
the safety volume required by the intended motion. The robot must first obtain
observations that verify the corresponding volume is free of obstacles before traversing it.

Let the workspace be represented by a voxel graph
\begin{equation*}
    G=(V,E),
\end{equation*}
where $V$ is the set of voxel states and $E$ contains an edge between each pair of $26$-neighborhood
voxels. The workspace is bounded, so $V$ is finite. At planning time, the map belief partitions $V$ into
\begin{equation*}
    V = F \cup U \cup O,
\end{equation*}
where $F$ is the set of known free voxels, $U$ is the set of unknown voxels,
and $O$ is the set of known occupied voxels.
We define the free-space frontier as
\begin{equation*}
    \Phi = \{\mathbf{v}\in F \mid \mathcal{N}_{26}(\mathbf{v})\cap U\neq\emptyset\},
\end{equation*}
where $\mathcal{N}_{26}$ denotes the set of 26-neighbors of $\textbf{v}$; hence $\Phi$ is the
boundary between known free space and unknown space.
Fig.~\ref{fig:guidance_map} illustrates the voxel types.

We identify each voxel with its integer grid coordinate in $\mathbb{Z}^3$.
Let $\delta$ be the voxel resolution. 
Let $r_{xy}$ and $r_z$ denote the
horizontal safety half-width and vertical safety half-height, respectively. The robot's safety radii are rounded to integer: 
\[
n_{xy}=\left\lceil \frac{r_{xy}}{\delta}\right\rceil,\qquad
n_z=\left\lceil \frac{r_z}{\delta}\right\rceil .
\]
The robot safety kernel is the axis-aligned box voxel set
\[
\mathcal{R}
=
\{(i,j,k)\in\mathbb{Z}^3
\mid
|i|\le n_{xy},\ |j|\le n_{xy},\ |k|\le n_z
\},
\]
i.e., an anisotropically scaled Chebyshev ball. The box contains the volume
swept by the robot body under all yaw angles, so the kernel does not need to
rotate with yaw.
Let $\mathcal{R}_f$ be defined in the same way as $\mathcal{R}$, but with smaller safety margins:
\[
n_{xy}^f=\max(0,n_{xy}-1),\qquad
n_z^f=\max(0,n_z-1).
\]
Let $\mathcal{I}_{\mathcal{R}}(A)=A\oplus\mathcal{R}$ denote voxel dilation by
this kernel. The planner maintains inflated obstacle and frontier sets: 
\[
O^+ = \mathcal{I}_{\mathcal{R}}(O),\qquad
\Phi^+ = \mathcal{I}_{\mathcal{R}_f}(\Phi).
\]

Frontier inflation uses a one-voxel-thinner kernel because each frontier
voxel is itself known free. For the box kernels used here, let
\[
    B_\infty(1)
    =
    \left\{
        \mathbf b\in\mathbb Z^3
        \mid
        \|\mathbf b\|_\infty\leq 1
    \right\}
    =
    N_{26}(\mathbf 0)\cup\{\mathbf 0\}.
\]
When \(n_{xy},n_z\geq 1\), the two kernels satisfy
\begin{equation}
    \mathcal R_f\oplus B_\infty(1)=\mathcal R.
    \label{eq:kernel_identity}
\end{equation}
Thus, among known-free voxels outside occupied inflation,
\(\Phi^+\) identifies precisely those whose inflated safety volume still
contains unknown space.

\begin{lemma}[Frontier-inflation characterization]
\label{lem:cert-semantics}
For any
\(\mathbf v\in F\setminus O^+\),
\[
    \mathbf v\in\Phi^+
    \quad\Longleftrightarrow\quad
    \mathcal I_{\mathcal R}(\{\mathbf v\})\cap U
    \neq\emptyset.
\]
\end{lemma}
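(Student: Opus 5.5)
The plan is to prove the two directions separately. Throughout I will use three facts. The box kernels $\mathcal R$, $\mathcal R_f$ and $B_\infty(1)$ are symmetric, so $\mathbf x\in\mathcal I_{\mathcal R}(A)$ iff $\mathcal I_{\mathcal R}(\{\mathbf x\})\cap A\neq\emptyset$, and likewise for $\mathcal R_f$. The kernel identity \eqref{eq:kernel_identity} holds under the standing assumption $n_{xy},n_z\ge 1$. Finally, $V=F\cup U\cup O$ is a partition.

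\emph{($\Rightarrow$).} Suppose $\mathbf v\in\Phi^+$. Then there is a frontier voxel $\boldsymbol\phi\in\Phi$ with $\mathbf v-\boldsymbol\phi\in\mathcal R_f$. By the definition of $\Phi$, $\boldsymbol\phi$ has a 26-neighbor $\mathbf u\in U$, so $\mathbf u-\boldsymbol\phi\in B_\infty(1)$. Therefore
\[
\mathbf u-\mathbf v=(\mathbf u-\boldsymbol\phi)+(\boldsymbol\phi-\mathbf v)\in B_\infty(1)\oplus\mathcal R_f=\mathcal R
\]
by \eqref{eq:kernel_identity}, using the symmetry of $\mathcal R_f$. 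Hence $\mathbf u\in\mathcal I_{\mathcal R}(\{\mathbf v\})\cap U$. This direction does not need $\mathbf v\notin O^+$.

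\emph{($\Leftarrow$).} Suppose $\mathbf u\in U$ with $\mathbf u-\mathbf v\in\mathcal R$.
\begin{enumerate}
\item By \eqref{eq:kernel_identity}, write $\mathbf u-\mathbf v=\mathbf a+\mathbf b$ with $\mathbf a\in\mathcal R_f$ and $\mathbf b\in B_\infty(1)$. Concretely, take $\mathbf w=\mathbf v+\mathbf a$ to be the coordinatewise clamp of $\mathbf u$ into the box $\mathbf v+\mathcal R_f$.
\item Build a discrete path $\mathbf v=\mathbf p_0,\mathbf p_1,\dots,\mathbf p_m=\mathbf w$ of successive 26-neighbors. At each step, move every coordinate one unit toward $\mathbf w$ until it matches. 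Each coordinate moves monotonically between $\mathbf v$ and $\mathbf w$, so every $\mathbf p_t$ lies in the box $\mathbf v+\mathcal R_f$. Append $\mathbf p_{m+1}=\mathbf u$, which is adjacent to $\mathbf w$ or equal to it.
\item Every point of this path lies in $\mathcal I_{\mathcal R}(\{\mathbf v\})$. Since $\mathbf v\notin O^+=\mathcal I_{\mathcal R}(O)$, this box contains no occupied voxel, so each path point lies in $F$ or in $U$.
\item The path starts at $\mathbf p_0=\mathbf v\in F$ and ends at $\mathbf u\in U$. Let $k\ge1$ be the first index with $\mathbf p_k\in U$. Then $\mathbf p_{k-1}\in F$ and it has the unknown 26-neighbor $\mathbf p_k$, so $\mathbf p_{k-1}\in\Phi$.
\item Since $k-1\le m$, the voxel $\mathbf p_{k-1}$ lies in $\mathbf v+\mathcal R_f$. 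Therefore $\mathbf v\in\mathcal I_{\mathcal R_f}(\Phi)=\Phi^+$.
\end{enumerate}

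The main obstacle is the reverse direction. A frontier voxel must be located within the thinner kernel $\mathcal R_f$, not merely within $\mathcal R$. A naive straight walk from $\mathbf v$ to $\mathbf u$ can leave $\mathbf v+\mathcal R_f$ in one coordinate before the last step, and then the frontier voxel it finds may lie outside $\mathbf v+\mathcal R_f$. Routing the path through the clamped point $\mathbf w$, which the decomposition \eqref{eq:kernel_identity} provides, ensures that every voxel before the final one stays inside $\mathbf v+\mathcal R_f$. The exclusion of $O^+$ is what rules out an occupied voxel appearing before the first unknown one along the path.
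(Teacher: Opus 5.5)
Your proof is correct, and your forward direction is exactly the paper's: the kernel identity \eqref{eq:kernel_identity} applied to a frontier voxel and its unknown neighbor.

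The reverse direction takes a different route. The paper argues extremally:
\begin{itemize}
\item It picks the unknown voxel $\mathbf u\in\mathcal I_{\mathcal R}(\{\mathbf v\})$ of minimum $\ell_1$-distance to $\mathbf v$.
\item It takes a single 26-step from $\mathbf u$ toward $\mathbf v$. This shrinks every nonzero offset coordinate by one, so the resulting voxel $\mathbf w$ lies in $\mathcal I_{\mathcal R_f}(\{\mathbf v\})$.
\item Minimality excludes $\mathbf w\in U$, and $\mathbf v\notin O^+$ excludes $\mathbf w\in O$. Hence $\mathbf w\in\Phi$.
\end{itemize}
You instead start from an arbitrary unknown $\mathbf u$ and route a monotone path $\mathbf v\to\mathbf w\to\mathbf u$ through the clamp point. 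You then take the last known-free voxel before the first unknown one, which is a discrete intermediate-value argument.

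The paper's version is shorter. It avoids the difficulty you point out with a naive forward walk simply by stepping backward from $\mathbf u$ rather than forward from $\mathbf v$. That one-step trick, however, depends on the coordinatewise box geometry: it needs every nonzero point of $\mathcal R$ to land in $\mathcal R_f$ after one step toward the origin.

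Your argument needs no extremal choice. It uses only the decomposition from \eqref{eq:kernel_identity} and a 26-path inside $\mathcal R_f$ from the origin to the decomposition point. It would therefore carry over unchanged to other symmetric, 26-connected kernel pairs satisfying that identity.
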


\begin{proof}
Suppose first that
\(\mathcal I_{\mathcal R}(\{\mathbf v\})\cap U\neq\emptyset\).
Choose an unknown voxel \(\mathbf u\) in this set with minimum
\(\ell_1\)-distance to \(\mathbf v\), and move one 26-neighbor step from
\(\mathbf u\) toward \(\mathbf v\) to obtain \(\mathbf w\).
Coordinatewise,
    $\mathbf w
    \in
    \mathcal I_{\mathcal R_f}(\{\mathbf v\}$).
The minimality of \(\mathbf u\) excludes \(\mathbf w\in U\), while
\(\mathbf v\notin O^+\) excludes
\(\mathbf w\in O\). Hence \(\mathbf w\in F\). Since
\(\mathbf w\) has the unknown neighbor \(\mathbf u\),
\(\mathbf w\in\Phi\), and therefore
\(\mathbf v\in\Phi^+\).

Conversely, if \(\mathbf v\in\Phi^+\), then some
\(\boldsymbol\phi\in\Phi\) lies within \(\mathcal R_f\) of
\(\mathbf v\). Because \(\boldsymbol\phi\) has an unknown
26-neighbor \(\mathbf u\), Eq.~\eqref{eq:kernel_identity} gives $\mathbf u
    \in
    \mathcal I_{\mathcal R}(\{\mathbf v\})$,
and hence
\(\mathcal I_{\mathcal R}(\{\mathbf v\})\cap U\neq\emptyset\).
\end{proof}

\begin{corollary}[Certified-set characterization]
\label{cor:gcert-semantics}
\[
    F\setminus(O^+\cup\Phi^+)
    =
    \left\{
        \mathbf v\in F
        \mid
        \mathcal I_{\mathcal R}(\{\mathbf v\})\subseteq F
    \right\}.
\]
\end{corollary}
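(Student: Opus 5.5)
The plan is to prove the two inclusions separately, using Lemma~\ref{lem:cert-semantics} for the frontier part and a direct kernel argument for the occupied part. The key extra observation is that $\mathcal R$ is symmetric ($\mathcal R=-\mathcal R$). Hence for any voxel $\mathbf v$,
\[
\mathbf v\in O^+=\mathcal I_{\mathcal R}(O)
\iff
\mathcal I_{\mathcal R}(\{\mathbf v\})\cap O\neq\emptyset .
\]
This holds because $\mathbf v=\mathbf o+\mathbf r$ with $\mathbf o\in O$, $\mathbf r\in\mathcal R$ is equivalent to $\mathbf o=\mathbf v-\mathbf r\in\mathbf v\oplus\mathcal R$. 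I will state this as a one-line preliminary step.

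\emph{Inclusion ``$\subseteq$''.} Take $\mathbf v\in F\setminus(O^+\cup\Phi^+)$. Since $\mathbf v\notin O^+$, the preliminary step gives $\mathcal I_{\mathcal R}(\{\mathbf v\})\cap O=\emptyset$. Since $\mathbf v\in F\setminus O^+$ and $\mathbf v\notin\Phi^+$, Lemma~\ref{lem:cert-semantics} gives $\mathcal I_{\mathcal R}(\{\mathbf v\})\cap U=\emptyset$. Because $V=F\cup U\cup O$ is a partition, $\mathcal I_{\mathcal R}(\{\mathbf v\})\subseteq F$.

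\emph{Inclusion ``$\supseteq$''.} Take $\mathbf v\in F$ with $\mathcal I_{\mathcal R}(\{\mathbf v\})\subseteq F$. Then $\mathcal I_{\mathcal R}(\{\mathbf v\})$ meets neither $O$ nor $U$. By the preliminary step, $\mathbf v\notin O^+$. Now $\mathbf v\in F\setminus O^+$, so Lemma~\ref{lem:cert-semantics} applies, and the empty intersection with $U$ gives $\mathbf v\notin\Phi^+$.

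The only subtle points are the symmetry of the box kernel, needed to turn ``$\mathbf v$ lies in the dilation of $O$'' into ``the dilation of $\mathbf v$ meets $O$'', and making sure the hypothesis $\mathbf v\notin O^+$ is available before invoking the lemma. The second point is why the $O^+$ part is handled first in both directions. A minor caveat concerns dilation near the bounded workspace boundary, where $\mathbf v\oplus\mathcal R$ may leave $V$. I would assume, as the paper implicitly does, that out-of-bounds voxels are treated as non-free (e.g.\ unknown or occupied), or else restrict the dilation to $V$. I would mention this convention briefly rather than belabor it. The lemma's standing assumption $n_{xy},n_z\ge1$ is inherited unchanged.
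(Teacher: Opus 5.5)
Your proof is correct and follows essentially the same route as the paper, which for $\mathbf v\in F$ writes a single chain of equivalences: symmetry of $\mathcal R$ turns $\mathbf v\notin O^+$ into $\mathcal I_{\mathcal R}(\{\mathbf v\})\cap O=\emptyset$, Lemma~\ref{lem:cert-semantics} turns $\mathbf v\notin\Phi^+$ into $\mathcal I_{\mathcal R}(\{\mathbf v\})\cap U=\emptyset$, and the partition $V=F\cup U\cup O$ finishes. The paper's one-line argument leaves two details implicit that you make explicit: you settle the $O^+$ part first so that the lemma's hypothesis $\mathbf v\in F\setminus O^+$ is available, and you note the need for a convention at the workspace boundary.
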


\begin{proof}
For any \(\mathbf v\in F\), symmetry of \(\mathcal R\) and
Lemma~\ref{lem:cert-semantics} give
\[
\begin{aligned}
    \mathbf v\notin O^+\cup\Phi^+
    &\Longleftrightarrow
    \mathcal I_{\mathcal R}(\{\mathbf v\})\cap O=\emptyset
    \ \text{and}\
    \mathcal I_{\mathcal R}(\{\mathbf v\})\cap U=\emptyset\\
    &\Longleftrightarrow
    \mathcal I_{\mathcal R}(\{\mathbf v\})\subseteq F.
\end{aligned}
\]
\end{proof}

\begin{figure}[t]
    \centering
    \includegraphics[trim={0.0cm 4.05cm 0.0cm 0.0cm},clip,width=0.8\columnwidth]{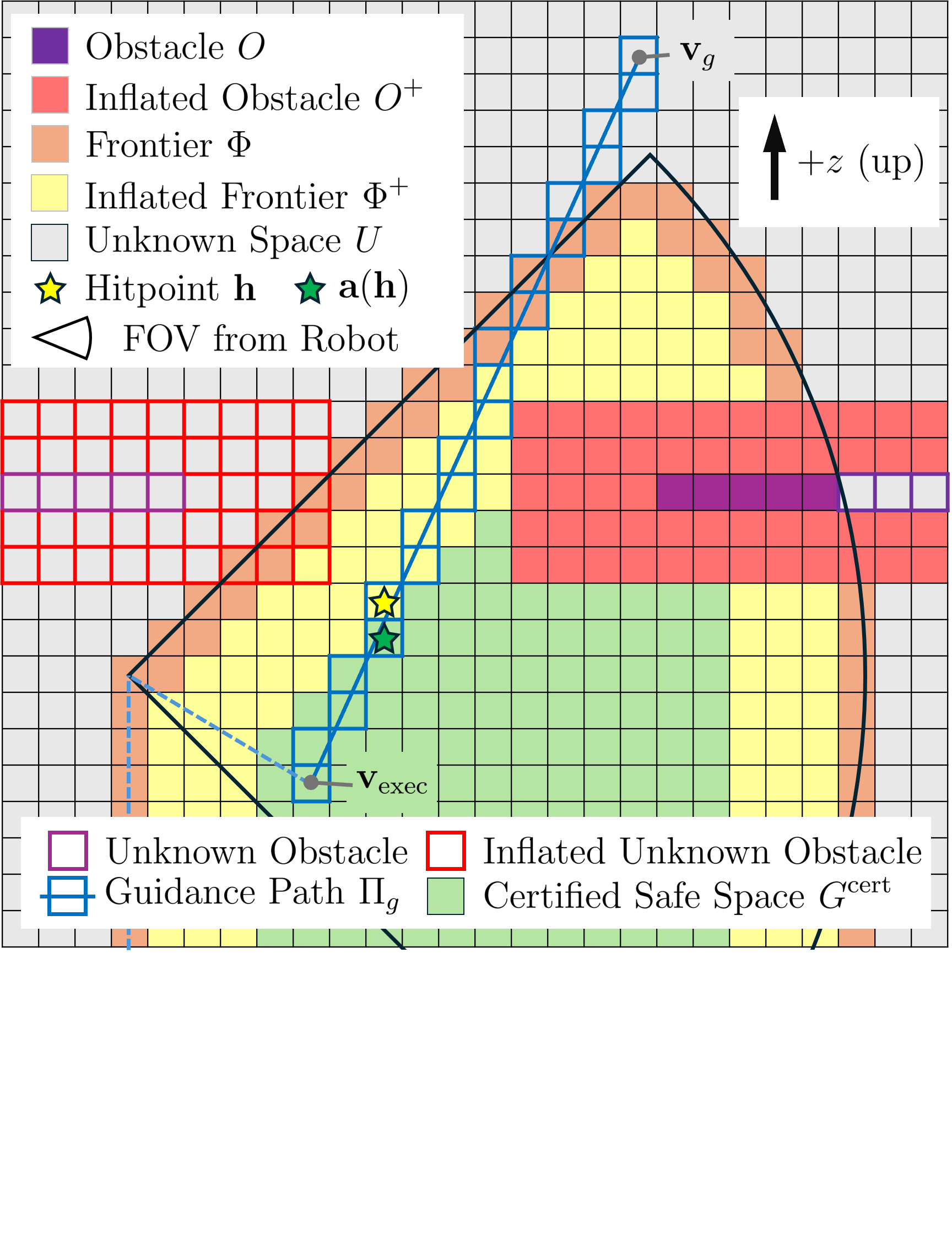}
    \caption{Illustration of map elements and guidance tracing in an example voxel
    map with inflation $n_{xy}=4$ and $n_z=2$. 
    The figure is 2-D illustrative rather than a geometrically exact 3-D
    cross-section of the inflated volume.
    The black sector denotes the onboard sensor field of view,
    and for illustration the robot position is the same as
    the sensor origin. 
    }
    \label{fig:guidance_map}
    \vspace{-0.3cm}
\end{figure}

By
Lemma~\ref{lem:cert-semantics}, the safety volume of every voxel in
$\Phi^+$ intersects unknown space; such voxels are not known to be occupied,
but they cannot be traversed until additional sensing resolves the
corresponding unknown region.

These sets induce two graph views of the map. The \emph{optimistic graph} is
\begin{equation}
    G^{\mathrm{opt}} = G[V \setminus O^{+}],
\end{equation}
which blocks occupied inflation but allows frontier-inflated and unknown
voxels. Paths in this graph may route through unknown or frontier-adjacent voxels
whose surrounding safety volume has not yet been certified.
The \emph{certified graph} is
\begin{equation}
    G^{\mathrm{cert}} = G[F \setminus (O^{+}\cup \Phi^{+})],
\end{equation}
which, by Corollary~\ref{cor:gcert-semantics}, contains exactly the known-free
voxels whose entire inflated safety volume is known free. A path contained in
$G^{\mathrm{cert}}$ requires no additional sensing for safety certification.

Corollary~\ref{cor:gcert-semantics} extends from voxel centers to continuous
positions. The cells of the voxels in
$\mathcal{I}_{\mathcal{R}}(\{\mathbf{v}\})$ jointly cover the box with
half-widths $(n_{xy}+\tfrac{1}{2})\delta$ and half-height
$(n_z+\tfrac{1}{2})\delta$ around $\mathrm{center}(\mathbf{v})$, while a
robot centered anywhere in the cell of $\mathbf{v}$ sweeps at most
$\tfrac{\delta}{2}+r_{xy}$ horizontally and $\tfrac{\delta}{2}+r_z$
vertically from $\mathrm{center}(\mathbf{v})$; since $r_{xy}\le n_{xy}\delta$
and $r_z\le n_z\delta$ by the ceiling definition, the swept safety volume
stays inside known-free cells. We call the union of the cells of the voxels
of $G^{\mathrm{cert}}$ the \emph{continuous certified region}: at any robot
position inside it, the entire safety volume lies in known-free space.

Let $\mathbf{v}_{\mathrm{exec}}\in G^{\mathrm{cert}}$ denote the \emph{execution start
voxel}, the execution start may be the current robot voxel or a safe lookahead
voxel on the active trajectory.
We say that a voxel $\mathbf{v}$ is \emph{certified-reachable} from
$\mathbf{v}_{\mathrm{exec}}$ if there exists a path from $\mathbf{v}_{\mathrm{exec}}$ to $\mathbf{v}$
contained entirely in $G^{\mathrm{cert}}$.

At planning cycle $t$, the robot has pose $(\mathbf{x}_t,\psi_t)$ and map
belief $(F_t,U_t,O_t)$. The goal is specified as
$(\mathbf{x}_g,\psi_g)$. 
The planner operates online: at each cycle it produces a finite
geometric motion segment whose executed portion lies in
$G_t^{\mathrm{cert}}$, obtains new depth observations, updates the map belief,
and replans.

\section{Path-Searching with FOV Awareness}
\label{sec:method}

\begin{figure}[t]
    \centering
    \includegraphics[trim={0.0cm 0.0cm 0.0cm 0.0cm},clip,width=\columnwidth]{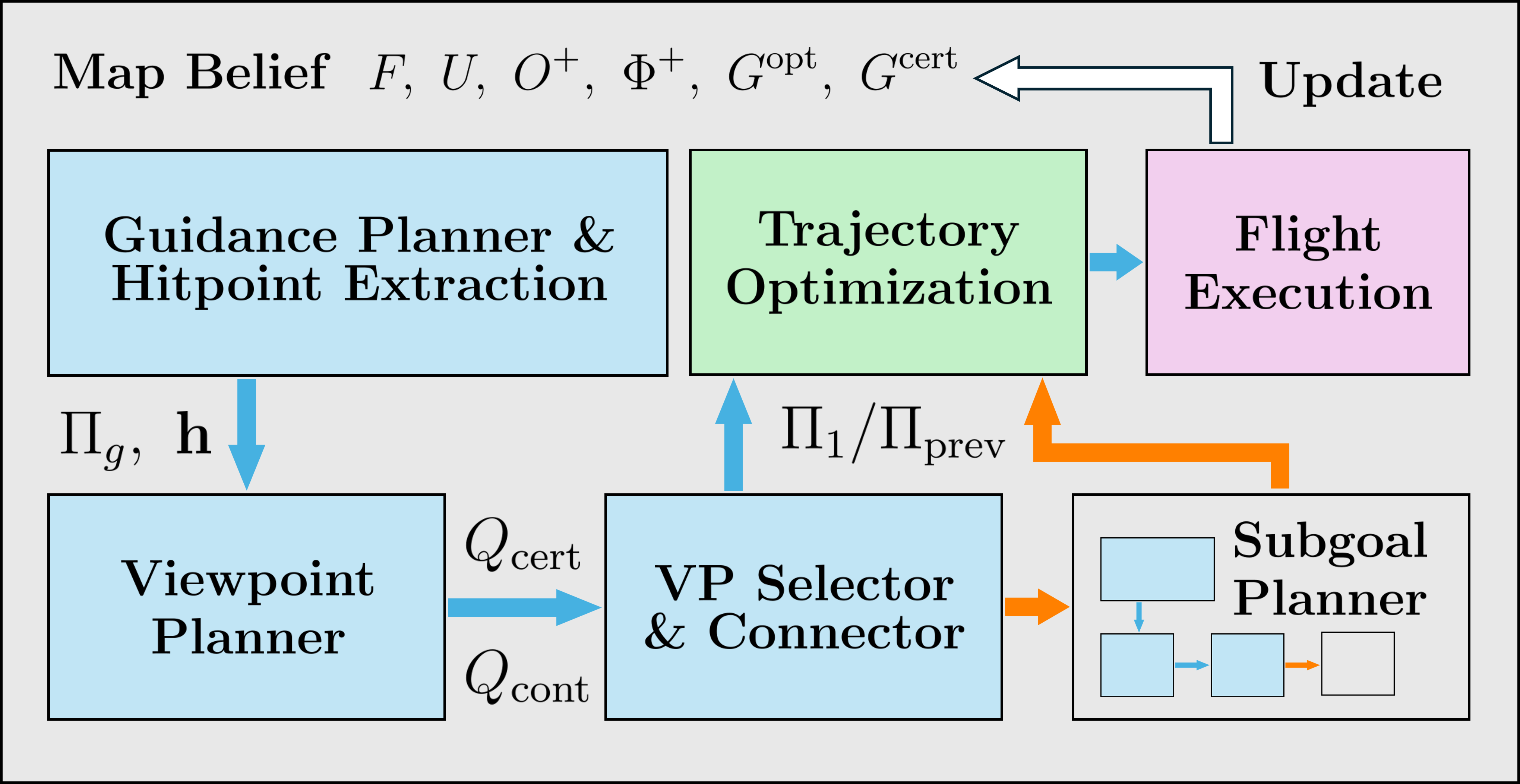}
    \caption{Overview of the search planner.}
    \label{fig:method_overview}
    \vspace{-1.0em}
\end{figure}

Fig.~\ref{fig:method_overview} overviews the search planner. Each planning
cycle traces an optimistic guidance path toward the goal until its first
uncertified voxel, the \emph{hitpoint}, whose inflated safety volume becomes
an observation obligation (Sec.~\ref{sec:global-guidance}); the planner then
generates candidate sensing poses expected to resolve this obligation
(Sec.~\ref{sec:target-centric}) and connects the best certified-reachable
candidate to the execution start (Sec.~\ref{sec:connect}). When every useful
candidate is reachable only through uncertified space, one is pursued
recursively as a subgoal, with a memory of failed attempts preventing
repeated selection (Sec.~\ref{sec:subgoal}).
Sec.~\ref{sec:planning-loop} assembles these stages into the runtime loop
(Alg.~\ref{alg:runtime_pipeline}), and Sec.~\ref{sec:receding-preview} adds a
certified preview layer for smooth execution.

\subsection{Optimistic Guidance and Observation Obligations}
\label{sec:global-guidance}
Given a certified-reachable guidance start voxel $\mathbf{v}_{\mathrm{exec}}$ and a guidance
target voxel $\mathbf{v}_g\in G^{\mathrm{opt}}$, the guidance module returns
an optimistic path in $G^{\mathrm{opt}}$,
\begin{equation}
    \Pi_g = (\mathbf{v}_0, \mathbf{v}_1, \ldots, \mathbf{v}_N), \quad \mathbf{v}_i \in G^{\mathrm{opt}}
\end{equation}
where $\mathbf{v}_0=\mathbf{v}_{\mathrm{exec}}$ and $\mathbf{v}_N=\mathbf{v}_g$. Since
$G^{\mathrm{opt}}$ allows unknown and frontier-inflated voxels, $\Pi_g$ is not
necessarily directly executable. Instead, it is a proposal for task progress
that must undergo local safety certification before execution.

The planner ray-marches $\Pi_g$ from $\mathbf{v}_{\mathrm{exec}}$ toward $\mathbf{v}_g$. The
first voxel on this path that does not belong to $G^{\mathrm{cert}}$ is defined
as the \emph{hitpoint}, denoted by $\mathbf{h}$. The hitpoint marks the first place along the optimistic route
whose surrounding safety volume must be observed before the robot can safely move farther.
We write $\mathbf{a}(\mathbf{h})$ for the voxel immediately preceding
$\mathbf{h}$ along $\Pi_g$; by construction, the front section of $\Pi_g$ up
to $\mathbf{a}(\mathbf{h})$ lies in $G^{\mathrm{cert}}$. See Fig.~\ref{fig:guidance_map} for illustration.

Associated with each hitpoint is an \emph{observation obligation}
\begin{equation}
    \Omega(\mathbf{h}) =
    \mathcal{I}_{\mathcal{R}}(\{\mathbf{h}\})\cap U,
\end{equation}
namely the unknown voxels inside the inflated safety volume of
$\mathbf{h}$. Sensing resolves voxels in $\Omega(\mathbf{h})$ by moving them
from $U$ to either $F$ or $O$. After the resulting map update
$(F',U',O')$, we say that the hitpoint is \emph{cleared} if the entire inflated
safety volume is known free,
\begin{equation}
    \mathrm{Clear}(\mathbf{h}) \Longleftrightarrow
    \mathcal{I}_{\mathcal{R}}(\{\mathbf{h}\}) \subseteq F'.
\end{equation}
By Corollary~\ref{cor:gcert-semantics}, $\mathrm{Clear}(\mathbf{h})$ holds
exactly when $\mathbf{h}$ belongs to the certified graph induced by the
updated map.
If any voxel in this inflated safety volume is found occupied, i.e., 
$\mathcal{I}_{\mathcal{R}}(\{\mathbf{h}\})\cap O'\neq\emptyset$,
then $\mathbf{h}\in O^{+\prime}$ under the updated map, so the current
optimistic guidance path is invalidated and must be replanned.

The guidance module is not tied to a particular graph-search algorithm; it only
needs to produce a route in $G^{\mathrm{opt}}$ from the certified start to the
current target.
Our implementation uses
voxel-space A* from $\mathbf{v}_{\mathrm{exec}}$ to $\mathbf{v}_g$ with a 26-connected
grid-distance heuristic. 
In addition to $O^+$, the guidance planner may be given an extra blocked set
$B$, to be introduced in Section~\ref{sec:subgoal}.

\subsection{Target-Centric Observation Candidate Search}
\label{sec:target-centric}

At a hitpoint $\mathbf{h}$, the target-centric planner searches for sensing
poses that can observe an \emph{observation target}
$\tau\subseteq\mathcal{I}_{\mathcal R}(\{\mathbf h\})$.
A candidate sensing pose is
$q_{\mathrm{vp}}=(\mathbf v_{\mathrm{vp}},\psi_{\mathrm{vp}})$, where
$\mathbf v_{\mathrm{vp}}$ is the viewpoint voxel and $\psi_{\mathrm{vp}}$
is the body yaw. Each target $\tau$ is associated with a representative
point $\mathbf p_\tau$. At a candidate voxel, the yaw is chosen so that the
sensor faces $\mathbf p_\tau$, accounting for the fixed body-to-sensor
extrinsics. The resulting target vector in the sensor frame is
\begin{equation}
    \mathbf{d}_s =
    [d_{s,x},d_{s,y},d_{s,z}]^\top
    =
    R_{ws}^{\top}(\mathbf{p}_\tau-\mathbf{x}_s),
\end{equation}
where $\mathbf x_s$ and $R_{ws}$ are the induced sensor position and
orientation. 
We use two plan types:
\begin{itemize}
    \item \textbf{Volume-clearing plan} (Fig.~\ref{fig:obs_vp_selected}):
    $\tau=\mathcal{I}_{\mathcal{R}}(\{\mathbf{h}\})$ targets the full inflated
    safety volume of the hitpoint, with
    $\mathbf{p}_\tau=\mathrm{center}(\mathbf{h})$ as the representative target
    point. This is the first sensing attempt for a newly encountered hitpoint.

    \item \textbf{Unknown-observation plan}
    (Fig.~\ref{fig:obs_vp_unknown}):
    $\tau=\{\mathbf{u}\}$ with $\mathbf{u}\in\Omega(\mathbf{h})$ targets a
    single unresolved voxel, with
    $\mathbf{p}_\tau=\mathrm{center}(\mathbf{u})$.
    If an executed volume-clearing plan leaves the hitpoint uncleared, the
    planner switches to this mode and resolves the remaining obligation
    through targeted observations of unresolved voxels.
\end{itemize}


\begin{figure}[t]
    \centering
    \includegraphics[trim={3.54cm 5.55cm 0.0cm 2.5cm},clip,width=0.7\columnwidth]{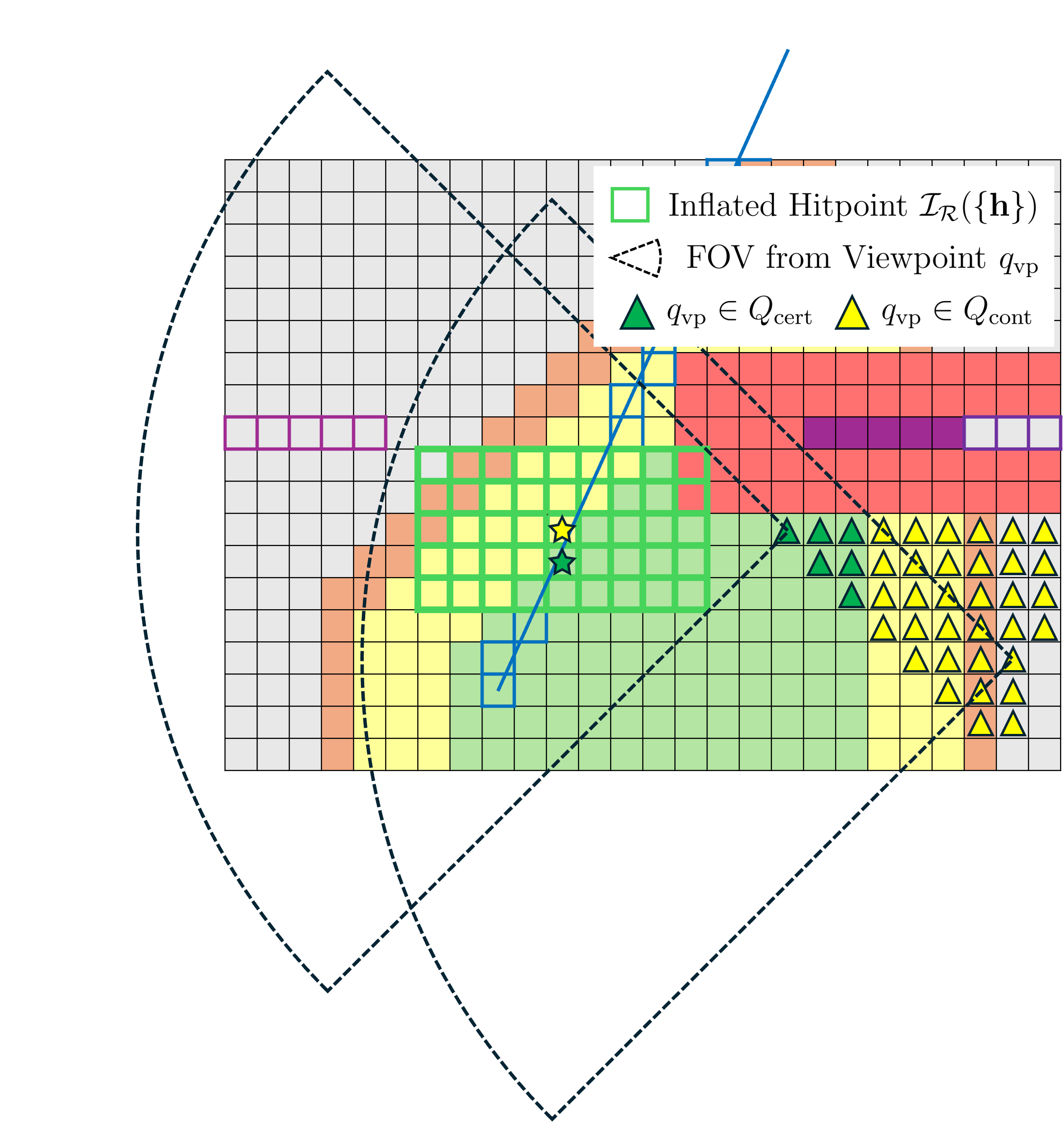}
    \caption{Volume-clearing candidate search.
    The light-green outlined voxels indicate the target
    $\tau=\mathcal{I}_{\mathcal R}(\{\mathbf h\})$.
    Starting from $\mathbf a(\mathbf h)$, the planner searches for candidate
    viewpoints from which the entire inflated safety volume can be observed.
    Map colors follow Fig.~\ref{fig:guidance_map}.}
    \label{fig:clear_vp}
    \vspace{-0.3cm}
\end{figure}

\begin{figure}[t]
    \centering
    \begin{subfigure}[t]{0.49\columnwidth}
        \centering
        \includegraphics[trim={3.0cm 6.55cm 3.525cm 2.01cm},clip,width=\linewidth]{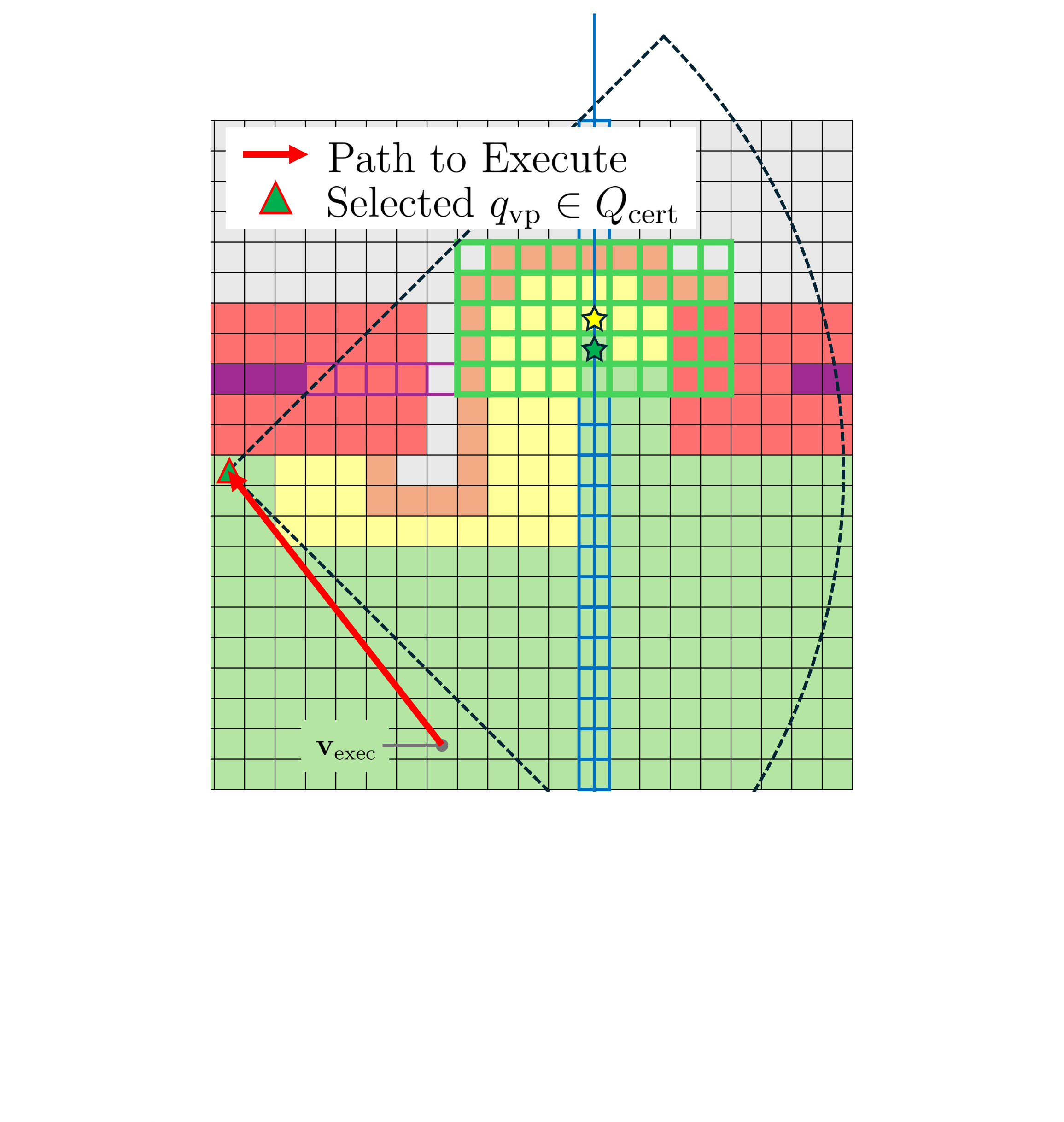}
        \caption{}
        \label{fig:obs_vp_selected}
    \end{subfigure}
    \begin{subfigure}[t]{0.49\columnwidth}
        \centering
        \includegraphics[trim={3.0cm 6.54cm 3.525cm 2.01cm},clip,width=\linewidth]{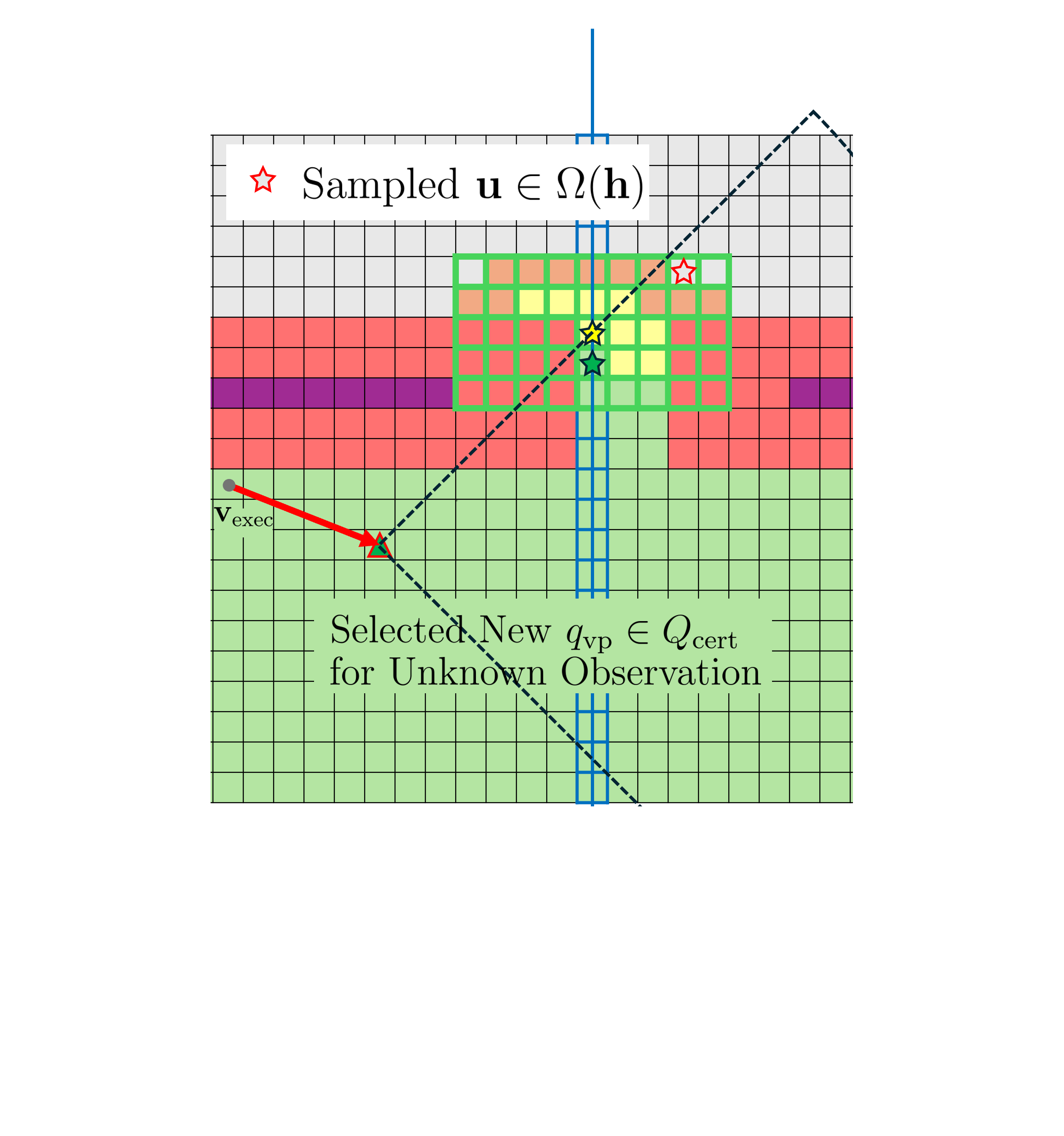}
        \caption{}
        \label{fig:obs_vp_unknown}
    \end{subfigure}

    \caption{Transition from volume clearing to unknown observation.
    (\ref{fig:obs_vp_selected}) A viewpoint $q_{\mathrm{vp}}$ is selected
    to attempt clearing the inflated safety volume of $\mathbf h$. A previously unknown obstacle occludes part
    of the target, leaving voxels in $\Omega(\mathbf h)$ unresolved.
    (\ref{fig:obs_vp_unknown}) The planner targets one remaining unknown voxel
    and searches for a new unknown-observation viewpoint.}
    \label{fig:unknown_observation_planning}
    \vspace{-0.3cm}
\end{figure}

\begin{figure}[t]
    \centering
    \begin{subfigure}[t]{0.49\columnwidth}
        \centering
        \includegraphics[trim={4.0cm 3.0cm 3.0cm 3.5cm},clip,width=\linewidth]{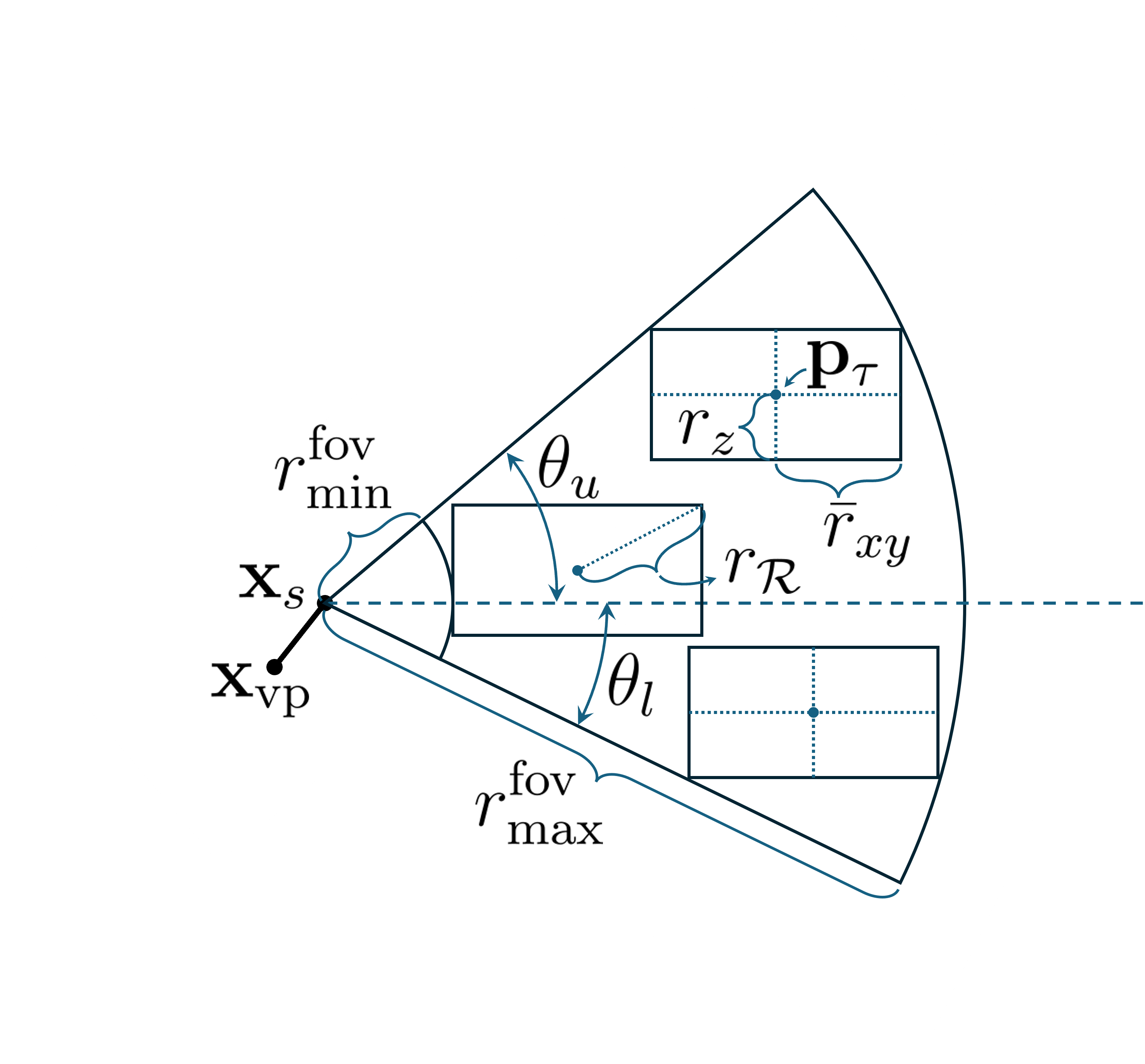}
        \caption{}
        \label{fig:fov_1}
    \end{subfigure}
    \hfill
    \begin{subfigure}[t]{0.49\columnwidth}
        \centering
        \includegraphics[trim={4.0cm 3.0cm 3.0cm 3.5cm},clip,width=\linewidth]{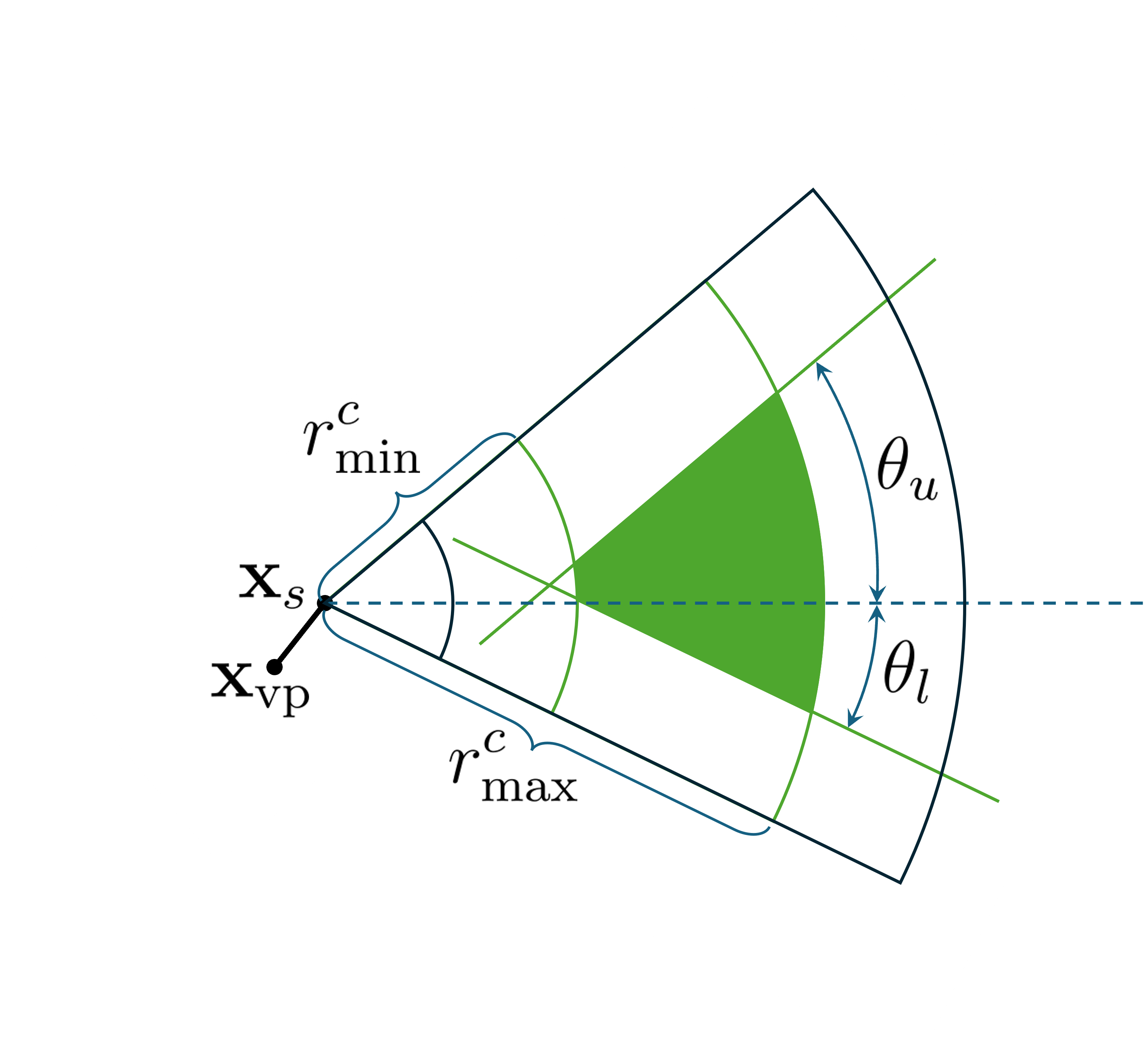}
        \caption{}
        \label{fig:fov_2}
    \end{subfigure}

    \caption{FOV-aware geometry for volume clearing.
    (\ref{fig:fov_1}) The inflated safety volume of $\mathbf h$ must fit within
    the usable sensing range and vertical FOV of the target-facing sensor pose.
    (\ref{fig:fov_2}) These constraints define a feasible region for the
    hitpoint center, shown in green. The heuristic $h_c$ in
    Eq.~\eqref{eq:clear_heuristic} measures the geometric violation of these
    constraints and guides candidate enumeration toward this region.}
    \label{fig:fov_search}
    \vspace{-0.3cm}
\end{figure}

The candidate search is guided by the geometric violation of the
corresponding observation requirement.

For a volume-clearing plan, the
inflated safety volume must fit within the usable field of view.
Let $\bar r_{xy}=\sqrt{2}\,r_{xy}$ denote the box kernel's largest horizontal width and
$r_{\mathcal R}=\sqrt{2r_{xy}^2+r_z^2}$ the circumradius of the box kernel.
Accounting for this extent gives the effective clearing range
\begin{equation}
    r_{\max}^c
    =
    \max\{r_{\max}^{\mathrm{fov}}-r_{\mathcal R},0\},
    \qquad
    r_{\min}^c
    =
    r_{\min}^{\mathrm{fov}}+r_{\mathcal R}.
\end{equation}
The apparent upper and lower vertical angles of the inflated volume are
\begin{equation}
\begin{aligned}
    \theta_c^+
    &=
    \operatorname{atan2}
    (d_{s,z}+r_z,\ d_{s,x}-\bar r_{xy}),\\
    \theta_c^-
    &=
    \operatorname{atan2}
    (d_{s,z}-r_z,\ d_{s,x}-\bar r_{xy}).
\end{aligned}
\end{equation}
With $\theta_u$ and $\theta_l$ denoting the upper and lower vertical FOV
bounds, we define four geometric residuals
\begin{equation}
\begin{aligned}
    \eta^c_{\max} &= \|\mathbf{d}_s\|-r_{\max}^c,
    &
    \eta^c_u &= \|\mathbf{d}_s\|\sin(\theta_c^+-\theta_u),\\
    \eta^c_{\min} &= r_{\min}^c-\|\mathbf{d}_s\|,
    &
    \eta^c_l &= \|\mathbf{d}_s\|\sin(\theta_l-\theta_c^-).
\end{aligned}
\end{equation}
Here $\eta^c_{\max},\eta^c_{\min}$ penalize range violations, while
$\eta^c_u,\eta^c_l$ penalize violations of the vertical FOV bounds.
The volume-clearing heuristic is
\begin{equation}
\label{eq:clear_heuristic}
    h_c(q_{\mathrm{vp}};\mathbf h)
    =
    \frac{1}{\delta}
    \max\{\eta^c_{\max},\eta^c_{\min},\eta^c_u,\eta^c_l,0\}.
\end{equation}

For an unknown-observation plan, no inflated-volume margin is required.
Using the point-target vertical angle
$\theta_o=\operatorname{atan2}(d_{s,z},d_{s,x})$, we similarly define
\begin{equation}
\begin{aligned}
    \eta^o_{\max} &= \|\mathbf{d}_s\|-r_{\max}^{\mathrm{fov}},
    &
    \eta^o_{u} &= \|\mathbf{d}_s\|\sin(\theta_o-\theta_u),\\
    \eta^o_{\min} &= r_{\min}^{\mathrm{fov}}-\|\mathbf{d}_s\|,
    &
    \eta^o_{l} &= \|\mathbf{d}_s\|\sin(\theta_l-\theta_o),
\end{aligned}
\end{equation}
and the unknown-observation heuristic is
\begin{equation}
    h_o(q_{\mathrm{vp}};\mathbf{u})
    =
    \frac{1}{\delta}
    \max\{\eta^o_{\max},\eta^o_{\min},\eta^o_u,\eta^o_l,0\}.
\end{equation}

We write $q_{\mathrm{vp}}\models\tau$ when the observation heuristic
corresponding to the current plan type is zero and the viewing ray to
$\mathbf p_\tau$ is not blocked by any currently known occupied voxel.

Starting from $\mathbf a(\mathbf h)$, which is certified-reachable from
$\mathbf v_{\mathrm{exec}}$ by construction, the candidate search performs a
26-connected heuristic best-first expansion over $G^{\mathrm{opt}}$, using
the accumulated path cost and the corresponding observation heuristic to
guide candidate enumeration. The search additionally records whether the
path from $\mathbf a(\mathbf h)$ has remained entirely in
$G^{\mathrm{cert}}$; once a path enters uncertified space, it and its
descendants are labeled contaminated. A pose satisfying
$q_{\mathrm{vp}}\models\tau$ is placed in $Q_{\mathrm{cert}}(\tau)$ or
$Q_{\mathrm{cont}}(\tau)$ according to this path label (Fig.~\ref{fig:clear_vp}). Since 
$\mathbf a(\mathbf h)$ is certified-reachable, every pose in
$Q_{\mathrm{cert}}(\tau)$ is also certified-reachable from
$\mathbf v_{\mathrm{exec}}$.

\subsection{Certified Candidate Selection via Multi-Source Connect}
\label{sec:connect}

The target-centric search of Sec.~\ref{sec:target-centric}
expands states and evaluates candidate poses based on a local search cost originating from $\mathbf{a}(\mathbf{h})$.
Consequently, the order in which candidates are generated does not reflect
the true certified connection cost from the robot's actual execution start $\mathbf{v}_{\mathrm{exec}}$.
To find the most efficient executable path to any valid observation pose,
we treat the certified viewpoint candidate set
\begin{equation}
    Q_{\mathrm{cert}}(\tau)
    =
    \{q_{\mathrm{vp},1},\ldots,q_{\mathrm{vp},K}\},
\end{equation}
where
$q_{\mathrm{vp},i}=(\mathbf{v}_{\mathrm{vp},i},\psi_{\mathrm{vp},i})$, as a
goal set on the certified graph.
Instead of running one connect search per candidate, we run a reverse
multi-source A* on $G^{\mathrm{cert}}$. All certified viewpoint voxels are
inserted as zero-cost sources, and the execution start voxel $\mathbf{v}_{\mathrm{exec}}$ is
used as the single goal. Since $G^{\mathrm{cert}}$ is undirected with symmetric
edge costs, the first time $\mathbf{v}_{\mathrm{exec}}$ is popped yields the path with the minimum
certified cost from $\mathbf{v}_{\mathrm{exec}}$ to any candidate in
$Q_{\mathrm{cert}}(\tau)$. 

In our implementation, the reverse search uses the same 26-connected
grid-distance heuristic as in the optimistic guidance, evaluated from
$\mathbf{v}$ to the execution start voxel $\mathbf{v}_{\mathrm{exec}}$.
By construction, every candidate pose in $Q_{\mathrm{cert}}(\tau)$
is certified-reachable from the execution start $\mathbf{v}_{\mathrm{exec}}$.
So, the reverse multi-source A* search is theoretically guaranteed to succeed and return a valid connection path.

\subsection{Recursive Subgoals and Exclusion Memory}
\label{sec:subgoal}
\subsubsection{Recursive viewpoint subgoals}
Section \ref{sec:connect} describes how the planner finds a path with
$Q_{\mathrm{cert}}$. We now consider the case where no certified viewpoint can
be found but contaminated viewpoints exist, that is
$Q_{\mathrm{cert}}=\emptyset$ and $Q_{\mathrm{cont}}\neq\emptyset$.
In this case, the planner cannot directly find a path in $G^{\mathrm{cert}}$,
and it starts recursive subgoal planning phase.

Each recursive search level is represented by a \emph{recursive context} $S$,
which stores the active hitpoint, the target being resolved, together with the
exclusion memory to be introduced below. The root context corresponds to the
original planning task.

To observe the target $\tau$ for hitpoint $\mathbf{h}$
in context $S$, the planner selects the lowest-cost contaminated viewpoint
$q_{\mathrm{vp}}\in Q_{\mathrm{cont}}$ 
, and sets it as a recursive subgoal.
The subgoal is resolved by the same optimistic-guidance and hitpoint-clearing procedure: the
guidance planner searches optimistically toward $\mathbf{v}_{\mathrm{vp}}$, identifies
the hitpoint on that subgoal guidance path, and attempts to clear it.
If all required hitpoints are cleared, the contaminated viewpoint becomes
certified-reachable and can then be used to observe the original target.

\begin{figure}[t]
    \centering
    \begin{subfigure}[t]{0.49\columnwidth}
        \centering
        \includegraphics[trim={5.6cm 6.0cm 3.525cm 3.04cm},clip,width=\linewidth]{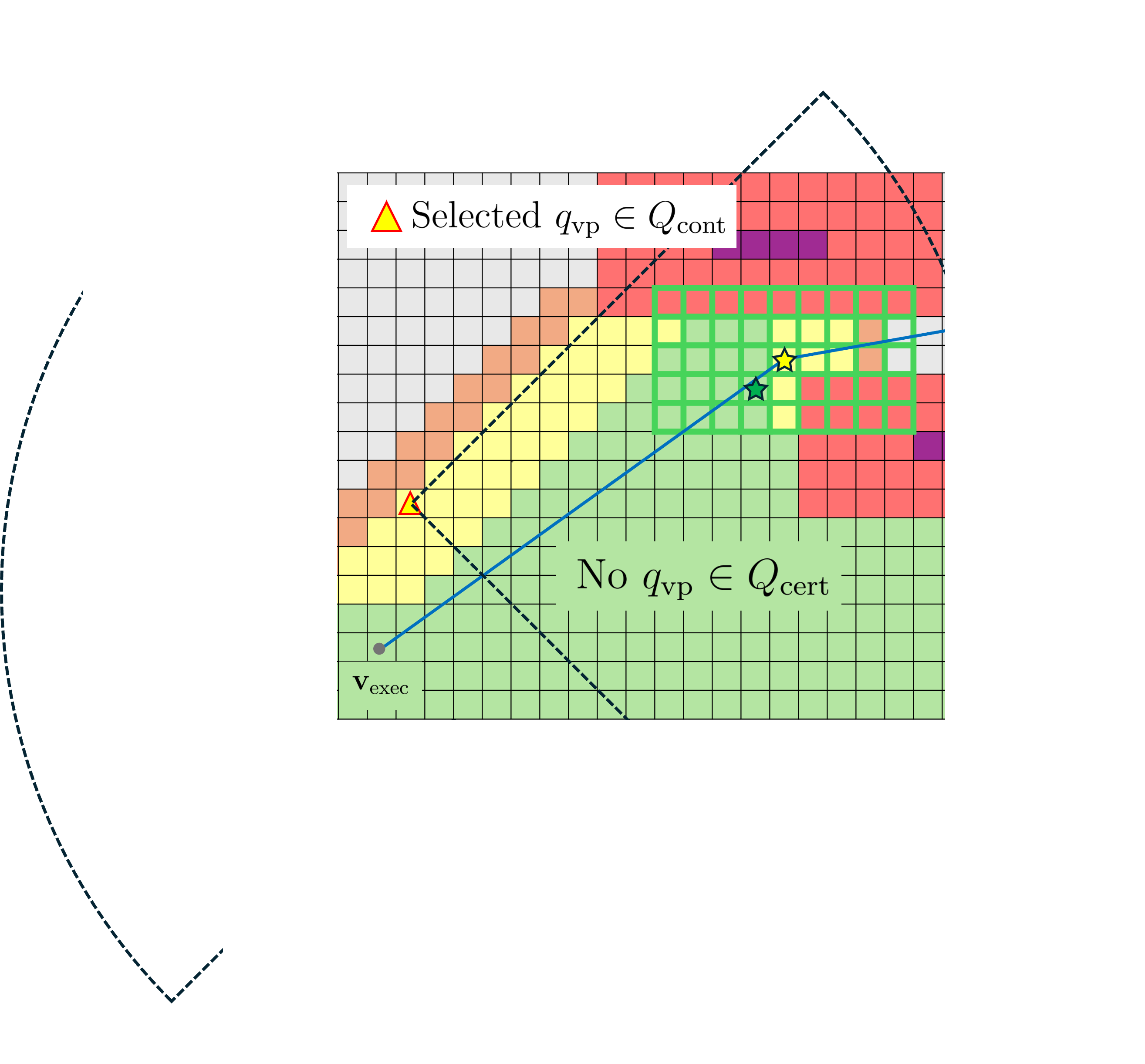}
        \caption{}
        \label{fig:subgoal_1}
    \end{subfigure}
    \begin{subfigure}[t]{0.49\columnwidth}
        \centering
        \includegraphics[trim={5.6cm 6.0cm 3.525cm 3.04cm},clip,width=\linewidth]{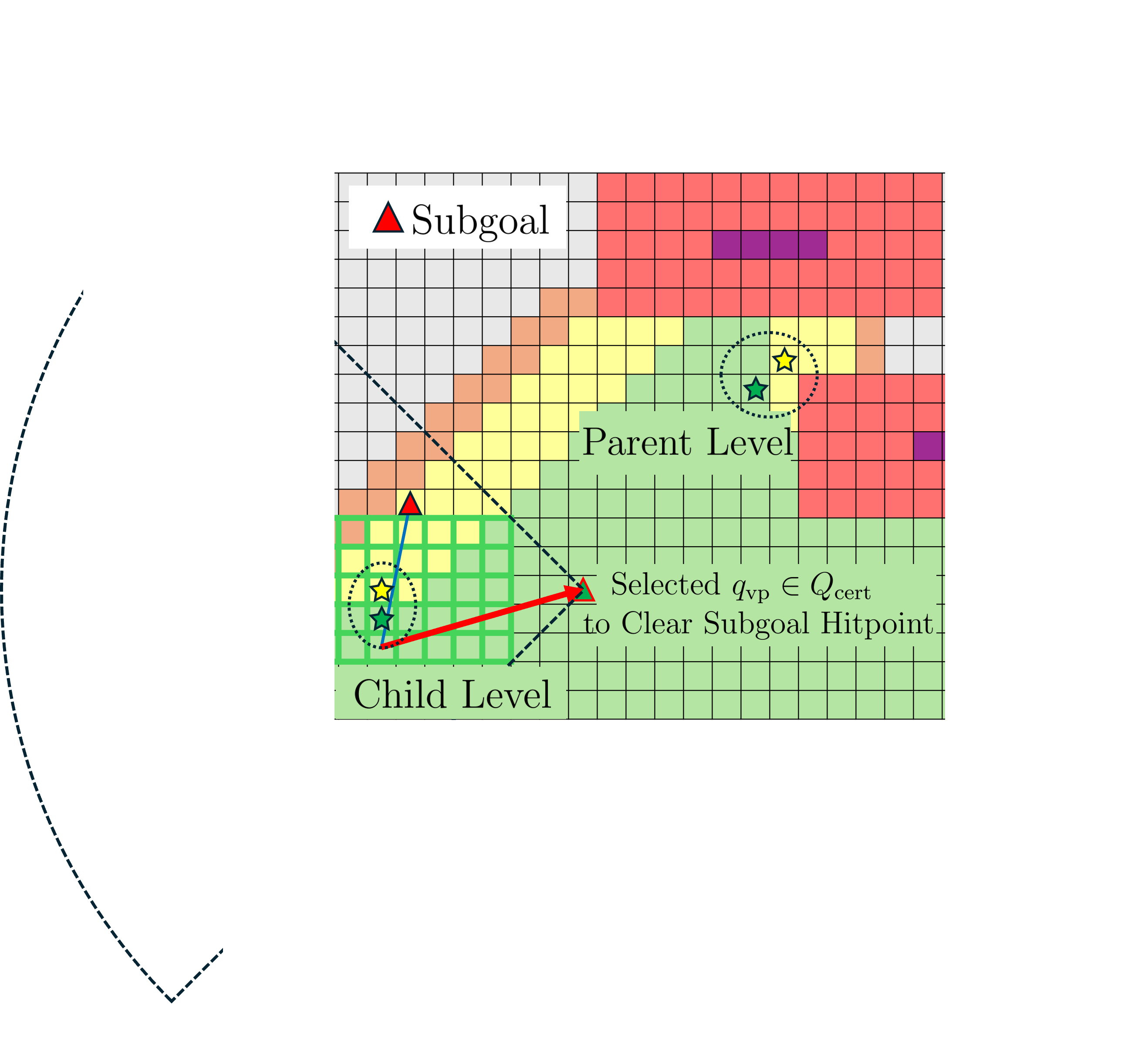}
        \caption{}
        \label{fig:subgoal_3}
    \end{subfigure}

    \caption{An example of subgoal generation.
    ~(\ref{fig:subgoal_1}) There is no certified viewpoint that can clear the current hitpoint,
    so a contaminated viewpoint is selected and targeted as a subgoal.
    ~(\ref{fig:subgoal_3}) To help reach the subgoal, the planner searches for a subgoal guidance path
    that has its own hitpoints to be cleared.
     It then searches for viewpoints that help with
    clearing them, similar to the primary planning pipeline.}
    \label{fig:subgoal_generation}
    \vspace{-0.3cm}
\end{figure}
\subsubsection{Context-local exclusion memory (Fig.~\ref{fig:recursive_subgoal_generation})}
Each context $S$ carries three local exclusion sets which are used in the search.
First, $H_{\mathrm{anc}}(S)$ is the ancestor-hitpoint set. It contains the hitpoint voxels
already encountered along the recursive chain from the root context to $S$.
These hitpoints are regarded as blocked in descendant guidance searches so that the recursion
cannot cycle back to an unresolved hitpoint already on its ancestry.

Second, $Q_{\mathrm{rej}}(S)$ stores the voxels of the rejected viewpoints generated in
context $S$. The voxel $\mathbf{v}_{\mathrm{vp}}$ of a viewpoint $q_{\mathrm{vp}}$
is rejected when its subgoal planning fails under the current context: no
guidance path to $q_{\mathrm{vp}}$ remains after every hitpoint encountered
on the successive subgoal guidance paths has been exhausted, as defined
below.
$Q_{\mathrm{rej}}(S)$ acts as the failed-set for replanning in context $S$:
once a viewpoint is rejected, the planner will not select the viewpoints at the same
voxel again in this context.

Third, $H_{\mathrm{exh}}(S)$ stores hitpoints that are exhausted in
context $S$. A hitpoint $\mathbf{h}$ is exhausted only after the following
two-stage test fails. First, the planner attempts to clear the volume around
$\mathbf{h}$. If there is no viewpoint in $Q_{\mathrm{cert}}$ for this clearing target,
and every viewpoint in $Q_{\mathrm{cont}}$ is rejected in the child context,
written $S_+$, and added to $Q_{\mathrm{rej}}(S_+)$,
the planner samples an unknown voxel $\mathbf{u}\in\Omega(\mathbf{h})$ and falls back to the
unknown-observation planning. If this unknown-observation planning also yields no viewpoint in
$Q_{\mathrm{cert}}$, and every viewpoint in $Q_{\mathrm{cont}}$ is rejected as subgoal
in the child context and added to $Q_{\mathrm{rej}}(S_+)$,
the hitpoint is inserted into $H_{\mathrm{exh}}(S)$.
Sampling a single unknown voxel suffices for this test: clearing
$\mathbf{h}$ requires resolving every voxel in $\Omega(\mathbf{h})$, so one
unknown voxel that cannot be observed by any admissible candidate under the
current context already certifies that $\mathbf{h}$ cannot be cleared in this
context.
Exhausted hitpoints in $H_{\mathrm{exh}}(S)$ are the hitpoints that cannot be
cleared in the context, so future guidance searches under the context should
avoid them, which means they are regarded as blocked like $H_{\mathrm{anc}}(S)$.

Using the blocked-set notation from the optimistic guidance~\ref{sec:global-guidance}
, the context-local blocked set is
\begin{equation}
    B(S)
    =
    H_{\mathrm{anc}}(S)
    \cup H_{\mathrm{exh}}(S).
\end{equation}
A voxel $\mathbf{v}$ is treated as blocked in context $S$ if
$\mathbf{v}\in O^+ \cup B(S)$.
\subsubsection{Reuse and invalidation of exclusions}
In a child context $S_+$, the exclusion set is no smaller and
$H_{\mathrm{anc}}(S_+)=H_{\mathrm{anc}}(S)\cup\{\mathbf{h}\}$ is larger, so any hitpoint that is
exhausted under $B(S)$ remains exhausted under $B(S_+)$, and any
viewpoint voxel whose subgoal fails under $B(S)$ keeps failing under
$B(S_+)$ (Lemma~\ref{lem:nogood-monotone}). 
Therefore, a child context can safely
inherit $H_{\mathrm{exh}}(S)$ and $Q_{\mathrm{rej}}(S)$. Conversely, the
memory accumulated by a failed child is a family of failure certificates
whose derivations block the parent hitpoint $\mathbf{h}$, so it cannot prune
anything in $S$ while $\mathbf{h}$ is still open; it may, however, be reused
directly by sibling children created for the same hitpoint. Once
$\mathbf{h}$ is exhausted and inserted into $H_{\mathrm{exh}}(S)$, the parent
exclusion set covers the dependencies of every child certificate, and the
entire $H_{\mathrm{exh}}(S_+)$ and $Q_{\mathrm{rej}}(S_+)$ are propagated
back to the parent context (Corollary~\ref{cor:nogood-reuse}).

Before expanding the deepest active context, the planner performs a top-down
necessity check along the current context chain. Each non-root context is
justified by the parent hitpoint that created its subgoal. If a map
update has already cleared the parent-context hitpoint, the corresponding child
context and all of its descendants are no longer needed.
The planner prunes that subtree and resumes planning from the parent context.


\begin{figure}[t]
    \centering



    \begin{subfigure}[t]{0.48\columnwidth}
        \centering
        \includegraphics[trim={3.5cm 8.0cm 0.0cm 8.0cm},clip,width=\linewidth]{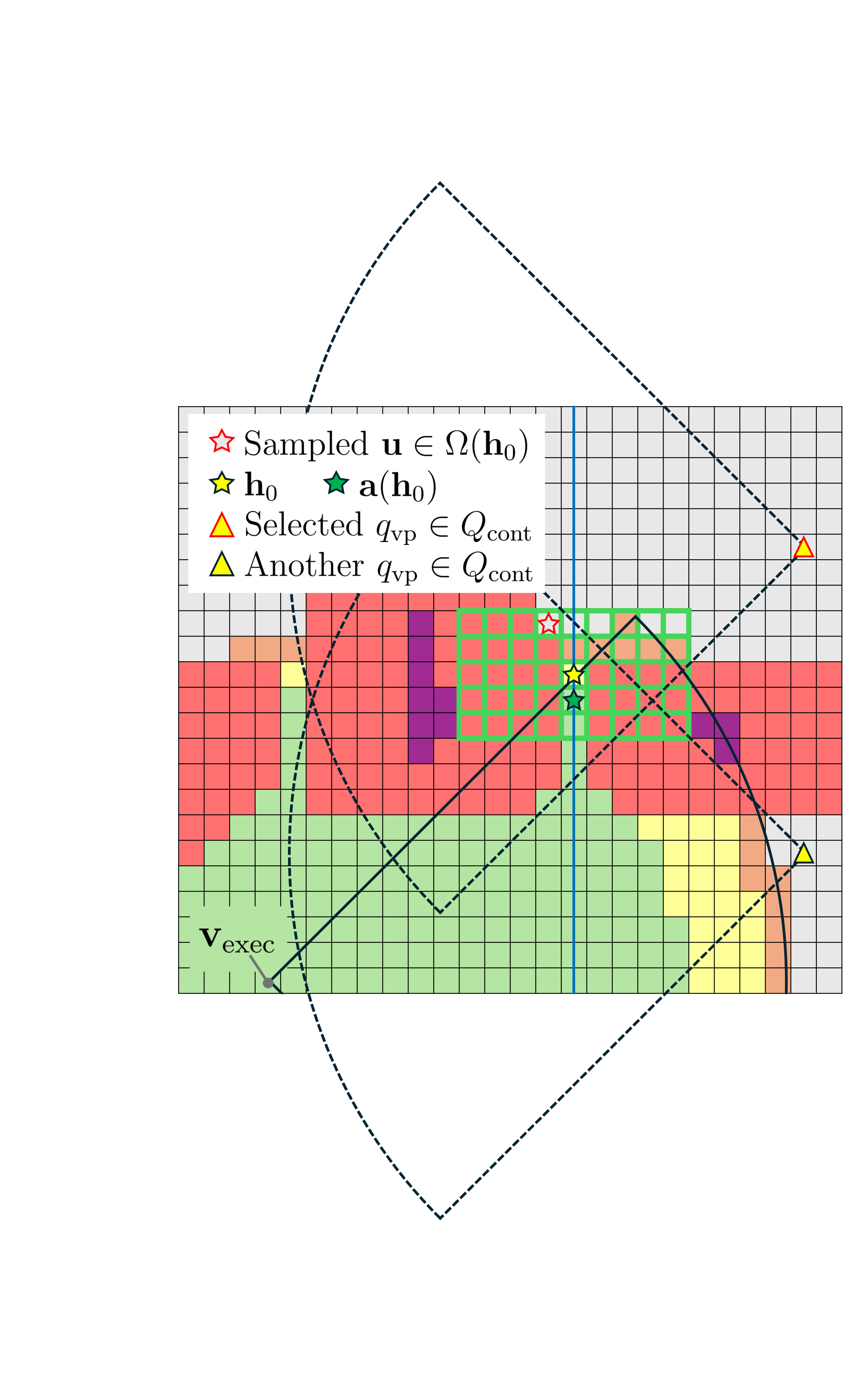}
        \caption{}
        \label{fig:recursive_1}
    \end{subfigure}
    \hfill
    \begin{subfigure}[t]{0.48\columnwidth}
        \centering
        \includegraphics[trim={3.5cm 8.0cm 0.0cm 8.0cm},clip,width=\linewidth]{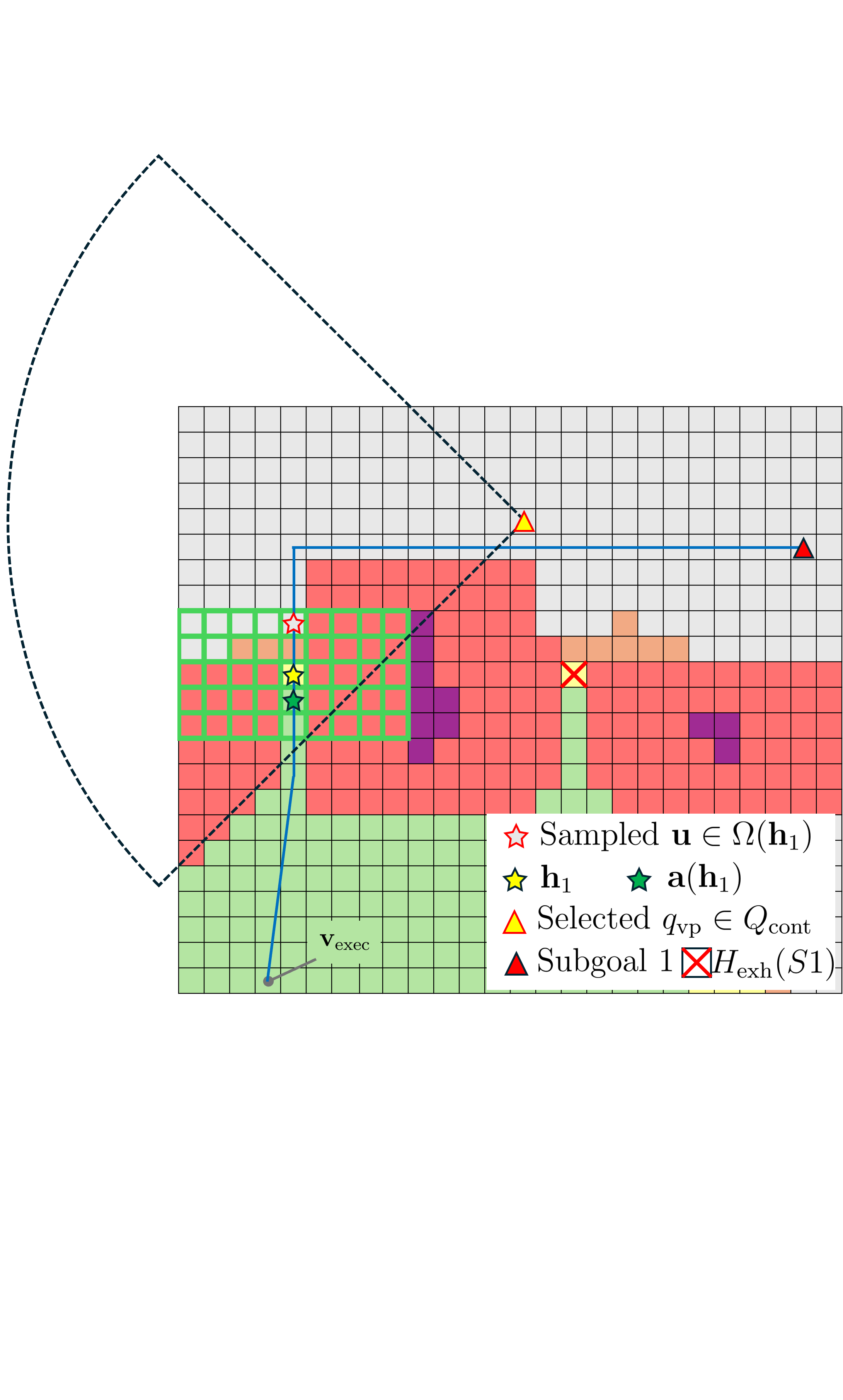}
        \caption{}
        \label{fig:recursive_2}
    \end{subfigure}
    
    \vspace{0.2cm}
    
    \begin{subfigure}[t]{0.48\columnwidth}
        \centering
        \includegraphics[trim={3.5cm 8.0cm 0.0cm 8.0cm},clip,width=\linewidth]{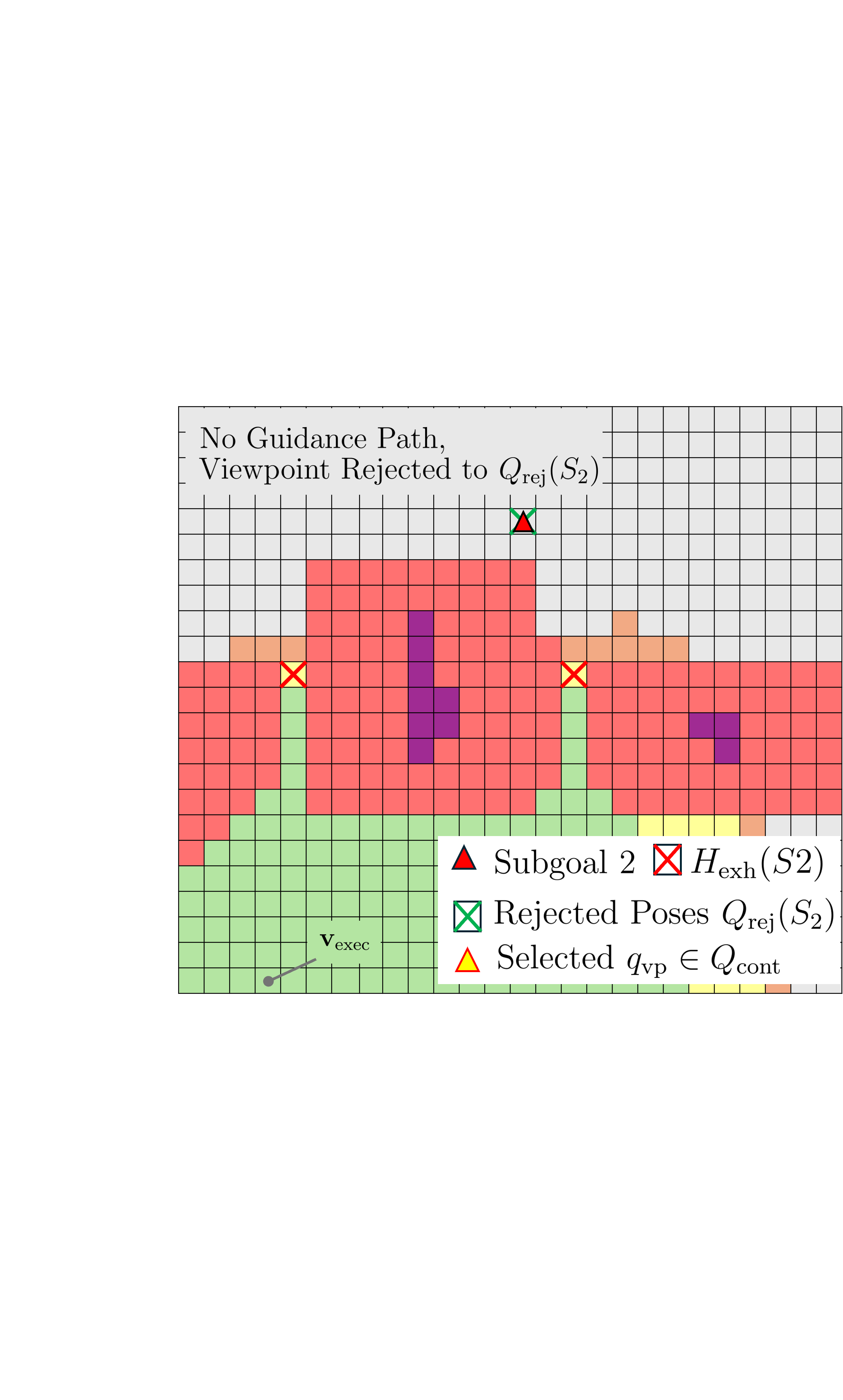}
        \caption{}
        \label{fig:recursive_3}
    \end{subfigure}
    \hfill
    \begin{subfigure}[t]{0.48\columnwidth}
        \centering
        \includegraphics[trim={3.5cm 8.0cm 0.0cm 8.0cm},clip,width=\linewidth]{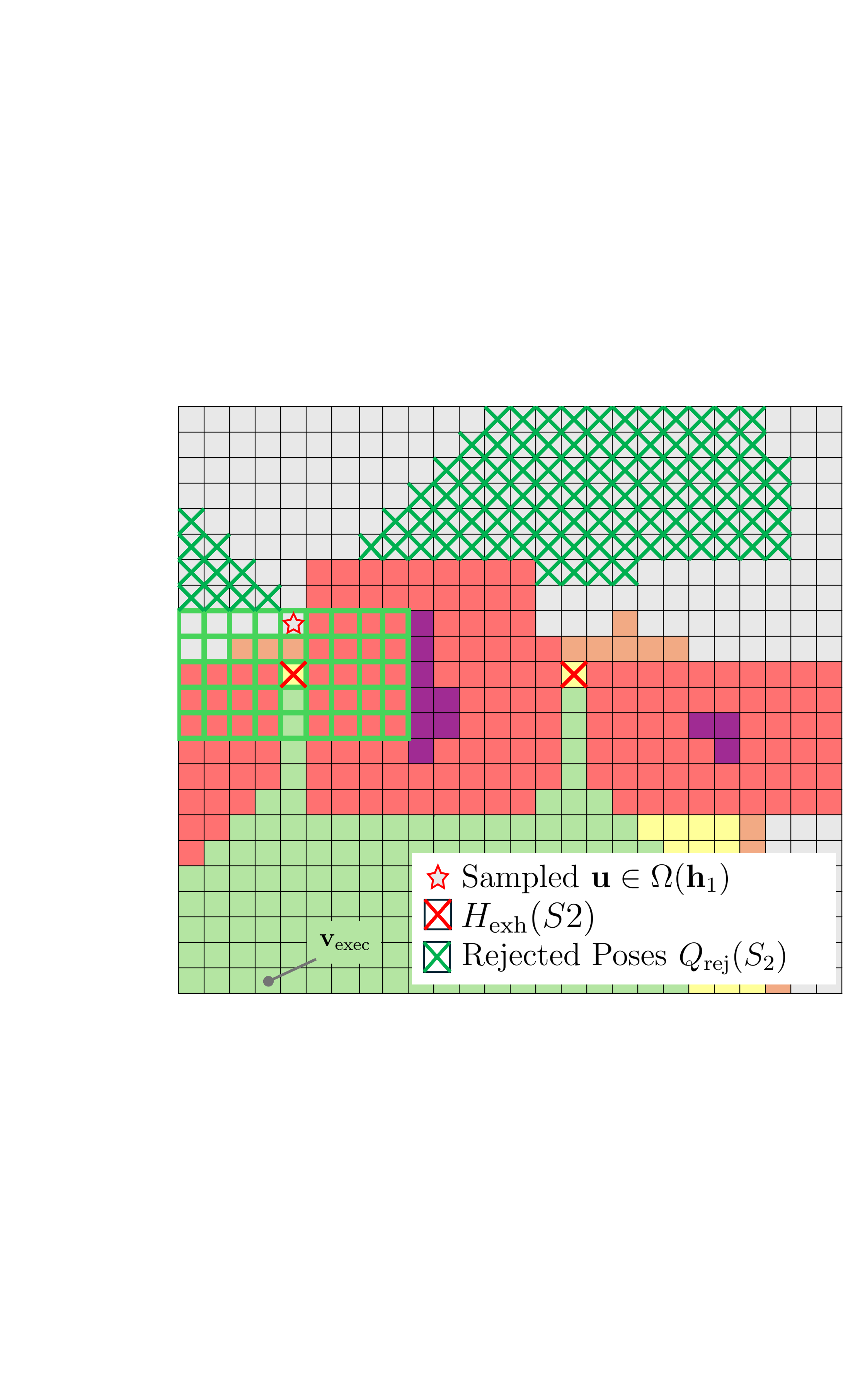}
        \caption{}
        \label{fig:recursive_4}
    \end{subfigure}

    \vspace{0.2cm}
    
    \begin{subfigure}[t]{0.48\columnwidth}
        \centering
        \includegraphics[trim={3.5cm 8.0cm 0.0cm 8.0cm},clip,width=\linewidth]{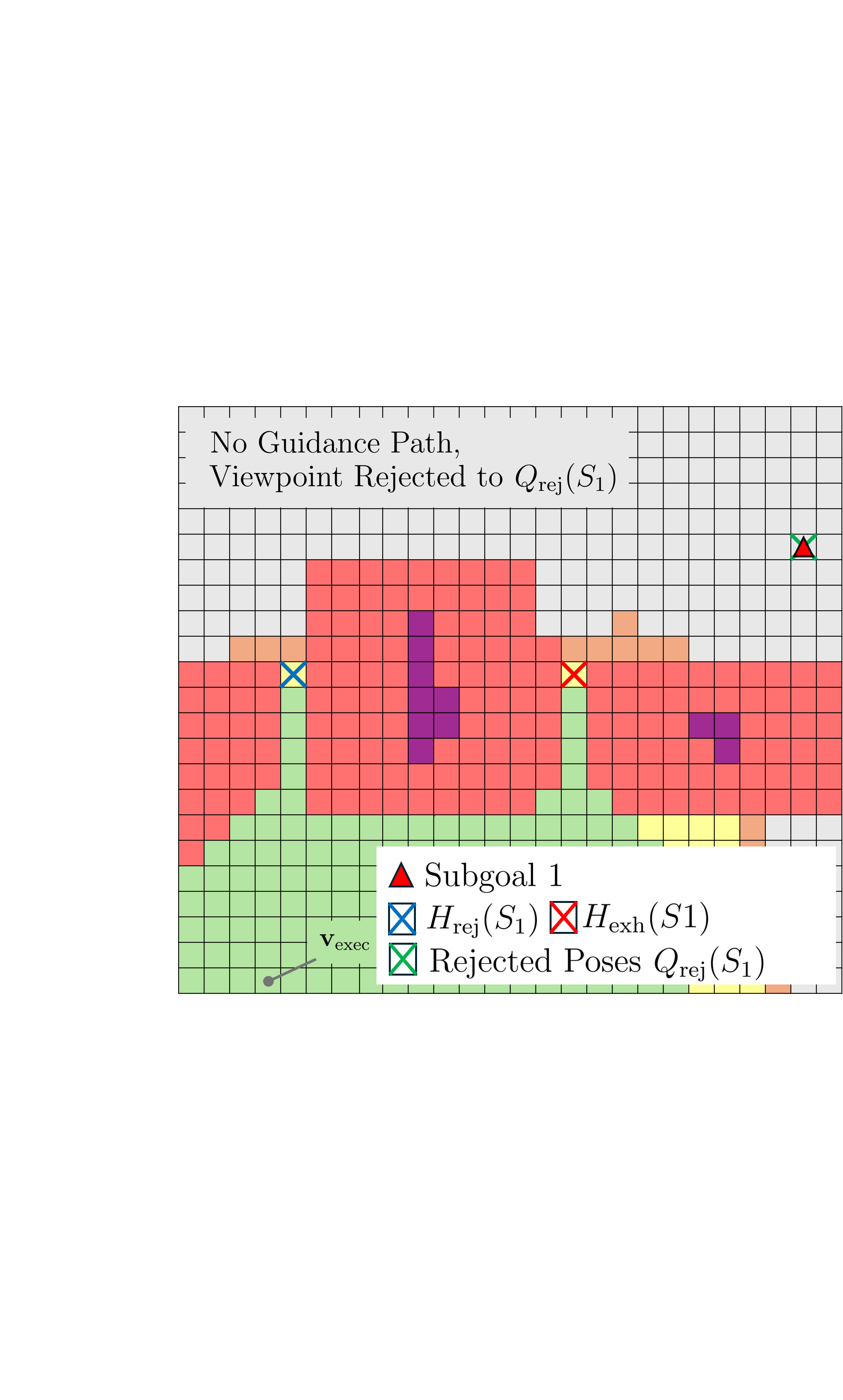}
        \caption{}
        \label{fig:recursive_5}
    \end{subfigure}
    \hfill
    \begin{subfigure}[t]{0.48\columnwidth}
        \centering
        \includegraphics[trim={3.5cm 8.0cm 0.0cm 8.0cm},clip,width=\linewidth]{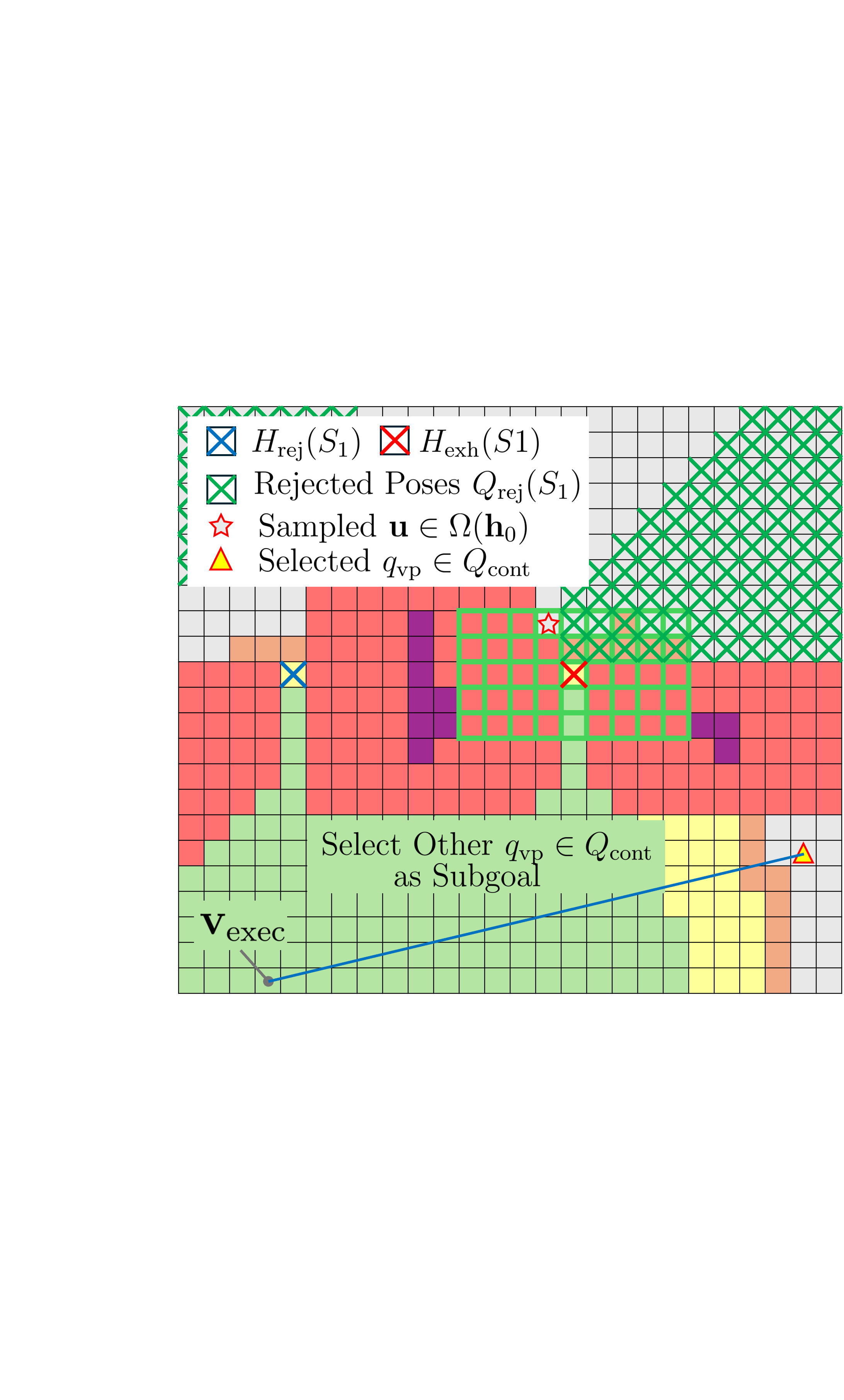}
        \caption{}
        \label{fig:recursive_6}
    \end{subfigure}

    \caption{An example of how the exclusion structure helps reject undesirable selection of viewpoints.
    Consider the case after a failed volume clearing attempt: The planner samples an unknown voxel in
    $\Omega(\mathbf{h}_0)$ and searches for viewpoints that can observe it.
    Without any certified candidate, there are multiple contaminated candidates that
    can possibly be the next subgoal, with the first level context $S_1$~(\ref{fig:recursive_1}).
    The planner selects the upper one, turns it into a subgoal (Subgoal 1),
    and generates a guidance path but with the $\mathbf{h}_0$ blocked by adding it to $H_{\mathrm{anc}}(S_1)$,
    and this guidance path has a new hitpoint $\mathbf{h}_1$.
    With no possible volume clearing viewpoint, the planner again samples an unknown voxel
    in $\Omega(\mathbf{h}_1)$ and searches for viewpoints that can observe it.
    Without any certified candidate, the planner now has a second level context $S_2$
    and selects a contaminated viewpoint as the next subgoal (Subgoal 2)~(\ref{fig:recursive_2}).
    The planner then attempts to generate a guidance path to Subgoal 2, similarly as before but with
    another parent hitpoint $\mathbf{h}_1$ added to $H_{\mathrm{anc}}(S_2)$. However, with the current $H_{\mathrm{anc}}(S_2)$ blocking,
    no guidance path can be found, and therefore Subgoal 2 as a candidate viewpoint is
    rejected and added to $Q_\mathrm{rej}(S_2)$~(\ref{fig:recursive_3}). Similarly, all other
    candidates are rejected, meaning that the sampled target in $\Omega(\mathbf{h}_1)$
    cannot be observed by any admissible candidate under the current context. Thus, the corresponding
    $\mathbf{h}_1$ is not clearable in this context and is added to $H_\mathrm{exh}(S_1)$~(\ref{fig:recursive_4}).
    Back to the parent context $S_1$, now with the block of $H_{\mathrm{anc}}(S_1)$ and $H_\mathrm{exh}(S_1)$,
    there is no guidance available to the upper viewpoint selected before~(\ref{fig:recursive_5}).
    In this way, the planner is enabled to deny infeasible viewpoint selections, and,
    as characterized by the completeness analysis in Sec.~\ref{sec:theory}, a feasible
    viewpoint within the generated candidate family is eventually selected
    if one exists~(\ref{fig:recursive_6}).}
    \label{fig:recursive_subgoal_generation}
    \vspace{-0.3cm}
\end{figure}

\subsection{Complete Recursive Search Procedure}
\label{sec:planning-loop}

The previous subsections define the three ingredients of the search planner: target-centric viewpoint candidates, certified connection to
executable viewpoints, and recursive subgoals for contaminated viewpoints with
context-local rejection memory. We now collect these pieces into the compact
planning loop shared by the search planner and the runtime planner (Alg. \ref{alg:runtime_pipeline}).
We use the following shorthand:
$\Guide$, $\Cand$, and $\Conn$ denote the optimistic guidance, view candidates generation, and multi-source connection subroutines
described in~\ref{sec:global-guidance}, \ref{sec:target-centric}, and \ref{sec:connect},
respectively. The operators $\Hit$ and
$\Target$ denote the ray-marching hitpoint extraction and observation-target
selection defined in Sections~\ref{sec:global-guidance} and
\ref{sec:target-centric}. 
In the case that no voxel on
$\Pi_g$ violates $G^{\mathrm{cert}}$, $\Hit(\Pi_g)$ returns none, and the target is certified-reachable.

Line \ref{ln:return} is the search planner return: only by finding and connecting to a certified
sensing pose, the search planner returns a certified path.
The shorthand $\Traj(\Pi_1;q_1,\mathbf{h})$ denotes a certified sensing action:
the robot follows the certified path $\Pi_1$ to the sensing pose $q_1$ and
executes the current sensing target associated with the active hitpoint
$\mathbf{h}$.
Lines \ref{ln:prev-start}--\ref{ln:prev-end} are runtime-only preview logic
to be introduced in Sec.~\ref{sec:receding-preview}: when a certified next
viewpoint is available, the runtime may instead return the two-step
trajectory $\Traj(\Pi_1\circ\Pi_{12})$.

\begin{algorithm}[t]
\caption{Runtime Planner}
\label{alg:runtime_pipeline}
\Input{Context $S$; execution start $\mathbf{v}_{\mathrm{exec}}$; guidance target $\mathbf{v}_{g}$}
\Output{Certified trajectory/action, terminal path, or \texttt{FAIL}}
\BlankLine
\While{true}{
    $\Pi_g \gets \Guide(\mathbf{v}_{\mathrm{exec}},\mathbf{v}_{g};B(S))$\;
    \If{$\Pi_g=\texttt{FAIL}$}{
        \Return \texttt{FAIL}\;
    }

    $\mathbf{h}\gets\Hit(\Pi_g)$\;
    \If{$\mathbf{h}=\texttt{NONE}$}{
        \Return $\Traj(\Pi_g)$\;
    }

    $\tau\gets\Target(\mathbf{h};S)$\;

    \While{true}{
        $(Q_{\mathrm{cert}},Q_{\mathrm{cont}})
        \gets \Cand(\mathbf{h},\tau;Q_{\mathrm{rej}}(S))$\;

        \If{$Q_{\mathrm{cert}}\neq\emptyset$}{
            $(\Pi_1,q_1)\gets\Conn(\mathbf{v}_{\mathrm{exec}},Q_{\mathrm{cert}})$\;

            \If{$\tau$ is a volume-clearing target\label{ln:prev-start}}{
                $(\Pi_{12},q_2,\mathbf{g}_2)\gets
                \Prev(q_1,\mathbf{g}_1,\Pi_g)$\;
                \If{$\Pi_{12}\neq\texttt{NONE}$}{
                    \Return $\Traj(\Pi_1\circ\Pi_{12})$\;\label{ln:prev-end}
                }
            }

            \Return $\Traj(\Pi_1;q_1,\mathbf{h})$\;\label{ln:return}
        }

        \ForEach{$q=(\mathbf{v}_q,\psi_q)\in Q_{\mathrm{cont}}$\label{ln:recurse-start}}{
            $S_+\gets\Child(S,q,\mathbf{h},\tau)$\;
            $r\gets\RuntimePlanner(S_+,\mathbf{v}_{\mathrm{exec}},\mathbf{v}_q)$\;

            \If{$r\neq\texttt{FAIL}$}{
                \Return $r$\;
            }

            $Q_{\mathrm{rej}}(S)\gets Q_{\mathrm{rej}}(S)\cup\{\mathbf{v}_q\}$\;
            $\PropagateNoGood(S_+,S)$\;\label{ln:recurse-end}
        }

        \If{$\tau$ is a volume-clearing target}{
            $\mathbf{u}\gets\mathrm{sample}(\Omega(\mathbf{h}))$\;
            $\tau\gets\Obs(\mathbf{u})$\;
            \Continue\;
        }

        $H_{\mathrm{exh}}(S)\gets H_{\mathrm{exh}}(S)\cup\{\mathbf{h}\}$\;
        \Break\;
    }
}
\end{algorithm}

When no certified viewpoints are available, the planners enter the recursive call (line \ref{ln:recurse-start}-\ref{ln:recurse-end}).
The constructor $\Child(S,q,\mathbf{h},\tau)$ creates the child
context for subgoal $q$: it sets $H_{\mathrm{anc}}(S_+)=H_{\mathrm{anc}}(S)\cup\{\mathbf{h}\}$ and inherits
the rejection memory of $S$ (Sec.~\ref{sec:subgoal}).
The operation $\PropagateNoGood(S_+,S)$ propagates the context-dependent
exclusion sets from the child to the parent whenever their dependency hitpoints
are contained in the updated parent exclusion set
(Corollary~\ref{cor:nogood-reuse}).
The recursive calls use the context-local
exclusion memory defined in Sec.~\ref{sec:subgoal}; finite-runtime timeouts are
treated as incomplete planning ticks rather than rejection or exhaustion.

The transition from volume clearing to unknown observation in
Sec.~\ref{sec:target-centric} is triggered by two distinct events. At
planning time, Alg.~\ref{alg:runtime_pipeline} falls back to an
unknown-observation target when the clearing target has no viewpoint in
$Q_{\mathrm{cert}}$ and every viewpoint in $Q_{\mathrm{cont}}$ has been
rejected as a subgoal. At execution time, if an executed volume-clearing
action leaves the hitpoint uncleared, the context switches the observation
target of its active hitpoint to $\Obs(\mathbf{u})$ with
$\mathbf{u}\in\Omega(\mathbf{h})$, and volume clearing for $\mathbf{h}$ is
not retried until new observations change the map belief. This
\emph{execution-time fallback rule} is part of the analyzed search planner in
Appendix~\ref{app:completeness}: it guarantees that the planned sensing actions
induce a map update (Lemma~\ref{lem:finite-exec}). The preview runtime of
Sec.~\ref{sec:receding-preview} relaxes only the trigger timing while the
hitpoint continues to advance; it does not modify the recursive context state.

\subsection{Preview Viewpoints for Smooth Execution}
\label{sec:receding-preview}

The core search planner returns a certified sensing action
$\Traj(\Pi_1;q_1,\mathbf{h})$ as soon as it finds a reachable viewpoint
$q_1$. Treating every such viewpoint as a terminal stop can produce
stop--check--replan motion. This conservatism is useful for an
unknown-observation target, which may require a precise view of one voxel, but
is often unnecessary for volume clearing. We therefore apply a certified
preview layer only to volume-clearing actions so that the robot can continue
through the first viewpoint while preparing the next observation.

Let $q_1$ be the current \emph{anchor viewpoint} and let $\mathbf{g}_1$ denote
the guidance point it is expected to clear; initially,
$\mathbf{g}_1=\mathbf{h}$ (Fig. \ref{fig:certified_preview}). The preview routine looks farther along the current
guidance path $\Pi_g$, selects a lookahead point $\mathbf{v}_{\mathrm{adv}}$,
and runs a certified-only version of the target-centric viewpoint search. Each
candidate is evaluated by the farthest point on $\Pi_g$ that it can clear, and
the planner selects a connectable viewpoint $q_2$ that maximizes this forward
progress. We denote the result by
$(\Pi_{12},q_2,\mathbf{g}_2)=\Prev(q_1,\mathbf{g}_1,\Pi_g)$, where
$\Pi_{12}\subset G^{\mathrm{cert}}$ connects $q_1$ to $q_2$. If no
preview viewpoint or connection is found within the runtime budget, the planner
simply executes the original single-viewpoint action.

When preview succeeds, the certified path
$\Pi_{\mathrm{prev}}=\Pi_1\circ\Pi_{12}$ contains $q_1$ as an intermediate
observation anchor and terminates at the preview viewpoint $q_2$. The trajectory
optimizer of Sec.~\ref{sec:visibility-cell-anchor} preserves the observation
requirement associated with $q_1$ while allowing nonzero passage through its
local visibility region; $q_2$ supplies the terminal viewpoint for the current
trajectory.

The construction is receding. When the robot reaches the current anchor,
$q_2$ is promoted to the next anchor and $\mathbf{g}_2$ becomes its expected
clear point; the planner then searches for another preview beyond it. 


\begin{figure}[t]
    \centering
    \begin{subfigure}[t]{0.49\columnwidth}
        \centering
        \includegraphics[trim={3.53cm 8.08cm 5.05cm 6.55cm},clip,width=\linewidth]{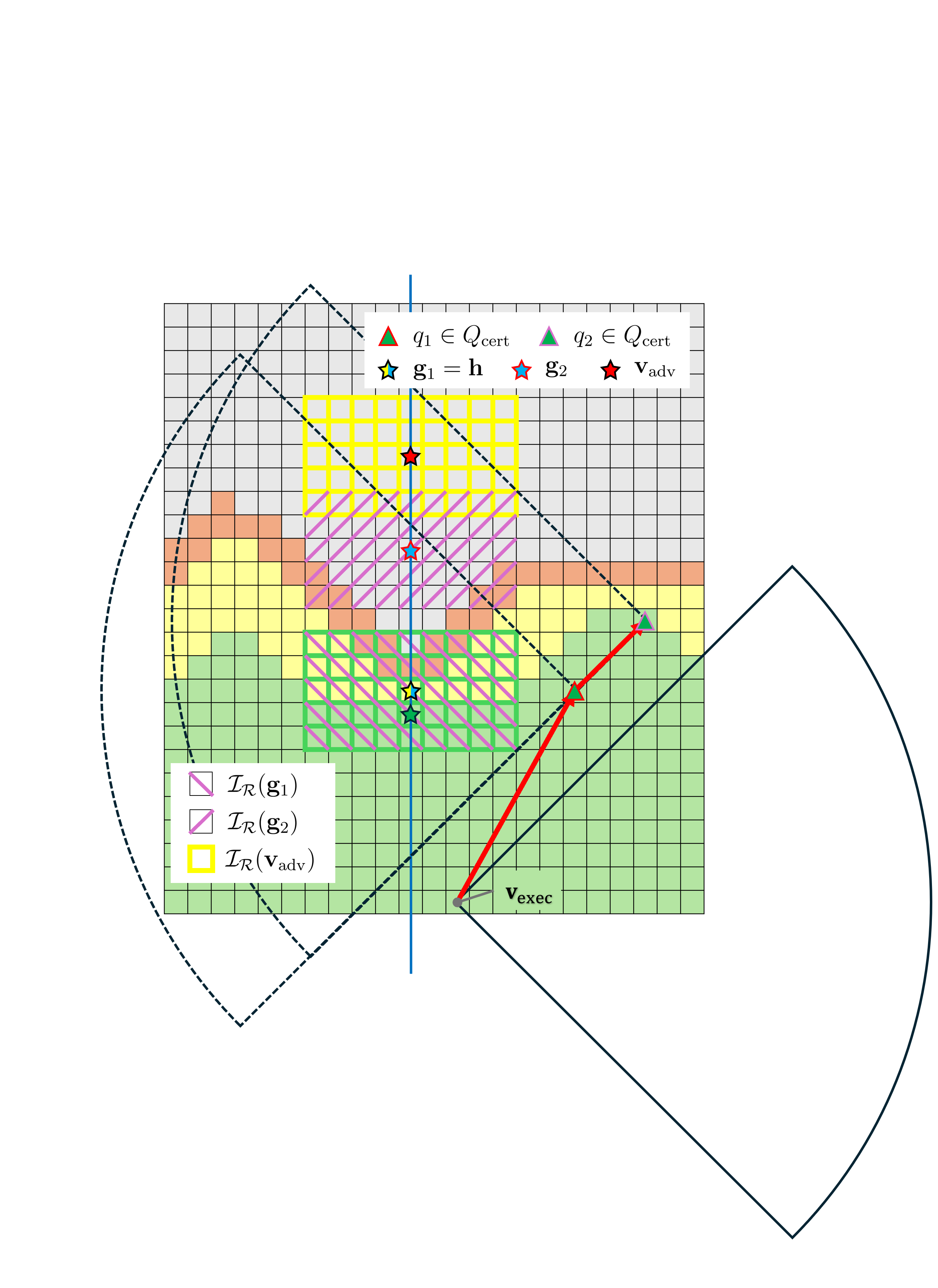}
        \caption{Certified preview.}
        \label{fig:preview_1}
    \end{subfigure}
    \hfill
    \begin{subfigure}[t]{0.49\columnwidth}
        \centering
        \includegraphics[trim={3.53cm 8.08cm 5.05cm 6.55cm},clip,width=\linewidth]{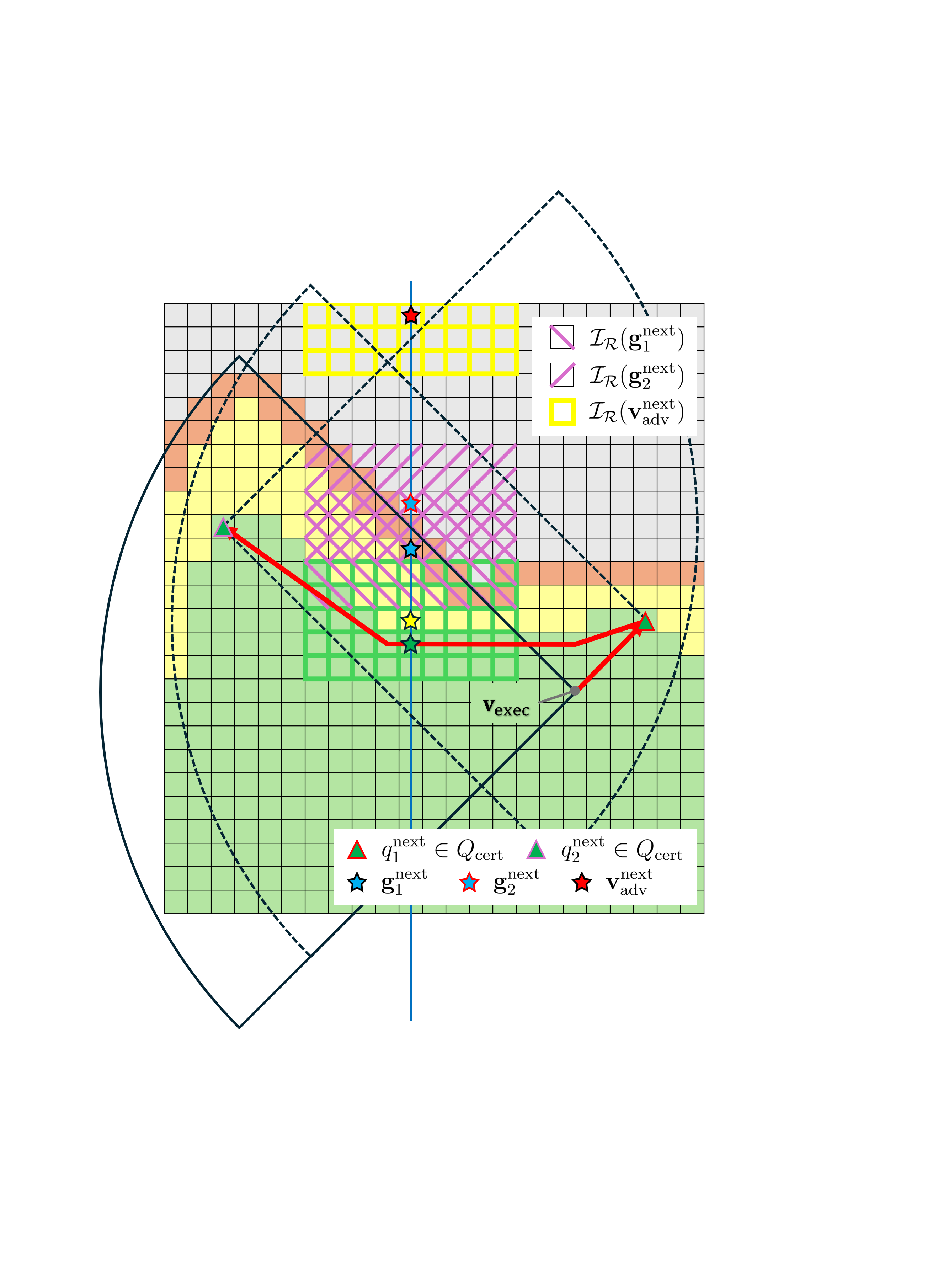}
        \caption{Receding promotion.}
        \label{fig:preview_2}
    \end{subfigure}

    \caption{Certified preview routine for smooth execution.
    From the current anchor $q_1$ with expected clear point
    $\mathbf{g}_1$, the preview routine searches for a farther
    certified viewpoint that tries to clear $\mathbf{v}_\mathrm{adv}$
    , and end up with $q_2$ that is expected to clear $\mathbf{g}_2$
    as the farthest point~(\ref{fig:preview_1}).
    After passing the anchor $q_1$, $q_2$ is promoted to new anchor
    $q_1^{\mathrm{next}}$, and $\mathbf{g}_2$ becomes
    $\mathbf{g}_1^{\mathrm{next}}$. Since the hitpoint is pushed forward,
    the planner then target on the next $\mathbf{v}_\mathrm{adv}^\mathrm{next}$ pushed from
    $\mathbf{g}_1^{\mathrm{next}}$ instead of sticking to the
    updated $\mathbf{h}$ with lower progress~(\ref{fig:preview_2}).}
    \label{fig:certified_preview}
    \vspace{-0.3cm}
\end{figure}

\section{Trajectory Optimization with Observation Constraints}
\label{sec:traj_opt}

The search planner produces a discrete certified route composed of grid-based path segments.
Although this path is sufficient for topological reasoning about safety and visibility,
it is not directly executable by a quadrotor since the path is
piecewise-linear, nonsmooth, and lacks timing and dynamic feasibility.
We therefore adopt a trajectory optimization backend to convert the discrete path to a smooth,
time-parameterized trajectory within safe flight corridors.

\subsection{GCOPTER Backbone}
We adopt GCOPTER~\cite{WANG2022GCOPTER} as the basic framework for trajectory optimization.
Given a discrete certified voxel path
\begin{equation}
    \Pi = (\mathbf{v}_0,\mathbf{v}_1,\ldots,\mathbf{v}_N),
\end{equation}
GCOPTER first constructs a polyhedral safe flight corridor (SFC) using 
Fast Iterative Region Inflation (FIRI)~\cite{FIRI} 
as a convex abstraction of the local free
configuration space. The SFC is represented as sequentially connected convex polytopes
\begin{equation}
    \mathcal{C}
    =
    (\mathcal{P}_1,\mathcal{P}_2,\ldots,\mathcal{P}_{M}),
    \quad
    \mathcal{P}_m
    =
    \{\mathbf{x}\in\mathbb{R}^3
    \mid
    \mathbf{A}_m\mathbf{x}\leq \mathbf{b}_m\}.
\end{equation}
In our implementation, the SFC is constructed against $G^{\mathrm{cert}}$
around the searched path. The realized polytopes may still
graze non-certified cells near voxel boundaries; Sec.~\ref{sec:theory}
delimits what this \emph{corridor-realization slack} means for the
certification guarantee, and the benchmark of Sec.~\ref{sec:experiments}
measures it.

Following the safety corridor extraction, the GCOPTER pipeline assigns
trajectory pieces to the safe flight corridor cells and optimizes a
spatial-temporal parameterization of the trajectory using MINCO basis. 
Let
$\mathbf z_i=(\mathbf x_i,\psi_i)$ denote a position--yaw control knot and let
$T_i$ be the corresponding segment duration. For a route terminating at a
goal or sensing viewpoint, the initial and terminal boundary conditions are
\begin{equation}
\begin{aligned}
    \mathbf x^{[2]}(0)
    &=
    \mathbf x^{[2]}_{\mathrm{exec}},
    &
    \mathbf x^{[2]}(T_\Sigma)
    &=
    \mathbf x^{[2]}_f,\\
    \psi_0
    &=
    \psi_{\mathrm{exec}},
    &
    \psi_{M_t}
    &=
    \psi_f,
\end{aligned}
\label{eq:terminal_boundary}
\end{equation}
where
$\mathbf x^{[2]}=(\mathbf x,\dot{\mathbf x},\ddot{\mathbf x})$,
$T_\Sigma=\sum_i T_i$, and the terminal velocity and acceleration are set to
zero. For a sensing viewpoint
$q_{\mathrm{vp}}=(\mathbf v_{\mathrm{vp}},\psi_{\mathrm{vp}})$, we use
\[
    \mathbf x_f
    =
    \operatorname{center}(\mathbf v_{\mathrm{vp}}),
    \qquad
    \psi_f
    =
    \psi_{\mathrm{vp}}.
\]

We use the standard trajectory optimization objective $J_{\mathrm{traj}}$, which balances trajectory
smoothness, control effort, and duration. Additionally, we include a duration-aware yaw
regularization. With
$\Delta\psi_i=\psi_{i+1}-\psi_i$ for unwrapped yaw knots, we use
\begin{equation}
\begin{aligned}
    J(\mathbf z,T)
    &=
    J_{\mathrm{traj}}(\mathbf z,T)
    +
    J_{\mathrm{yaw}}(\mathbf z,T),\\
    J_{\mathrm{yaw}}
    &=
    w_{\dot\psi,\mathrm{reg}}
    \sum_i
    \frac{\Delta\psi_i^2}{T_i}\\
    &\quad+
    w_{\dot\psi,\mathrm{lim}}
    \sum_i
    T_i
    L_\mu
    \left(
        \frac{\Delta\psi_i^2}{T_i^2}
        -
        \dot\psi_{\max}^2
    \right),
\end{aligned}
\label{eq:terminal_viewpoint_obj}
\end{equation}
where $L_\mu$ is a differentiable approximation of the positive-part
function. The first term smooths the yaw profile, while the second discourages
yaw-rate violations. Because $T_i$ is optimized jointly with the trajectory,
the optimizer may allocate more time to segments requiring larger rotations.

\subsection{Visibility-Polytope Anchor Optimization}
\label{sec:visibility-cell-anchor}

As described in Sec.~\ref{sec:receding-preview}, the preview planner may produce
a composite path that contains an intermediate observation anchor $q_1$ and
continues to a preview viewpoint $q_2$. Compared with the simple start-to-terminal-viewpoint case,
the optimizer must satisfy not only the terminal boundary condition
$(\mathbf{x}_f,\psi_f)$ associated with $q_2$, but also an intermediate observation requirement
associated with the anchor viewpoint $q_1$.
Forcing the trajectory through the
exact position of $q_1$ is unnecessarily restrictive because $q_1$ is only
one witness pose satisfying the observation requirement. Conversely, a soft
point-attraction cost may move the optimized knot to a pose from which the
target is no longer observable. We therefore replace the point anchor by a
local, sample-validated visibility polytope.
The optimizer is allowed to choose where to pass through this polytope, while the
polytope is validated to preserve the observation requirement.

\subsubsection{Visibility-polytope extraction}

Let $\tau$ be the observation target associated with
$q_1=(\mathbf v_1,\psi_1)$. Within a local voxel region, we form the
6-connected component containing $\mathbf v_1$ of certified voxels whose
target-facing poses satisfy the observation relation:
\begin{equation}
    V_{\mathrm{vis}}(\tau)
    =
    \operatorname{Comp}^{6}_{\mathbf v_1}
    \left(
        \left\{
            \mathbf v\in G^{\mathrm{cert}}
            \mid
            q(\mathbf v;\tau)\models\tau
        \right\}
    \right).
    \label{eq:visibility_voxel_cluster}
\end{equation}
We erode this component by a fixed number of voxel layers to provide a margin
against discretization and boundary-visibility errors. An interior voxel with
maximum graph distance from the eroded boundary is selected as the seed,
and the neighboring excluded voxels are converted to obstacle samples. 
We use FIRI to inflate a convex polytope from the seed: 
\begin{equation}
    \mathcal P_{\mathrm{vis}}
    =
    \left\{
        \mathbf x\in\mathbb R^3
        \mid
        \mathbf A_{\mathrm{vis}}\mathbf x
        \leq
        \mathbf b_{\mathrm{vis}}
    \right\}.
    \label{eq:visibility_polytope}
\end{equation}

Finally, we validate the resulting polytope before using it in trajectory
optimization.
The polytope must exceed a prescribed Chebyshev-radius threshold,
and the seed, Chebyshev center, vertices, and vertex--seed midpoints must
satisfy the same observation test.  This validation step bridges the gap between the voxel-center visibility tests used during region growing and the continuous positions that may be selected by
the optimizer inside $\mathcal{P}_{\mathrm{vis}}$. 

\subsubsection{Trajectory optimization through the visibility polytope}

For the preview trajectory, we construct motion-corridor sequences
$\mathcal C_1$ and $\mathcal C_{12}$ around $\Pi_1$ and $\Pi_{12}$,
respectively, and insert the visibility polytope between them:
\begin{equation}
    \mathcal C_{\mathrm{vis}}
    =
    \left(
        \mathcal C_1,
        \mathcal P_{\mathrm{vis}},
        \mathcal C_{12}
    \right).
    \label{eq:visibility_corridor_seq}
\end{equation}
A connector polytope is added when necessary to maintain overlap between
adjacent corridor elements. The visibility polytope is kept as a designated
corridor element and assigned at least two polynomial pieces, which creates an
internal observation knot $k_{\mathrm{vis}}$. This knot is assigned to
$\mathcal P_{\mathrm{vis}}$, with deviation penalized by the same corridor
cost, so the optimizer gains local geometric freedom without being tied to
the original witness position.

Let
$\mathbf c_{\mathrm{vis}}$
be the FIRI seed and
$\mathbf p_\tau$
be the representative target point. The target-facing yaw reference is
\begin{equation}
    \psi_{\mathrm{vis}}
    =
    \operatorname{atan2}
    \left(
        p_{\tau,y}-c_{\mathrm{vis},y},
        p_{\tau,x}-c_{\mathrm{vis},x}
    \right)
    -
    \psi_{\mathrm{off}},
\end{equation}
and the observation-knot alignment cost is
\begin{equation}
    J_{\psi,\mathrm{vis}}
    =
    w_{\psi,\mathrm{vis}}
    \left(
        \psi_{k_{\mathrm{vis}}}
        -
        \psi_{\mathrm{vis}}
    \right)^2.
    \label{eq:visibility_yaw_cost}
\end{equation}
The preview objective is therefore
\begin{equation}
    J_{\mathrm{vis}}
    =
    J_{\mathrm{gco}}
    +
    J_{\mathrm{yaw}}
    +
    J_{\psi,\mathrm{vis}}.
    \label{eq:visibility_polytope_obj}
\end{equation}
The terminal state is induced by $q_2$, while the internal knot provides the
intermediate observation opportunity associated with $q_1$.

If the visibility polytope cannot be constructed, cannot be connected to the
adjacent motion corridors, or leads to an unsuccessful optimization, we use a
seam-stitched fallback. The entry and preview corridors are concatenated, and
the searched anchor is encouraged using high-weight position and yaw seam
penalties.

\section{Theoretical Properties}
\label{sec:theory}

This section first analyzes the recursive search planner and then the safety
of executing its certified output. The first result is a conditional
completeness theorem: under an idealized model, recursive viewpoint subgoals
and exclusion memory do not eliminate a finite feasible sequence of sensing
actions satisfying Assumption~(A1). The second result states the geometric condition under which an
executed trajectory inherits the safety-volume certification of the discrete
search output.


\subsection{Conditional Completeness of the Search Planner}
\label{sec:theory-completeness}

The recursive planner may reject contaminated viewpoints and mark hitpoints as
exhausted. The central theoretical question is therefore whether these
exclusions can accidentally remove a viewpoint or hitpoint needed by an
otherwise feasible planning sequence. 

A \emph{fixed-map episode} is an interval during which the planner-relevant
map belief, inflation sets, visibility tests, and candidate predicates do not
change. The workspace is bounded, so every graph searched during an episode
is finite. 
The completeness result uses the following two assumptions.

(A1) \textbf{Existence of a finite feasible sequence of sensing
actions.}
There exists a finite sequence of certified sensing actions that resolves
the hitpoints required to reach the goal. Each action consists of a certified
path to a sensing pose and the observation executed at that pose. 
After executing the
sequence and incorporating the resulting map updates, the remaining path to
the goal lies in \(G^{\mathrm{cert}}\).
The sequence is
used only as an existence witness in the proof and is not provided to the
planner. 

    (A2) \textbf{Ideal monotone sensing.}
    Sensor updates are consistent with a fixed ground-truth map and move
    voxels monotonically from \(U\) to either \(F\) or \(O\); no observed
    voxel returns to \(U\). 

The proof uses the following property of the exhaustive planner:

    (P1) \textbf{Complete finite-domain primitive searches.}
    With unlimited planning effort, \(\Guide\), \(\Cand\), and \(\Conn\)
    exhaust their relevant finite search domains. Thus, \(\Guide\) and
    \(\Conn\) return a valid path whenever one exists and report failure
    only after proving that no such path exists. The candidate search
    \(\Cand\) either finds a certified candidate or, when no certified
    candidate exists, exhaustively generates the contaminated candidate
    family associated with the current observation target. 

Property~(P1) follows from the graph-search implementations.
\(\Guide\) is an A* search over a finite subgraph of
\(G^{\mathrm{opt}}\), \(\Cand\) is a labeled graph search over a finite
augmented state space, and \(\Conn\) is a reverse multi-source A* search
over the finite graph \(G^{\mathrm{cert}}\). When allowed to exhaust their
OPEN sets, these searches satisfy Property~(P1).

Runtime budgets may interrupt this exhaustive process, but an interruption
is treated as an incomplete planning tick rather than as a failure
certificate. The theorem therefore concerns the unlimited budget of the planner.


\begin{theorem}[Conditional completeness]
\label{thm:completeness}
Under the fixed-map analysis setting and Assumptions~(A1)--(A2), the
exhaustive search planner of Algorithm~\ref{alg:runtime_pipeline}
satisfying Property~(P1) reaches the goal after finitely many fixed-map
episodes.
\end{theorem}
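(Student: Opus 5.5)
The proof combines a finite-progress argument across episodes with a soundness argument for the exclusion memory inside each episode. First, I define a potential. Under (A2) the map belief evolves monotonically, and $V$ is finite, so $|U_t|$ is a nonincreasing integer bounded below by zero. It therefore suffices to show two things: (i) every fixed-map episode either reaches the goal or ends with an executed certified sensing action that strictly decreases $|U_t|$; (ii) within a fixed-map episode the recursive planner terminates. Given (i) and (ii), the number of episodes is at most $|U_0|+1$, and the theorem follows once we also exclude a \texttt{FAIL} return at the root.

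\textbf{Termination within an episode.} Each call of Alg.~\ref{alg:runtime_pipeline} either returns or strictly enlarges a finite set. A child context has $H_{\mathrm{anc}}(S_+)\supsetneq H_{\mathrm{anc}}(S)$, because the hitpoint $\mathbf h\notin B(S)$ lies on a guidance path avoiding $B(S)$. Hence recursion depth is bounded by $|V|$. Within one context, every iteration of the loop adds a voxel to $Q_{\mathrm{rej}}(S)$ or a hitpoint to $H_{\mathrm{exh}}(S)$, and these hitpoints are then blocked in $\Guide$, so the same hitpoint cannot recur. By (P1) each primitive call is finite. A lexicographic measure on (depth, $|V\setminus(Q_{\mathrm{rej}}\cup H_{\mathrm{exh}})|$) per context therefore gives termination.

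\textbf{Progress when a sensing action is returned.} If a $\Traj(\Pi_1;q_1,\mathbf h)$ is returned, $\Pi_1\subset G^{\mathrm{cert}}$ by the construction of $\Conn$, so execution is certified. I then invoke Lemma~\ref{lem:finite-exec}, whose key ingredients are the execution-time fallback rule and $q_1\models\tau$ with $\tau\cap U\neq\emptyset$. By that lemma, either $\mathbf h$ becomes cleared or some voxel of $\Omega(\mathbf h)$ leaves $U$; this is where the fallback to $\Obs(\mathbf u)$ is needed, because a volume-clearing view alone may observe nothing new. If instead $\Hit$ returns none, the returned path lies in $G^{\mathrm{cert}}$ up to the target, and at the root the target is the goal.

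\textbf{Main obstacle: excluding \texttt{FAIL} at the root while (A1) holds.} I would argue by contradiction using soundness of the exclusion memory. The invariant to prove is that every element of $H_{\mathrm{exh}}(S)$ and $Q_{\mathrm{rej}}(S)$ is a valid \emph{failure certificate relative to $B(S)$ and the current map}: no certified sensing sequence within the planner's primitives avoiding $B(S)$ clears that hitpoint or certifies reachability of that viewpoint. The invariant is established by induction on the order of insertion, using three ingredients:
\begin{itemize}
\item Lemma~\ref{lem:nogood-monotone}, for inheritance downward to children;
\item Corollary~\ref{cor:nogood-reuse}, for $\PropagateNoGood$ upward to parents;
\item the single-unknown-voxel argument for exhaustion, since clearing $\mathbf h$ requires resolving every $\mathbf u\in\Omega(\mathbf h)$.
\end{itemize}
With the invariant in hand, take the first action of the (A1) witness sequence that is still relevant under the current map. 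Its sensing pose is generated by $\Cand$ for some hitpoint target, because (P1) makes $\Cand$ exhaust the candidate family and (A1) restricts the witness to the planner's primitives. The root returning \texttt{FAIL} would mean $\Guide$ found no path avoiding $O^+\cup B(S_0)$. But the witness route to the goal avoids $O^+$, so it must pass through some exhausted hitpoint. By the invariant, that hitpoint cannot be cleared by any admissible sequence, which contradicts (A1).

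The delicate part is matching the witness sequence, whose actions may be ordered differently from the planner's and may clear ancestor hitpoints, to the context-relative notion of infeasibility. To handle this, I would prove the invariant relative to the current map only. When the map changes, a new episode begins and the necessity check together with the exclusion invalidation discards stale certificates, so (A1) is reapplied afresh to the remaining suffix of the witness in each episode.
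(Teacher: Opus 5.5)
Your episode-level structure matches the paper's:
\begin{itemize}
\item finite resolution inside an episode by induction on the non-ancestor hitpoints (Lemma~\ref{lem:fixed-map-exhaustion});
\item the $|U|$ potential (Lemma~\ref{lem:map-progress});
\item the execution-time fallback (Lemma~\ref{lem:finite-exec}).
\end{itemize}
There is a small slip: Lemma~\ref{lem:obs-progress} only guarantees that \emph{some} unknown voxel is resolved. The ray to $\mathbf u$ may stop at a previously unknown occluder outside $\Omega(\mathbf h)$, so you cannot claim that a voxel of $\Omega(\mathbf h)$ leaves $U$. This is harmless for your potential.

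The genuine gap is the step that excludes a root \texttt{FAIL}. Lemma~\ref{lem:nogood-monotone} and Corollary~\ref{cor:nogood-reuse} concern only the fixed-map predicates $\Fail_B,\Exh_B,\Refut_B$ and their monotonicity in $B$. They say nothing about sensing sequences that change the map. So induction on insertion order with these ingredients does not give your invariant that no certified sensing sequence avoiding $B(S)$ clears an exhausted hitpoint. Concretely, $\Exh_B(\mathbf h)$ refutes one sampled voxel $\mathbf u$ under the current belief. A sequence that first resolves other voxels of $\Omega(\mathbf h)$, or clears other hitpoints, could make $\mathbf u$ observable later, and ruling this out is exactly what needs proof.

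Even granting the invariant, your contradiction does not follow as written. The invariant is relative to $B(S_0)=H_{\mathrm{exh}}(S_0)$; for $Q_{\mathrm{rej}}$ it is really relative to $B(S)\cup\{\mathbf h\}$, with the active hitpoint blocked. The witness need not avoid $B(S_0)$: it may clear one exhausted hitpoint and route through it to clear another. Restricting the invariant ``to the current map only'' does not help. It collapses back to the fixed-map certificates, which say nothing about the witness after its first map update.

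The missing idea is a well-founded ordering along the witness.
\begin{itemize}
\item Let $X$ be the finite set of hitpoints occurring in the certificate derivations behind $H_{\mathrm{exh}}(S_0)$. By the dependency structure in Corollary~\ref{cor:nogood-reuse}, every blocked set in these derivations lies in $X$.
\item A root \texttt{FAIL} forces the final witness route to cross $H_{\mathrm{exh}}(S_0)\subseteq X$. Let $\mathbf x^\ast$ be the first element of $X$ the witness clears, with certificate $\Exh_{B^\ast}(\mathbf x^\ast)$ and $\Refut_{B^\ast}(\{\mathbf u^\ast\};\mathbf x^\ast)$ for some $\mathbf u^\ast\in\Omega(\mathbf x^\ast)$.
\item Until that moment no element of $X$ is certified. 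Hence the witness's approach to the pose $w$ from which it resolves $\mathbf u^\ast$ avoids $X\supseteq B^\ast\cup\{\mathbf x^\ast\}$.
\item Assume $w$ is a candidate for $\{\mathbf u^\ast\}$ under the current belief; the planner's-primitives clause of (A1), together with (A2), must supply this. Then either the approach is certified now, contradicting $Q_{\mathrm{cert}}(\{\mathbf u^\ast\})=\emptyset$, or $\Fail_{B^\ast\cup\{\mathbf x^\ast\}}(w)$ makes its first currently uncertified voxel an exhausted hitpoint. That voxel is an element of $X$ lying on a route that avoids $X$, which is a contradiction.
\end{itemize}
The paper never states your global invariant. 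It argues forward about the next unresolved witness action under the current belief, and uses Lemma~\ref{lem:nogood-monotone} and Corollary~\ref{cor:nogood-reuse} only as fixed-map soundness facts. If you keep the contradiction framing, an ordering argument of this kind is what makes it go through.
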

\begin{proof}
    See Appendix~\ref{app:completeness}.
\end{proof}
The theorem states that, if
a finite feasible sensing sequence satisfying Assumption~(A1) exists, the
planner's rejection mechanism cannot prevent goal reaching by pruning an
action that is required by the feasible sensing sequence.
Under the same conditions, a root-level \texttt{FAIL} certifies
that no finite goal-reaching sequence of sensing actions satisfying
Assumption~(A1) exists under the current belief.

In finite-runtime operation, the planner has three possible outcomes. It may
return a certified search action; it may return
\texttt{FAIL} after the relevant finite searches have been exhausted; or it
may return an incomplete planning tick when the current budget is exhausted.


\subsection{Certified Execution Safety}
\label{sec:theory-safety}

The completeness theorem concerns whether the search planner finds a
certified action. We next consider the geometric question of whether
the motion used to realize that action remains inside the region certified by
the map. The search-level containment chain is direct: a returned voxel path
lies in \(G^{\mathrm{cert}}\), and the continuous extension of
Corollary~\ref{cor:gcert-semantics} states that every robot position in the corresponding continuous
certified region has its complete inflated safety volume inside known-free
space.

\begin{proposition}[Certified motion execution]
\label{prop:soundness}
Let $\mathbf{x}:[0,T]\rightarrow\mathbb{R}^{3}$
be an executed trajectory. If its geometric support is contained in the
continuous certified region induced by the map belief at execution time,
then, at every time along the trajectory, the robot's entire inflated safety
volume lies in known-free space.
\end{proposition}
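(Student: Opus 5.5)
The argument is pointwise in time: the statement reduces to the continuous extension of Corollary~\ref{cor:gcert-semantics} applied at each instant. Fix $t\in[0,T]$. By hypothesis, $\mathbf{x}(t)$ lies in the continuous certified region, which is the union of the cells of the voxels of $G^{\mathrm{cert}}$ under the map belief $(F,U,O)$ at execution time. So we can choose a voxel $\mathbf v\in G^{\mathrm{cert}}=F\setminus(O^+\cup\Phi^+)$ whose closed cell contains $\mathbf{x}(t)$. If $\mathbf{x}(t)$ lies on a shared cell boundary, any of the adjacent certified voxels will do.

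Corollary~\ref{cor:gcert-semantics} then gives $\mathcal I_{\mathcal R}(\{\mathbf v\})\subseteq F$. Concretely, every voxel $\mathbf v+(i,j,k)$ with $|i|,|j|\le n_{xy}$ and $|k|\le n_z$ is known free. The union of the cells of these voxels is the closed axis-aligned box centered at $\mathrm{center}(\mathbf v)$ with horizontal half-widths $(n_{xy}+\tfrac12)\delta$ and half-height $(n_z+\tfrac12)\delta$.

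Next, show that the robot's inflated safety volume at $\mathbf{x}(t)$ lies in this box. That volume is contained in the axis-aligned box centered at $\mathbf{x}(t)$ with half-widths $r_{xy}$ and half-height $r_z$; the kernel was chosen to contain the body swept under all yaw angles, so the yaw $\psi(t)$ does not matter. Since $\|\mathbf{x}(t)-\mathrm{center}(\mathbf v)\|_\infty\le\delta/2$ coordinatewise, the triangle inequality bounds the horizontal extent from $\mathrm{center}(\mathbf v)$ by $\tfrac{\delta}{2}+r_{xy}\le\tfrac{\delta}{2}+n_{xy}\delta$, and the vertical extent by $\tfrac{\delta}{2}+r_z\le\tfrac{\delta}{2}+n_z\delta$. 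Both inequalities use the ceiling definitions of $n_{xy}$ and $n_z$. Hence the safety volume lies in the union of known-free cells. Because $t$ was arbitrary, the conclusion holds along the whole trajectory.

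The main obstacle is interpretive rather than computational. It is to make precise what "lies in known-free space" means for a continuous volume, namely containment in the union of cells of voxels in $F$. It also requires a fixed convention for cell boundaries, either closed cells or a point on a boundary assigned to an adjacent certified voxel, so that every $\mathbf{x}(t)$ has a witnessing voxel. A second point to state explicitly is that the hypothesis fixes the map belief at execution time. This matters because under Assumption~(A2) voxels only leave $U$, whereas a region certified at planning time could in principle later reveal occupancy. That possibility is excluded here because certification is taken with respect to the belief at execution time, so no monotonicity argument is needed.
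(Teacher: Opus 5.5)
Your proof is correct and follows the paper's argument. The paper also proves the proposition pointwise in $t$ by applying Corollary~\ref{cor:gcert-semantics} and its continuous extension from Sec.~\ref{sec:problem}, and that extension is the same box-containment computation you write out: the cells of $\mathcal I_{\mathcal R}(\{\mathbf v\})$ cover half-widths $(n_{xy}+\tfrac12)\delta$ and $(n_z+\tfrac12)\delta$, while the swept extent is at most $\tfrac{\delta}{2}+r_{xy}$ and $\tfrac{\delta}{2}+r_z$.
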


\begin{proof}
By Corollary~\ref{cor:gcert-semantics} and its continuous extension in
Sec.~\ref{sec:problem}, every robot-center position inside the continuous
certified region has its complete inflated safety volume contained in
known-free space. Applying this property to
\(\mathbf{x}(t)\) for every \(t\in[0,T]\) proves the claim.
\end{proof}

The discrete search path satisfies the premise exactly because it lies in
\(G^{\mathrm{cert}}\). For the continuous backend, safe-flight-corridor
penalties or corridor construction alone do not establish the premise if
they permit a nonzero containment violation.
Proposition~\ref{prop:soundness} therefore applies to the optimized
trajectory only when its realized support remains inside the continuous
certified region, for example after a hard post-optimization containment
check. Without such an execution gate, the backend is
certification-guided rather than formally certified, and
Sec.~\ref{sec:experiments} measures its corridor-realization slack
empirically.



\section{Experiments}
\label{sec:experiments}

The central focus of our evaluation is not
only whether a goal is successfully reached, but also how safely it is reached.
We explicitly investigate whether the spatial path produced by the search planner can consistently avoid non-certified regions throughout
the navigation.
First, we conduct a comprehensive benchmark comparing
representative baselines' search-level results, the search-level result of \PlannerName-search,
and the complete \PlannerNameSpaced system integrated with the SFC-based trajectory optimization backend.
Second, we ablate the preview routine and examine when the recursive subgoal mechanism is activated.
Third, we deploy the complete \PlannerNameSpaced system on a physical quadrotor platform,
demonstrating its feasibility in real-world environments.

\subsection{Experimental Setup}
\label{subsec:exp_setup}

\subsubsection{Simulation environments}
We evaluate four scene families (five variants) shown in
Fig.~\ref{fig:scenes}. Scene~1 contains two vertical-ascent variants
(1A/1B), each formed by two stacked $10\times10\times10$~m$^3$
compartments. Scene~1A uses a $3.0\times3.0$~m$^2$ ceiling aperture,
whereas Scene~1B uses a smaller $1.5\times1.5$~m$^2$ opening. The start lies below the
opening and the goal above it, requiring repeated observations to certify the
vertical safety volume. Scene~2 is a $5\times5$ array of
$5\times5\times5$~m$^3$ rooms connected by randomized
$1.5\times1.5$~m$^2$ wall openings; starts and goals are sampled in
distinct rooms, producing predominantly horizontal navigation. Scene~3 is a
four-story structure with $20\times20\times10$~m$^3$ floors and five
$1.5\times1.5$~m$^2$ apertures per level. Aligned beams with $2.5$~m
spacing occupy the second floor and randomized beams the third; tasks run from
the clear first floor to the clear fourth floor. Scene~4 is a
$30\times30\times4$~m$^3$ hall containing $80$ thin randomized door-frame
obstacles, each with two vertical poles and a $2.5$--$4.0$~m plate. The plate
modes are floor-, ceiling-, mid-air-, and full-height, sampled with ratios
$0.30/0.30/0.25/0.15$. All passable gaps are at least $1.5$~m; start and goal
positions are at least $20$~m apart, with $0.65$~m clearance at both endpoints. This dense,
thin-obstacle scene stresses omnidirectional occlusion handling.

For each scene variant, we evaluate $20$ paired randomized trials using the
same seeds for all methods. All planners share the same maps, robot geometry,
and sensing parameters: $0.1$~m voxels; box inflation
$r_{xy}=0.5$~m and $r_z=0.3$~m; and a forward-facing sensor with
$\pm45^\circ\times\pm45^\circ$ FOV and $5.0$~m maximum range, mounted
$0.05$~m above the robot center without relative rotation.

\begin{figure}[t]
    \centering
    \begin{minipage}[c]{0.42\linewidth}
        \centering
        \begin{subfigure}[t]{\linewidth}
            \centering
            \includegraphics[
                width=\linewidth,
                trim={0px 20px 0px 150px},
                clip
            ]{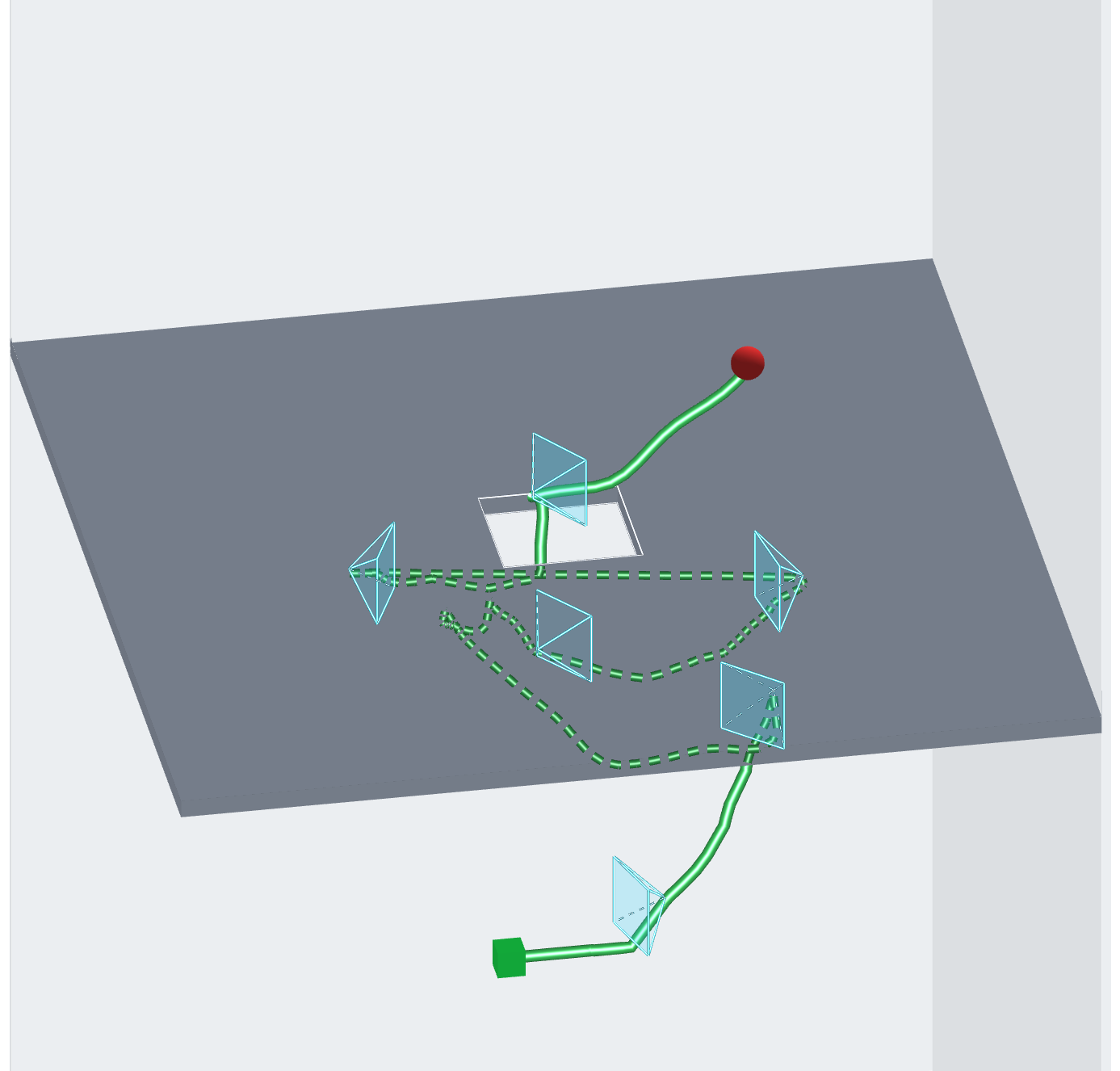}
            \caption{Scene 1B.}
            \label{fig:scene_1}
        \end{subfigure}

        \vspace{0.5em}

        \begin{subfigure}[t]{\linewidth}
            \centering
            \includegraphics[
                width=\linewidth,
                trim={0px 0px 0px 0px},
                clip
            ]{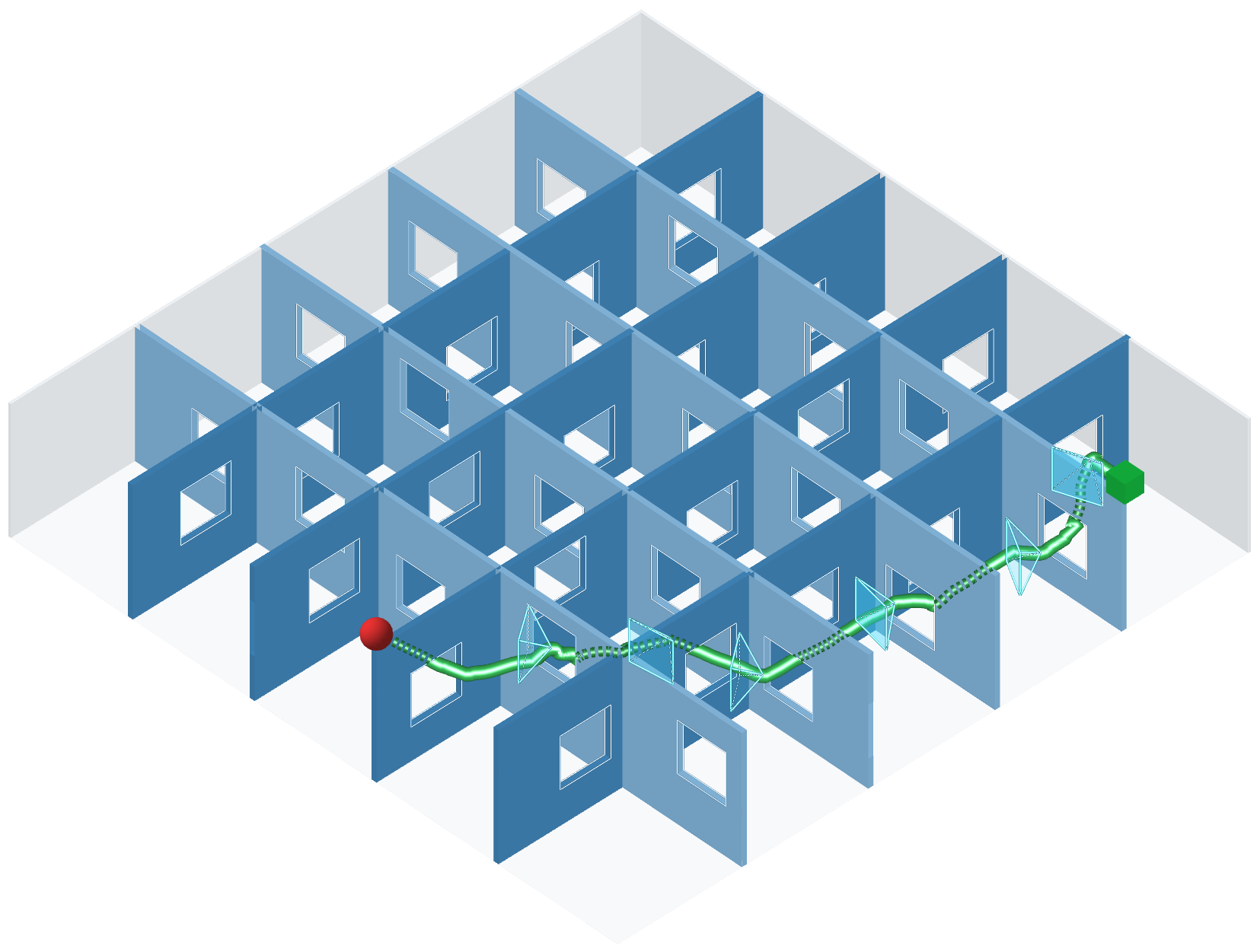}
            \caption{Scene 2.}
            \label{fig:scene_2}
        \end{subfigure}
    \end{minipage}
    \hfill
    \begin{minipage}[c]{0.54\linewidth}
        \centering
        \begin{subfigure}[t]{\linewidth}
            \centering
            \includegraphics[
                width=\linewidth,
                trim={20px 0px 20px 0px},
                clip
            ]{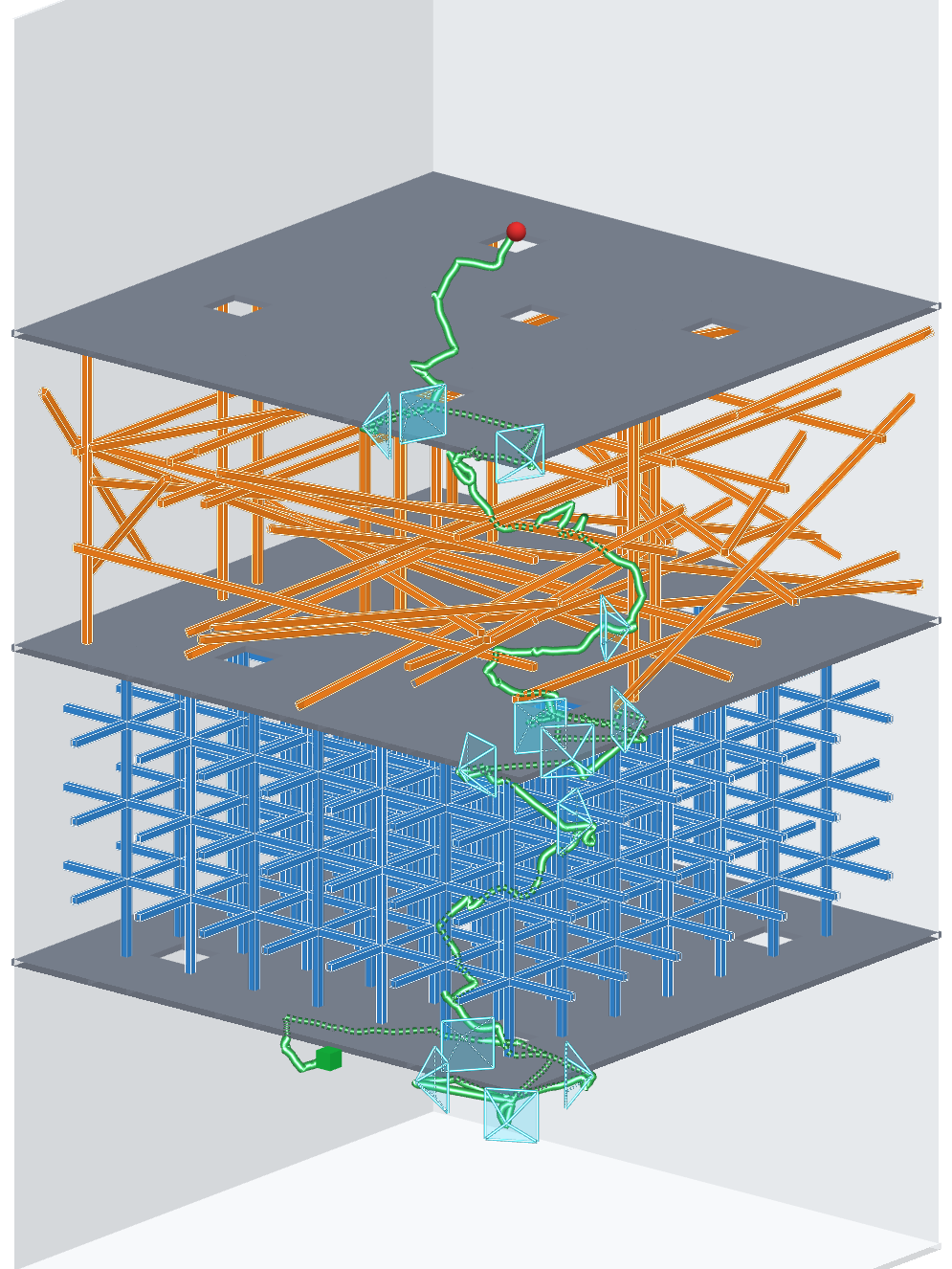}
            \caption{Scene 3.}
            \label{fig:scene_3}
        \end{subfigure}
    \end{minipage}

    \par
    \vspace{0.6em}

    \begin{subfigure}[t]{0.96\linewidth}
        \centering
        \includegraphics[
            width=\linewidth,
            trim={0px 0px 0px 0px},
            clip
        ]{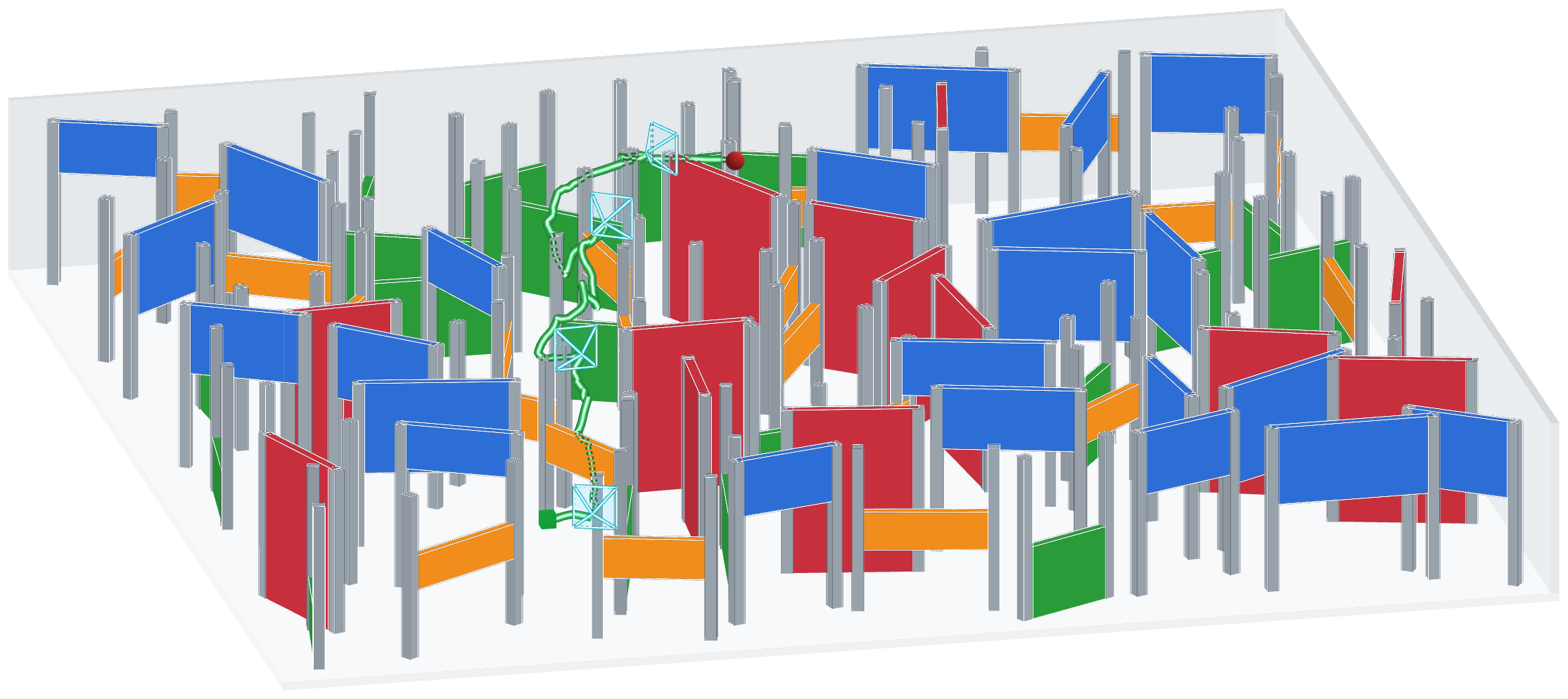}
        \caption{Scene 4.}
        \label{fig:scene_4}
    \end{subfigure}

    \caption{Simulation environments and representative \PlannerName-full
    paths. Green and red markers denote the start and goal. Scene~1A differs
    from the illustrated Scene~1B only by a larger aperture.}
    \label{fig:scenes}
    \vspace{-1.0em}
\end{figure}

\subsubsection{Selected baselines}
The three baselines are \emph{CPA-search} ~\cite{yu2022cpa}, which requires future path states
to be observed in advance and satisfy a safe-distance condition;
\emph{FOV-FMT*-search}~\cite{wang2024multifov}, an FOV-admissible search under the same
forward-facing sensor model; and \emph{OmniPlanner-TR}~\cite{zacharia2026omni}, a local--global
target-reach planner combining global guidance, frontier selection, and local
collision-free motion.

The main benchmark isolates search-level planning, because public
implementations of CPA-Planner, the multi-FOV
planner, and Omni-Planner
are not available and their trajectory backends contain engineering choices
outside the safety-volume certification question. We therefore reimplement
their published search conditions and evaluate all search methods. We additionally include
\PlannerName-full to quantify the effects introduced by SFC-based trajectory
smoothing.
Generic local replanners such as EGO-Planner and Raptor are excluded because
they do not address limited-FOV pre-execution certification.

\subsection{Metrics}
\label{subsec:exp_metrics}

At each replanning cycle, a search-level method receives the current robot and
map states and returns a discrete waypoint path. We densely interpolate this
path piecewise linearly and append only the prefix actually traversed before
the next sensing or replanning event; unexecuted suffixes are not scored. The
resulting executed reference is used for map updates and spatial safety
evaluation. For \PlannerName-full, the optimized trajectory is rasterized
under the same protocol. Because piecewise-linear search paths are not
dynamically feasible at waypoint corners, the main benchmark does not compare
traversal time or control effort; it isolates realized spatial certification.

The benchmark evaluates the spatial safety of the executed closed-loop motion
using $G^{\mathrm{cert}}$ defined in Sec.~\ref{sec:problem}. Although this
criterion is proposed internally by our method, the evaluation itself is a common
safety check applied to all methods: For a robot with nonzero body size,
traversal of $O_t^+ \cup \Phi_t^+$ indicates that the body-inflated volume
swept by the executed path overlaps either known occupied voxels or
frontier-adjacent unknown voxels. We therefore rasterize the executed
interpolated reference and report the number of corresponding voxels that enter
$O_t^+ \cup \Phi_t^+$ at the execution time.

When the interpolated reference is traced through the voxel grid, it can
touch boundary voxels near voxel edges or corners even when it is generated
from certified voxel centers. We classify
such boundary traversals as marginal violations. A rasterized voxel is counted
as marginal if it lies in $O_t^+ \cup \Phi_t^+$ only because of this
discretization-boundary contact. A corresponding voxel is counted as risky if it
lies in $O_t^+ \cup \Phi_t^+$ and is not classified as marginal.
Such grazing contacts are permitted by design: requiring the executed
reference to avoid all inflated cells in the continuous sense would let a
single diagonal pair of inflated voxels seal a passage between certified
voxels. Their penetration depth is below half a voxel and is absorbed by the
physical margin contained in $r_{xy}$ and $r_z$, while a reference that stays
inside the continuous certified region of Sec.~\ref{sec:problem} is covered
by the containment property established there.

For each run, we report the number of distinct marginal voxels
$n_{\mathrm{marg}}$ and risky voxels $n_{\mathrm{risk}}$ entered by the
executed reference; a voxel traversed repeatedly within an episode is counted
once, at the worst severity observed. These counts are reported as
safety-exposure metrics rather than used as hard success thresholds.

A run is considered successful if the robot reaches the goal. We report success rate, executed path
length, $n_{\mathrm{marg}}$, and $n_{\mathrm{risk}}$. Per-scene violation
counts are mean $\pm$ std over all episodes, including failures, because
violations often precede failure. Path length is reported over successful
episodes, and success rate over all episodes. Because risky counts for
certified variants are zero-inflated, we also report \emph{safe reach},
the number of episodes that reach the goal with $n_{\mathrm{risk}}=0$.

\subsection{Main Search-Level Benchmark}
\label{subsec:exp_main}

\begin{table*}[t]
\centering
\footnotesize
\begin{tabular}{l l c c c c c}
\hline
\textbf{Scene} &
\textbf{Method} &
\textbf{Goal reached} $\uparrow$ &
\textbf{Path length (m)} $\downarrow$ &
\textbf{Marginal viol.} $\downarrow$ &
\textbf{Risky viol.} $\downarrow$ &
\textbf{Safe reach} $\uparrow$ \\
\hline
Scene 1A & CPA-search & $19/20$ & $20.00 \pm 5.95$ & $5.85 \pm 4.00$ & $185.35 \pm 69.39$ & $0/20$ \\
Scene 1A & FOV-FMT*-search & $13/20$ & $17.07 \pm 3.53$ & $1.70 \pm 3.28$ & $150.70 \pm 70.24$ & $0/20$ \\
Scene 1A & OmniPlanner-TR & $12/20$ & $38.16 \pm 7.80$ & $1.30 \pm 2.20$ & $\mathbf{0.00 \pm 0.00}$ & $12/20$ \\
Scene 1A & \PlannerName-search & $\mathbf{20/20}$ & $24.60 \pm 6.77$ & $3.70 \pm 2.64$ & $\mathbf{0.00 \pm 0.00}$ & $\mathbf{20/20}$ \\
Scene 1A & \PlannerName-full & $\mathbf{20/20}$ & $27.43 \pm 7.67$ & $3.50 \pm 2.98$ & $0.15 \pm 0.67$ & $19/20$ \\
\hline
Scene 1B & CPA-search & $18/20$ & $34.02 \pm 11.98$ & $4.05 \pm 3.44$ & $256.95 \pm 83.37$ & $0/20$ \\
Scene 1B & FOV-FMT*-search & $0/20$ & -- & $0.75 \pm 1.29$ & $99.45 \pm 46.99$ & $0/20$ \\
Scene 1B & OmniPlanner-TR & $1/20$ & $85.63$ & $1.00 \pm 1.75$ & $1.35 \pm 2.62$ & $1/20$ \\
Scene 1B & \PlannerName-search & $\mathbf{20/20}$ & $36.45 \pm 8.59$ & $7.30 \pm 3.63$ & $\mathbf{0.00 \pm 0.00}$ & $\mathbf{20/20}$ \\
Scene 1B & \PlannerName-full & $\mathbf{20/20}$ & $39.95 \pm 9.45$ & $4.35 \pm 4.17$ & $0.10 \pm 0.45$ & $19/20$ \\
\hline
Scene 2 & CPA-search & $\mathbf{20/20}$ & $23.99 \pm 8.53$ & $4.55 \pm 4.07$ & $57.25 \pm 44.11$ & $1/20$ \\
Scene 2 & FOV-FMT*-search & $19/20$ & $15.53 \pm 4.37$ & $0.85 \pm 1.04$ & $10.10 \pm 14.56$ & $7/20$ \\
Scene 2 & OmniPlanner-TR & $\mathbf{20/20}$ & $23.71 \pm 9.30$ & $0.00 \pm 0.00$ & $\mathbf{0.00 \pm 0.00}$ & $\mathbf{20/20}$ \\
Scene 2 & \PlannerName-search & $\mathbf{20/20}$ & $19.10 \pm 5.32$ & $0.95 \pm 1.23$ & $\mathbf{0.00 \pm 0.00}$ & $\mathbf{20/20}$ \\
Scene 2 & \PlannerName-full & $\mathbf{20/20}$ & $19.41 \pm 5.24$ & $0.75 \pm 1.68$ & $0.25 \pm 0.79$ & $18/20$ \\
\hline
Scene 3 & CPA-search & $11/20$ & $108.15 \pm 26.74$ & $19.10 \pm 17.43$ & $525.25 \pm 256.39$ & $0/20$ \\
Scene 3 & FOV-FMT*-search & $0/20$ & -- & $1.70 \pm 3.21$ & $128.50 \pm 99.78$ & $0/20$ \\
Scene 3 & OmniPlanner-TR & $0/20$ & -- & $0.45 \pm 1.00$ & $\mathbf{0.20 \pm 0.52}$ & $0/20$ \\
Scene 3 & \PlannerName-search & $\mathbf{20/20}$ & $136.21 \pm 14.87$ & $30.50 \pm 9.68$ & $2.65 \pm 4.42$ & $\mathbf{14/20}$ \\
Scene 3 & \PlannerName-full & $\mathbf{20/20}$ & $153.12 \pm 15.53$ & $20.30 \pm 9.79$ & $3.00 \pm 5.17$ & $12/20$ \\
\hline
Scene 4 & CPA-search & $11/20$ & $36.38 \pm 9.71$ & $5.70 \pm 6.55$ & $31.70 \pm 32.23$ & $0/20$ \\
Scene 4 & FOV-FMT*-search & $19/20$ & $26.75 \pm 3.91$ & $2.00 \pm 2.25$ & $18.90 \pm 28.13$ & $4/20$ \\
Scene 4 & OmniPlanner-TR & $18/20$ & $37.22 \pm 6.83$ & $0.25 \pm 0.91$ & $\mathbf{0.00 \pm 0.00}$ & $\mathbf{18/20}$ \\
Scene 4 & \PlannerName-search & $\mathbf{20/20}$ & $32.30 \pm 5.39$ & $3.90 \pm 3.21$ & $1.05 \pm 2.42$ & $16/20$ \\
Scene 4 & \PlannerName-full & $\mathbf{20/20}$ & $32.69 \pm 5.37$ & $2.30 \pm 2.79$ & $0.55 \pm 1.82$ & $17/20$ \\
\hline
\end{tabular}
\caption{Main search-level benchmark. Violation counts are unique inflated
voxels entered by the executed reference, mean\,$\pm$\,std over all $20$
episodes. Bold marks the best value per scene.} 
\label{tab:main_benchmark}
\end{table*}

Table~\ref{tab:main_benchmark} reports the results of the main search-level
benchmark; the comparison separates ordinary goal reaching from
volume-certified goal reaching.

\emph{Reachability.}
Both \PlannerNameSpaced variants reach the goal in all $100$ trials. The baselines remain competitive in
the predominantly horizontal Scene~2 ($19$--$20/20$), but
their success rates decrease sharply when reaching the goal
requires vertical transitions through floor apertures. When the
aperture shrinks from Scene~1A to Scene~1B, the success rates of
FOV-FMT*-search and OmniPlanner-TR drop from $13/20$ and $12/20$ to
$0/20$ and $1/20$, whereas CPA-search is nearly unaffected ($19/20$ to
$18/20$) because it does not wait for the transit volume to be certified.
Scene~3 stacks three such aperture levels with beam lattices in between:
FOV-FMT*-search and OmniPlanner-TR fail in every trial ($0/20$) and
CPA-search reaches only $11/20$, whereas both \PlannerNameSpaced variants
reach $20/20$. The cluttered
Scene~4 largely restores baseline reachability for the two
sensing-limited baselines (FOV-FMT*-search $19/20$,
OmniPlanner-TR $18/20$) --- every obstacle can be observed and
circumvented from many directions --- while CPA-search drops to
$11/20$: the dense occlusion edges continuously trigger its
active-path safety monitor, and the emergency replans it demands
must satisfy the historical-visibility constraint from the
current dynamic state, which repeatedly fails in clutter.
Representative
qualitative behaviors underlying these results are examined
in Sec.~\ref{subsec:exp_qualitative} and
Fig.~\ref{fig:qualitative_results}.

\emph{Safety.}
The violation metrics directly evaluate whether the executed
motion remains within certified free space. In the vertical
scenes, the search-only baselines CPA-search and FOV-FMT*-search
enter on the order of $10^2$ risky voxels per episode on average
($99$--$257$ in Scenes~1A--1B, $129$--$525$ in Scene~3): they advance
toward the goal through non-certified inflated space.
OmniPlanner-TR exhibits the opposite trade-off: its strict
free-space admission confines the graph to already-observed free
space, so it advances only where it holds certified clearance.
Its risky counts are correspondingly near-zero ($1.35$ in
Scene~1B, $0.20$ in Scene~3, and $0.00$ in Scene~1A, the planar
Scene~2, and the cluttered Scene~4),
and the clearance audit below confirms that every one of these
flagged samples is a simulator-flicker artifact retaining
$\geq 0.40$~m clearance rather than genuine proximity. What
OmniPlanner-TR lacks is reachability through the vertical
apertures ($1/20$ in Scene~1B, $0/20$ in Scene~3): unlike the
search-only baselines it will not drive through non-certified
space, but it also cannot actively observe the blocking transit
volume to certify an ascent, so it is pinned below the aperture
and terminates. In the vertical scenes, therefore, only the \PlannerNameSpaced
variants attain both objectives, maintaining near-zero risky
counts ($\leq 0.25$ in Scenes~1--2, $2.65$--$3.00$ in Scene~3,
$0.55$--$1.05$ in Scene~4)
while reaching every goal; \PlannerName-search is risky-free in
every Scene-1 and Scene-2 trial and exhibits only a small
bounded tail in Scenes~3 and~4. Scene~4 makes the two sides of
the comparison explicit: the \PlannerNameSpaced variants match the
near-zero exposure of OmniPlanner-TR --- the baseline whose
sensing model this scene favors most --- while additionally
reaching the goals it forfeits, and they match the reachability
of FOV-FMT*-search while avoiding the ${\sim}19$ risky voxels per
episode that it incurs (it attains safe reach in only
$4/20$ episodes).   

The residual samples classified as risky for the two
search-level planners that restrict execution to body-valid
observed free space, \PlannerName-search and
OmniPlanner-TR, are caused by simulator-side occupancy
flicker rather than an actual loss of clearance. In a separate
static test, voxels along a fixed beam silhouette repeatedly
switched between occupied and free, causing the inflated
obstacle boundary to shift by one voxel. A ground-truth audit
confirmed that every flagged \PlannerName-search sample
retained at least $0.33$~m clearance and every flagged
OmniPlanner-TR sample at least $0.40$~m, whereas risky
samples from CPA-search and FOV-FMT*-search extended to
physical collision. These residual search-level counts therefore
reflect a simulation artifact rather than a planner safety failure.
    
\emph{Path efficiency.}
The additional observation actions required for safety
certification moderately increase the executed path length in
the vertical scenes. For example, the Scene-3 mean path
lengths of the \PlannerNameSpaced variants are $136$--$153$~m, compared
with $108$~m for CPA-search. However, baseline path lengths
are averaged only over successful episodes, which in the harder
scenes correspond to the more favorable trials.

\emph{Search layer vs.\ backend.}
\PlannerName-search publishes the certified search path directly,
with the trajectory backend disabled, and already achieves full
success with zero risky counts in Scenes~1A, 1B, and~2. Its small
residuals in Scenes~3 and~4 are attributed to the simulator-side
occupancy flicker discussed above. The safety advantage is therefore
established at the search layer rather than recovered
downstream by trajectory optimization.

\PlannerName-full realizes the certified search output as a
smooth, dynamically feasible trajectory within constructed
corridors. The small nonzero risky counts in Scenes~1A, 1B,
and~2 ($\leq 0.25$) reflect the slight corridor-realization slack of
the trajectory backend (Sec.~\ref{sec:traj_opt}), which bounds the
smoothed trajectory by corridor faces rather than by voxel centers.
In Scenes~3 and~4, the backend reduces marginal violations from
$30.50$ to $20.30$ and from $3.90$ to $2.30$ voxels per episode,
while the risky counts remain comparable ($2.65$ versus $3.00$ and
$1.05$ versus $0.55$); the latter may contain both simulator-side
occupancy flicker and the same backend slack.

\subsection{Qualitative Analysis}
\label{subsec:exp_qualitative}

Fig.~\ref{fig:qualitative_results} illustrates representative
Scene-1B behaviors underlying the aggregate results above.
\PlannerName-search first moves among viewpoints beneath the
ceiling slab to observe and certify the body-inflated transit
volume around the aperture, and only then executes the ascent
(Fig.~\ref{fig:qual_ffa_success}). The resulting reference
contains only marginal contacts around the aperture rim.

FOV-FMT*-search constrains each local connection by its motion
direction. In Fig.~\ref{fig:qual_fmt_fail}, every sampled
aperture-crossing branch is rejected: red branches near the rim
violate obstacle inflation, whereas purple branches provide
sufficient collision clearance but are too steep to lie within
the forward FOV.

CPA-search guarantees that future path states were observed
earlier, but does not certify the complete swept body volume.
Its successful execution therefore crosses the aperture with an
extended risky section
(red paths in Fig.~\ref{fig:qual_cpa_success}). In the failed execution
(Fig.~\ref{fig:qual_cpa_fail}), no motion-primitive chain can
both approach the aperture center with sufficiently small
lateral velocity and preserve historical visibility for the
subsequent vertical ascent.

OmniPlanner-TR restricts motion to observed free space, but
its target-directed frontier guidance does not explicitly identify
the blocking transit volume as an observation objective. In the
rare successful execution
(Fig.~\ref{fig:qual_omni_success}), general exploratory motion
incidentally reveals enough of the aperture region to form a
safe ascent. In the failed execution
(Fig.~\ref{fig:qual_omni_fail}), the robot continues exploring
below the aperture without clearing a complete body-valid
connection through it, and eventually terminates without further
progress toward the goal.

\begin{figure}[!t]
    \centering
    \begin{subfigure}[t]{0.49\linewidth}
        \centering
        \includegraphics[width=\linewidth, trim={10.0cm 10.0cm 10.0cm 0.0cm}, clip]{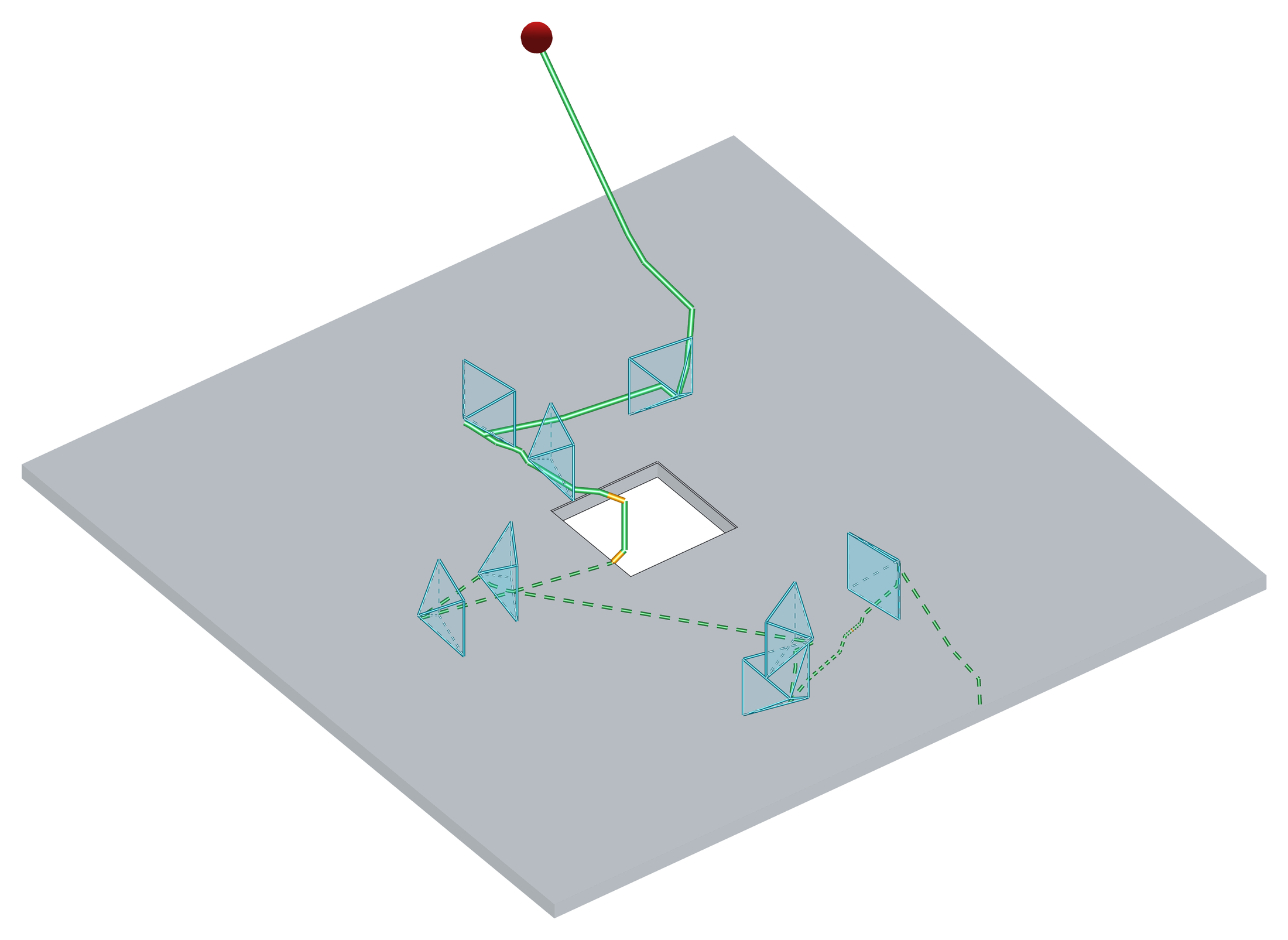}
        \caption{\PlannerName-search success.}
        \label{fig:qual_ffa_success}
    \end{subfigure}
    \hfill
    \begin{subfigure}[t]{0.49\linewidth}
        \centering
        \includegraphics[width=\linewidth, trim={20.0cm 14.0cm 20.0cm 10.0cm}, clip]{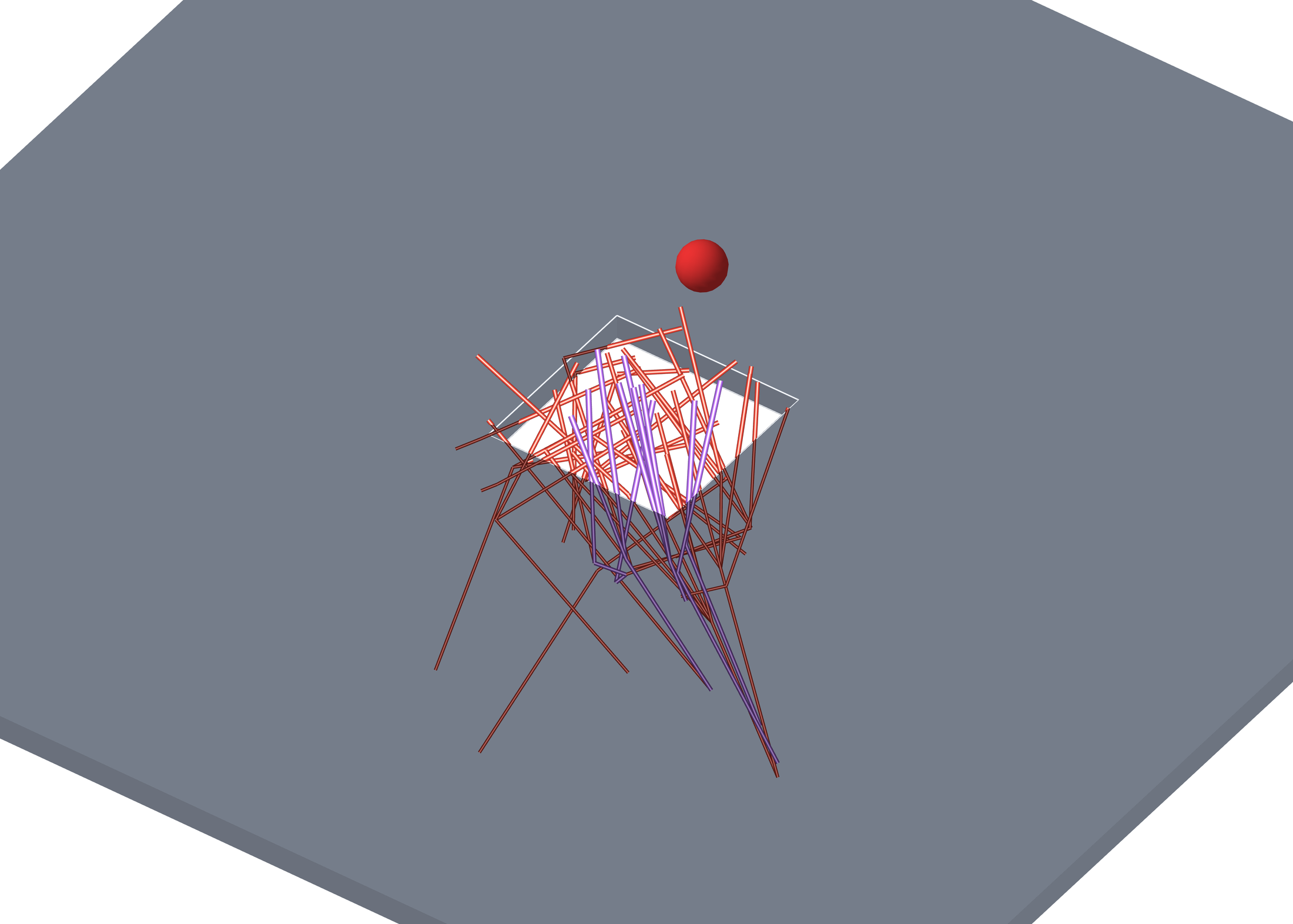}
        \caption{FOV-FMT*-search failure.}
        \label{fig:qual_fmt_fail}
    \end{subfigure}

    \vspace{0.3em}

    \begin{subfigure}[t]{0.49\linewidth}
        \centering
        \includegraphics[width=\linewidth]{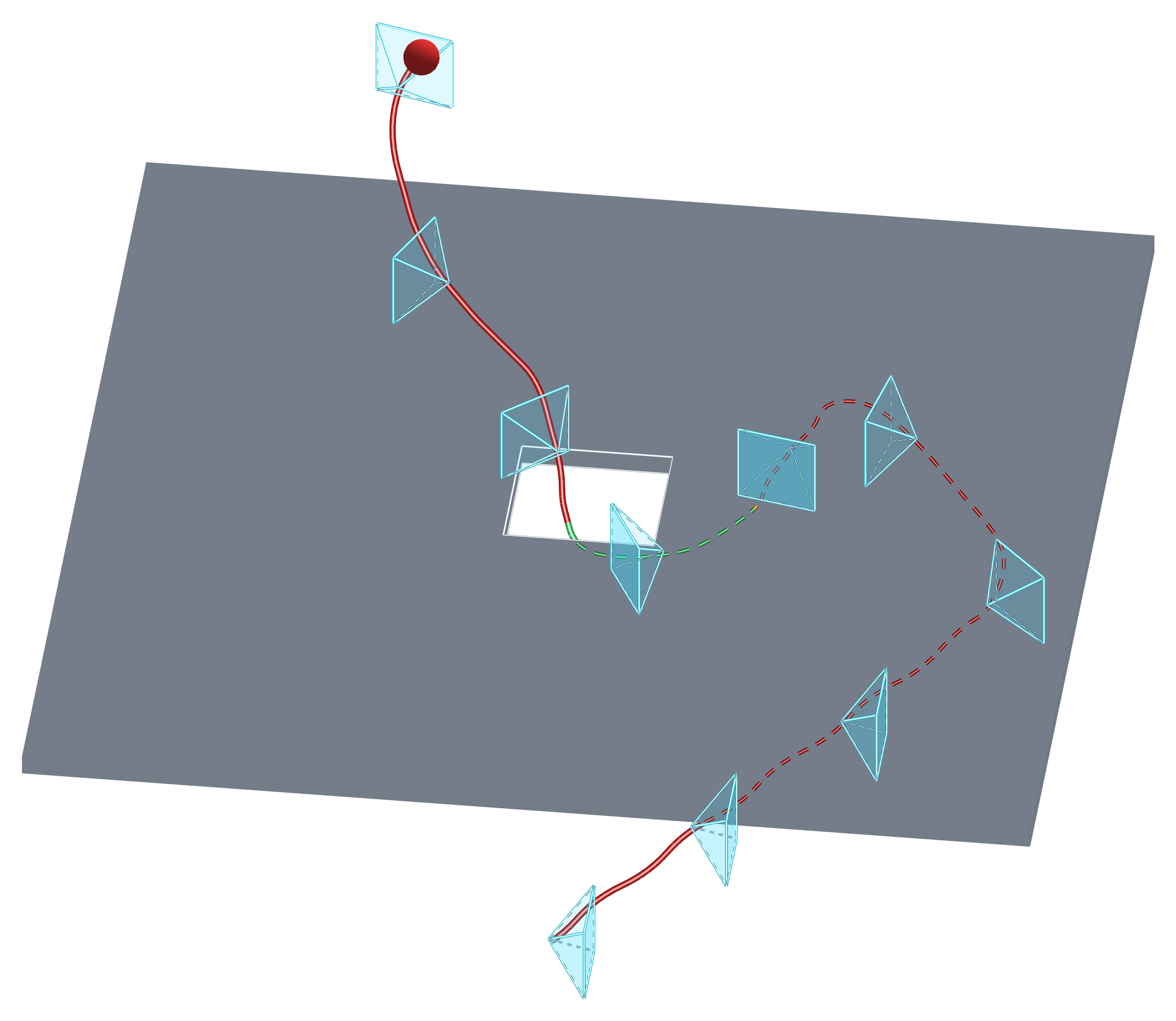}
        \caption{CPA-search success.}
        \label{fig:qual_cpa_success}
    \end{subfigure}
    \hfill
    \begin{subfigure}[t]{0.49\linewidth}
        \centering
        \includegraphics[width=\linewidth, trim={0.0cm 20.0cm 0.0cm 0.0cm}, clip]{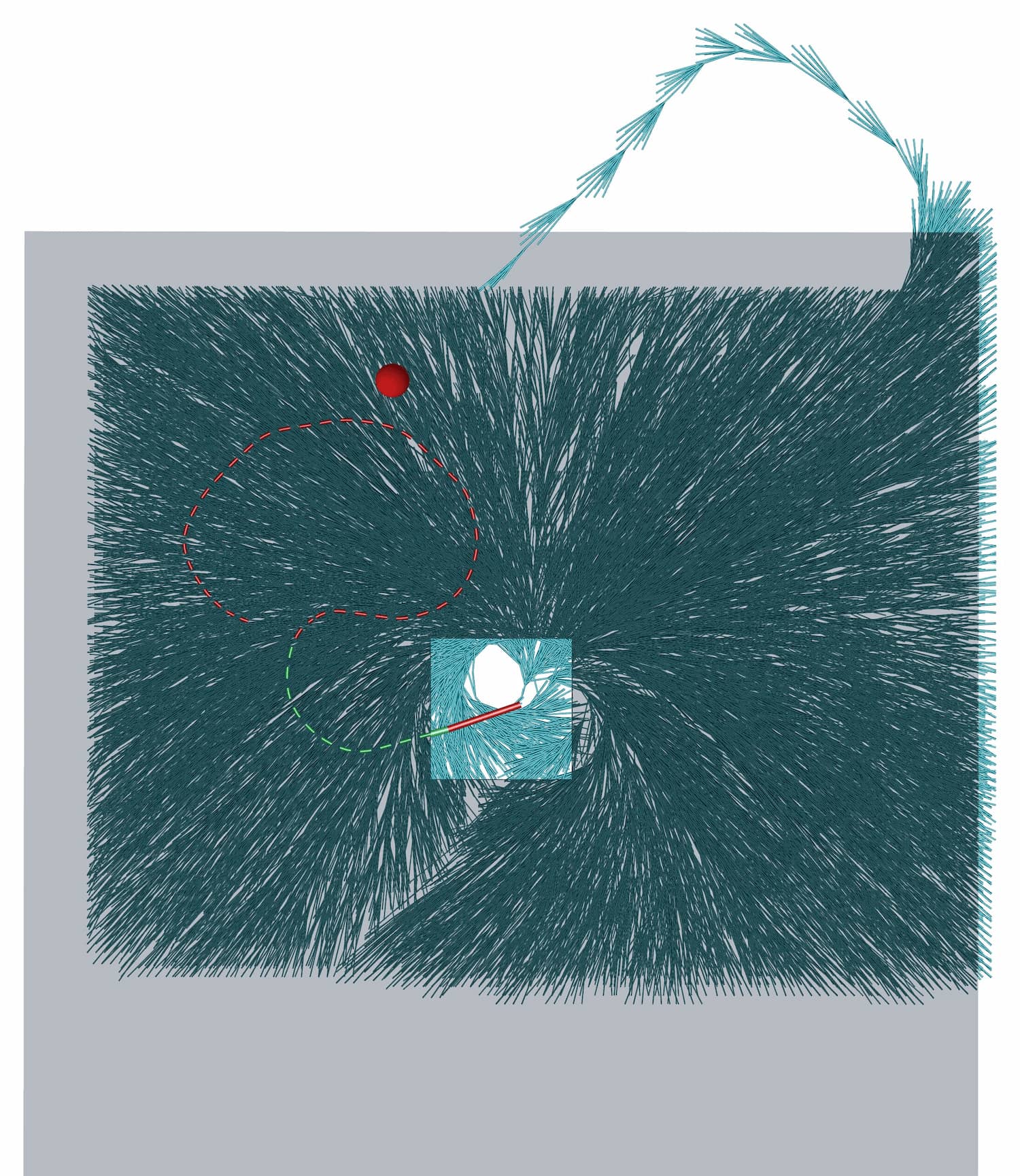}
        \caption{CPA-search failure.}
        \label{fig:qual_cpa_fail}
    \end{subfigure}

    \vspace{0.3em}

    \begin{subfigure}[t]{0.49\linewidth}
        \centering
        \includegraphics[width=\linewidth, trim={1.0cm 0.0cm 0.0cm 1.0cm}]{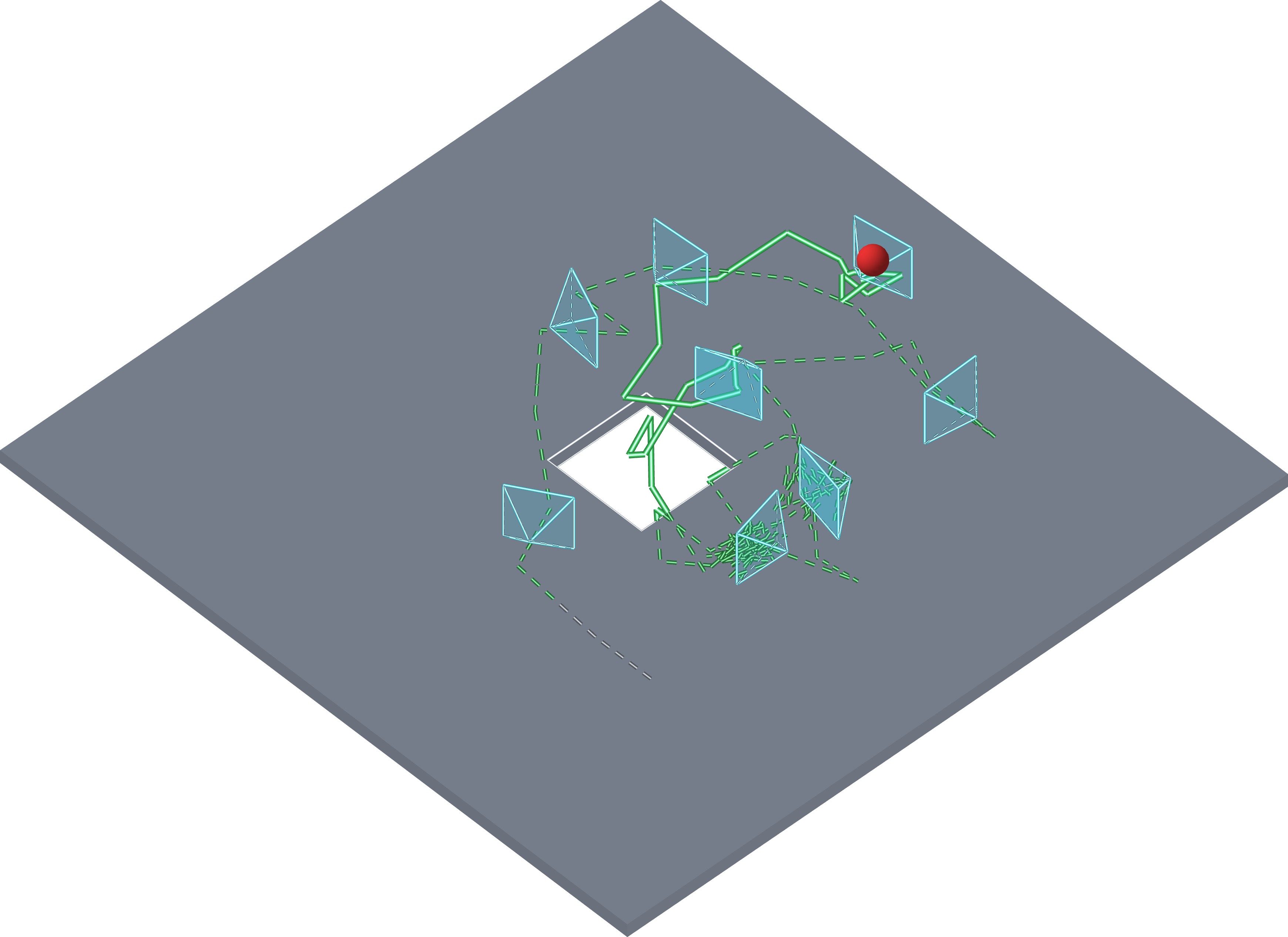}
        \caption{OmniPlanner-TR success.}
        \label{fig:qual_omni_success}
    \end{subfigure}
    \hfill
    \begin{subfigure}[t]{0.49\linewidth}
        \centering
        \includegraphics[width=\linewidth]{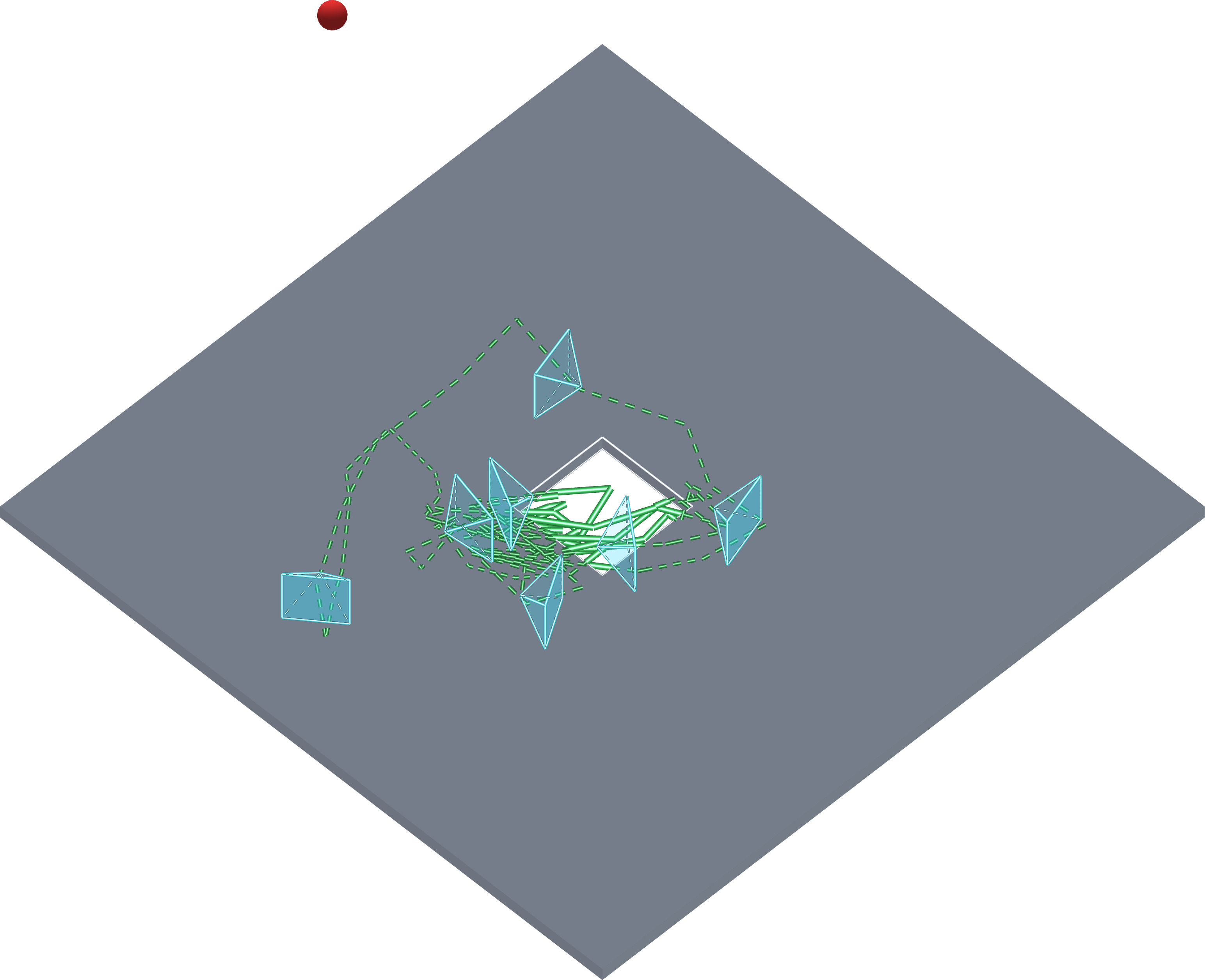}
        \caption{OmniPlanner-TR failure.}
        \label{fig:qual_omni_fail}
    \end{subfigure}

    \caption{Representative examples of planners in Scene 1B. 
    Paths are colored by violation class (green: safe, orange: marginal, red: risky); 
    cyan frusta show the sensor heading; the red sphere is the goal.
    }
    \label{fig:qualitative_results}
    
\end{figure}

\subsection{Ablation Study of Preview Routine}
\label{subsec:exp_ablation}

We isolate the contribution of the certified preview routine
(Sec.~\ref{sec:receding-preview}), the one component that can be removed
without altering the safety-volume certification formulation or the set of
tasks the planner can solve. 
We run the \emph{No preview} variant on the same $20$ Scene~3 tasks used for the
main benchmark (Table~\ref{tab:main_benchmark}). Results are summarized in
Table~\ref{tab:ablation}; the safety column reuses the risky-voxel metric of
Sec.~\ref{subsec:exp_metrics}.

\begin{table}[t]
\centering
\footnotesize
\begin{tabular}{l c c c c}
\hline
\textbf{Variant} &
\textbf{Reached} $\uparrow$ &
\textbf{Risky} $\downarrow$ &
\textbf{Path (m)} &
\textbf{Time (s)} $\downarrow$ \\
\hline
\PlannerName-full & 20/20 & 3.00 & $153.1\pm15.5$ & $218.5\pm24.7$ \\
No preview        & 20/20 & 2.40 & $133.6\pm16.3$ & $276.4\pm35.5$ \\   
\hline
\end{tabular}
\caption{Preview ablation on the $20$ Scene~3 tasks.}
\label{tab:ablation}
\end{table}

Removing the preview routine leaves both goal completion ($20/20$) and safety
exposure ($2.40$ vs.\ $3.00$ risky voxels per episode) essentially unchanged,
confirming that preview is an execution-smoothing layer and not part of the
feasibility or safety guarantees (Sec.~\ref{sec:receding-preview}). The
speed-up does not come from a shorter route: the \emph{No preview} trajectories
are in fact $15\%$ \emph{shorter} on average ($133.6$ vs.\ $153.1$~m). Because
preview optimistically commits to a farther lookahead viewpoint, that viewpoint
occasionally fails to observe its intended target and the planner falls back to
clearing the current hitpoint directly, so preview can trace a slightly longer,
less direct path. Despite covering more distance, \PlannerName-full completes
the missions $27\%$ faster on average ($218.5$ vs.\ $276.4$~s),
because it sustains continuous motion through certified intermediate anchors
instead of decelerating to rest and re-accelerating at every viewpoint. 

\subsection{Recursive Subgoal Activation}
\label{subsec:exp_recursion}

Recursive subgoals are invoked when all useful viewpoints
for the active observation target are reachable only through uncertified
space. In our randomized benchmark tasks this branch was rarely activated:
never in Scenes~1A, 1B, and~2; $10$ times for \PlannerName-full and $2$
times for \PlannerName-search in Scene~3, always at the vertical apertures
where the first target-centric search returned only contaminated clearing
viewpoints; and $3$ times within a single \PlannerName-search episode in
Scene~4, each cleared by the next map update.
Each activation invokes the level-by-level necessity check of
Sec.~\ref{sec:subgoal}, which clears the intermediate obligation before the
parent obligation is reconsidered, and every one resolved so that all runs
reached the goal. 


\subsection{Real-Robot Demonstration}
\label{subsec:exp_real_robot}

We validate the complete \PlannerNameSpaced pipeline through real-robot
experiments in four representative scenarios. In every run, onboard sensing,
online voxel-map update, search replanning, SFC-based trajectory optimization,
and closed-loop control operate concurrently. These experiments provide
qualitative system-level validation rather than a statistical comparison;
the controlled quantitative evaluation is reported in
Table~\ref{tab:main_benchmark}.

The quadrotor platform and sensing configuration are shown in
Fig.~\ref{fig:real_robot_platform}. A MID-360 LiDAR is mounted beneath the
vehicle to provide strong coverage of the surface below the robot. This
configuration reflects the intended construction-site application, in which
the vehicle navigates above an excavation while reconstructing the excavation
surface beneath it. The full LiDAR FOV is used for odometry and online
voxel-map update.
For observation planning, \PlannerNameSpaced uses only the designated
forward planning FOV illustrated in Fig.~\ref{fig:drone_2}. 
In the two demonstrated navigation scenarios, the
safety-critical unknown regions lie primarily above or ahead of the vehicle.
Consequently, the downward-looking measurements assist mapping and state
estimation but do not by themselves certify the safety volumes required for
the subsequent ascent or entrance maneuver.

The first scenario, shown in the upper row of Fig.~\ref{fig:real_robot_demo},
is a multi-level ascent task.
At location~1 (Fig.~\ref{fig:real_robot_snapshot_1}), the planner
selects a clearing viewpoint together with a preview viewpoint, allowing
continued progress while observing farther along the guidance path. At
location~2 (Fig.~\ref{fig:real_robot_snapshot_2}), newly revealed geometry
leaves part of the required safety volume unknown, triggering an
unknown-observation action. The resulting observation expands the certified
region sufficiently for the planner to complete the ascent at location~3
(Fig.~\ref{fig:real_robot_snapshot_3}).

The second scenario, shown in the lower row of Fig.~\ref{fig:real_robot_demo},
involves entering through a low-clearance opening. 
At location~1 (Fig.~\ref{fig:real_robot_gate_snapshot_1}), 
no certified preview viewpoint is available beyond the opening, 
so the planner executes the immediate clearing
action without preview. Because part of the entrance safety volume remains
unknown, at location~2
(Figs.~\ref{fig:real_robot_gate_snapshot_2} and
\ref{fig:real_robot_gate_snapshot_3}) it selects an unknown-observation
viewpoint to resolve the remaining occlusion before entry. 
Thus, \PlannerNameSpaced certifies the full inflated volume before traversal, 
rather than relying on visibility through the opening alone.

The third scenario
(Figs.~\ref{fig:ric_2d_photo}--\ref{fig:ric_2d_rviz})
demonstrates navigation in a cluttered indoor environment, including both
lateral detours around obstacles and traversal over them by exploiting
available vertical free space.
The fourth scenario
(Figs.~\ref{fig:stairwell_down}--\ref{fig:stairwell_rviz})
demonstrates ascent through a narrow stairwell, where the vehicle
must exploit the limited available space while continuously certifying the
route ahead under its onboard planning FOV.
Together, these examples further demonstrate the ability of
\PlannerNameSpaced to navigate constrained 3-D environments beyond the two
representative cases discussed above.



\begin{figure}[t]
    \centering
    \subfloat[UAV platform.\label{fig:drone_1}]{%
        \includegraphics[
            width=0.48\columnwidth,
            trim={0cm 1cm 1.33cm 0cm},
            clip
        ]{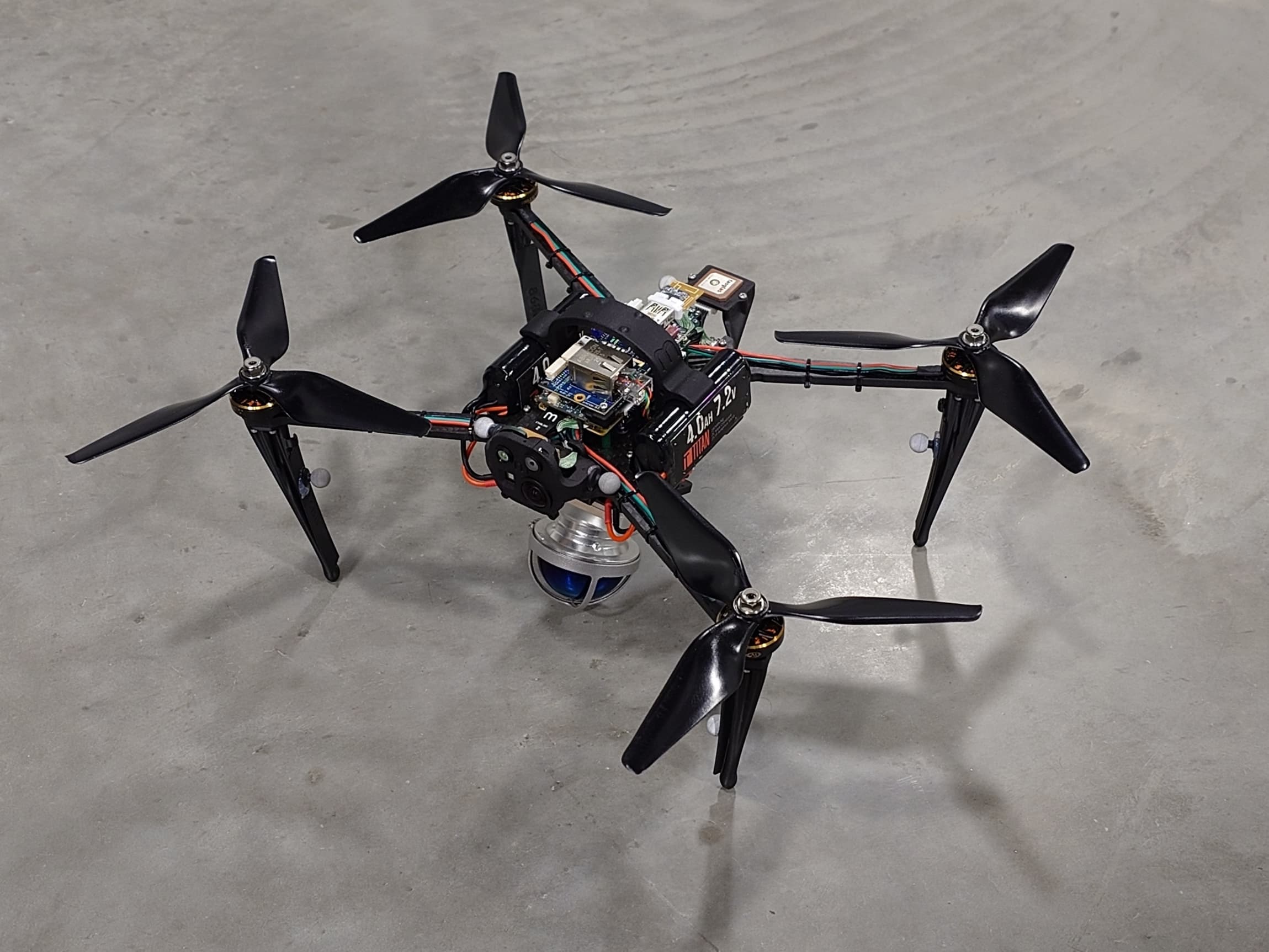}%
    }\hfill%
    \subfloat[FOV configuration.\label{fig:drone_2}]{%
        \includegraphics[
            width=0.48\columnwidth,
            trim={0cm 2.5cm 0cm 0cm},
            clip
        ]{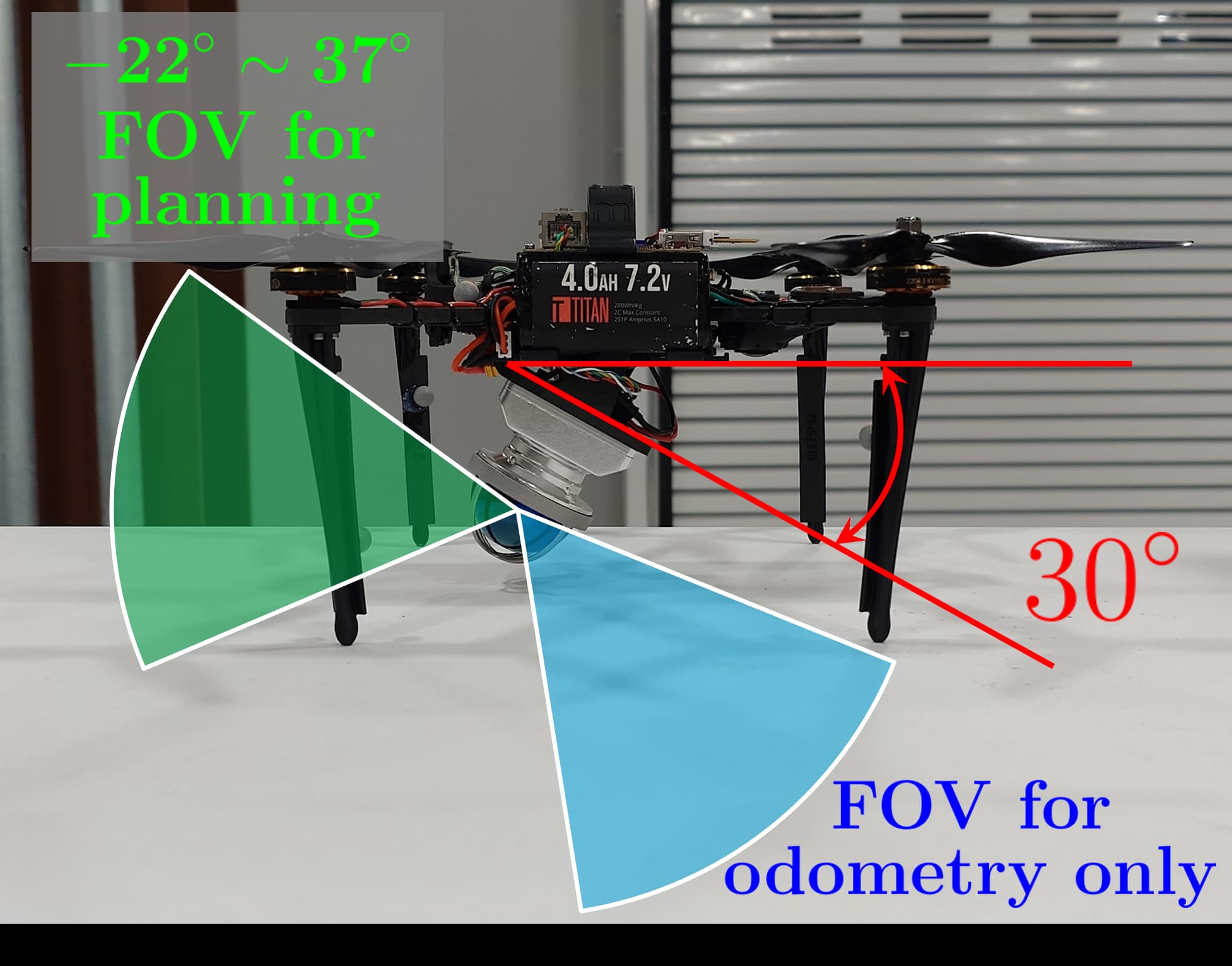}%
    }
    \caption{Real-robot platform and sensing configuration.}
    \label{fig:real_robot_platform}
\end{figure}

\begin{figure*}[t]
    \centering
    \begin{subfigure}[t]{0.24\textwidth}
        \centering
        \includegraphics[
            trim={0cm 0cm 0cm 0cm},
            clip,
            width=\linewidth
        ]{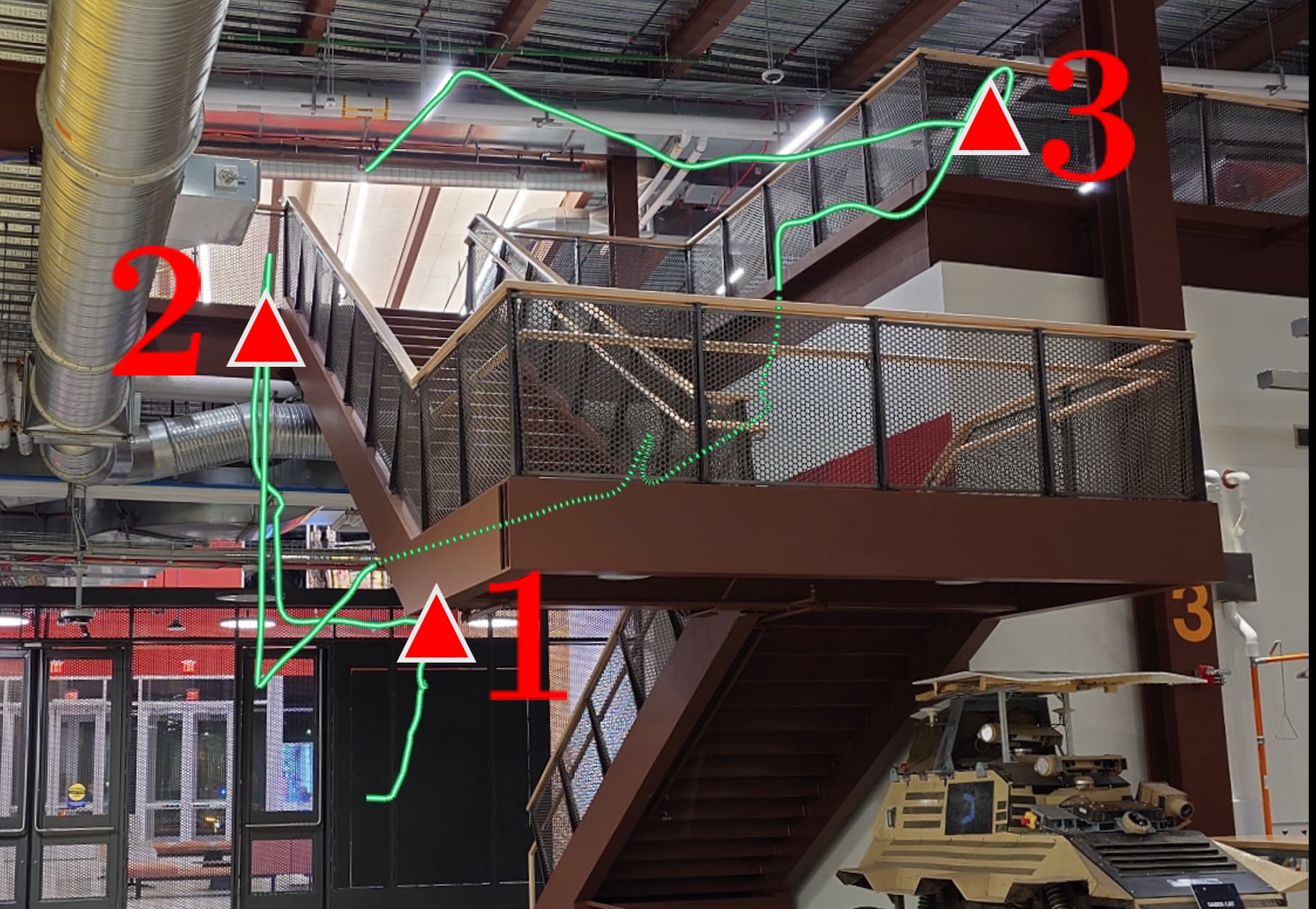}
        \caption{Multi-level ascent.}
        \label{fig:real_robot_overview}
    \end{subfigure}
    \hfill
    \begin{subfigure}[t]{0.24\textwidth}
        \centering
        \includegraphics[
            trim={0cm 0cm 0cm 0cm},
            clip,
            width=\linewidth
        ]{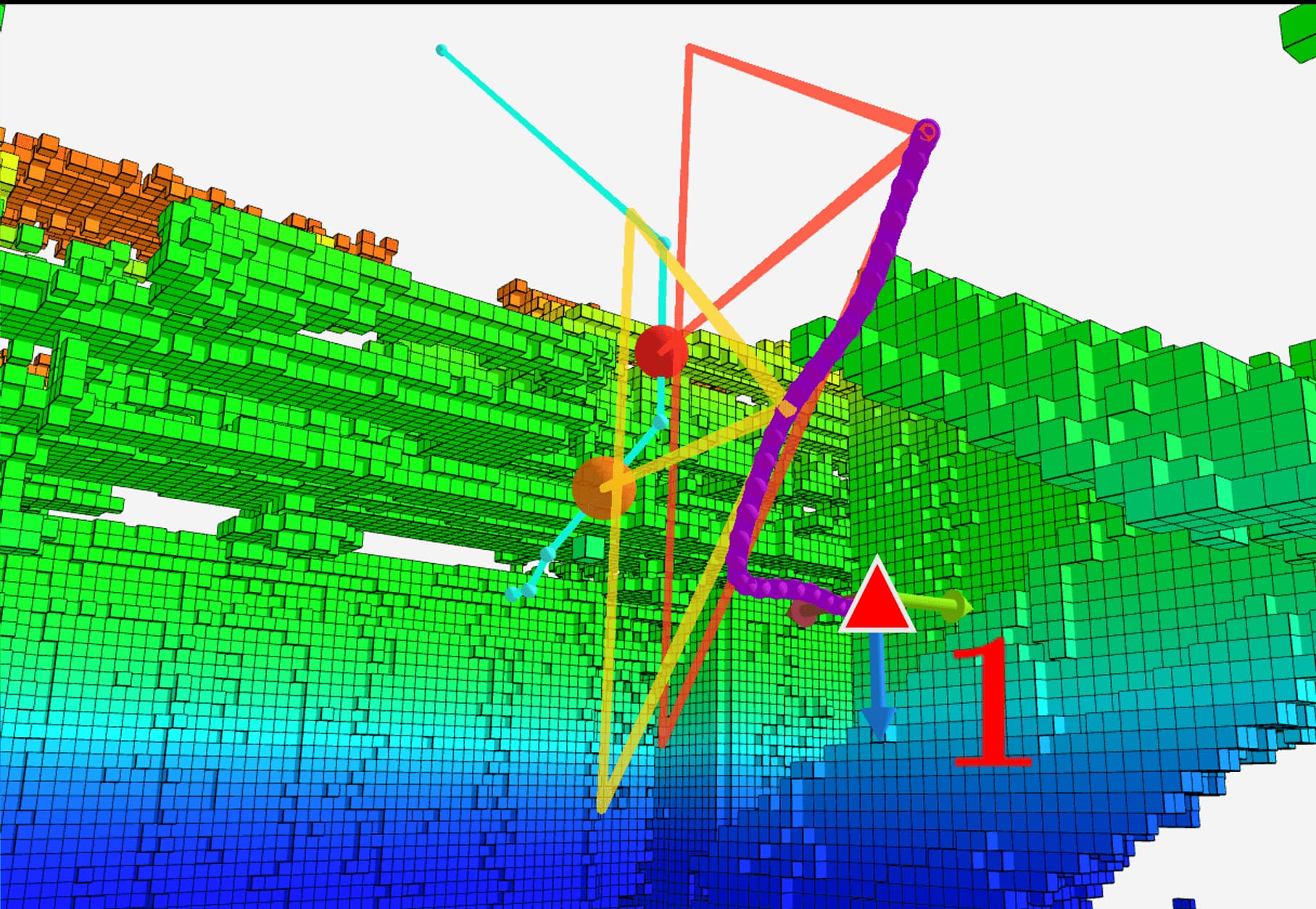}
        \caption{Clearing and preview planning.}
        \label{fig:real_robot_snapshot_1}
    \end{subfigure}
    \hfill
    \begin{subfigure}[t]{0.24\textwidth}
        \centering
        \includegraphics[
            trim={0cm 0cm 0cm 0cm},
            clip,
            width=\linewidth
        ]{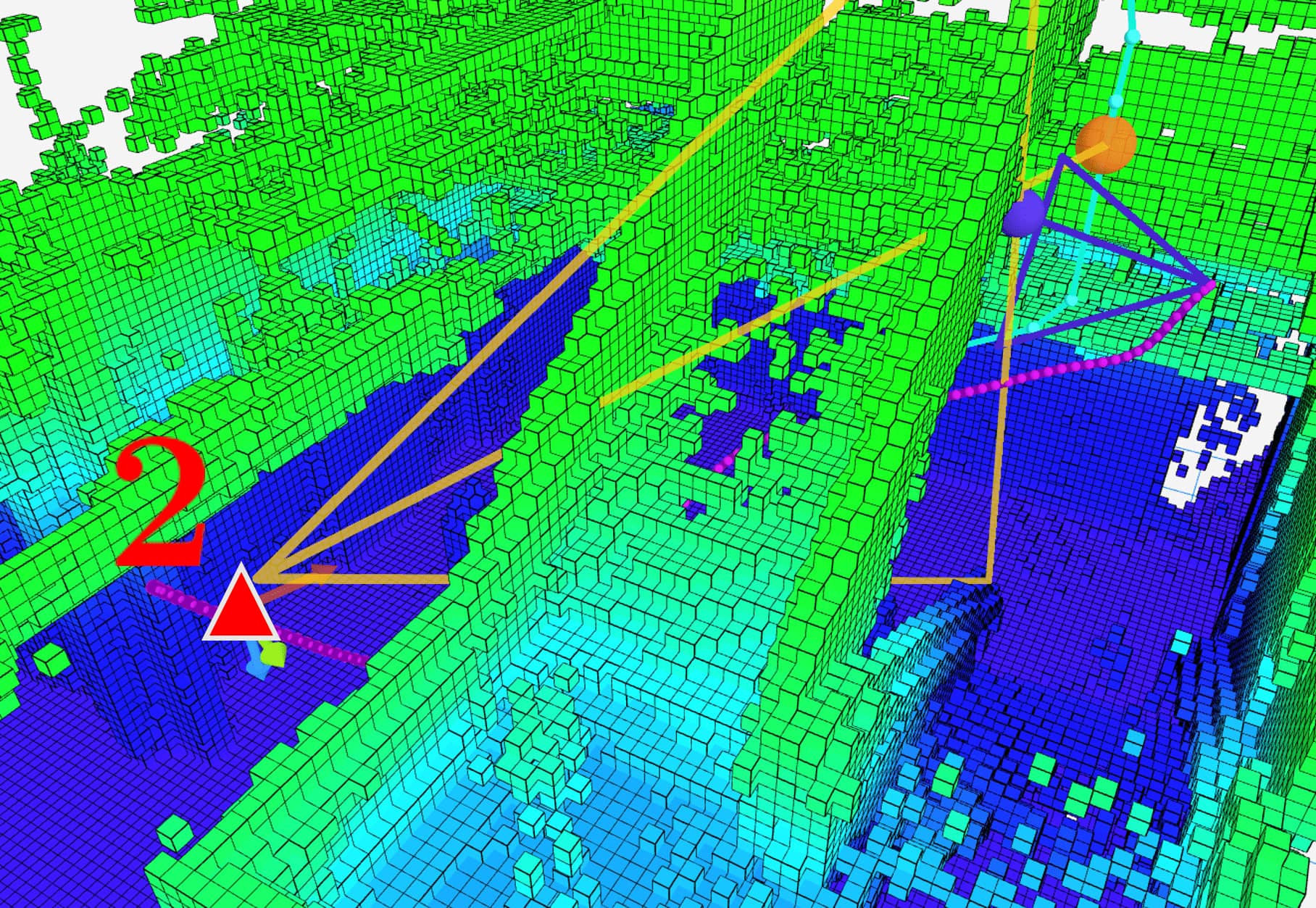}
        \caption{Unknown-observation fallback.}
        \label{fig:real_robot_snapshot_2}
    \end{subfigure}
    \hfill
    \begin{subfigure}[t]{0.24\textwidth}
        \centering
        \includegraphics[
            trim={0cm 0cm 0cm 0cm},
            clip,
            width=\linewidth
        ]{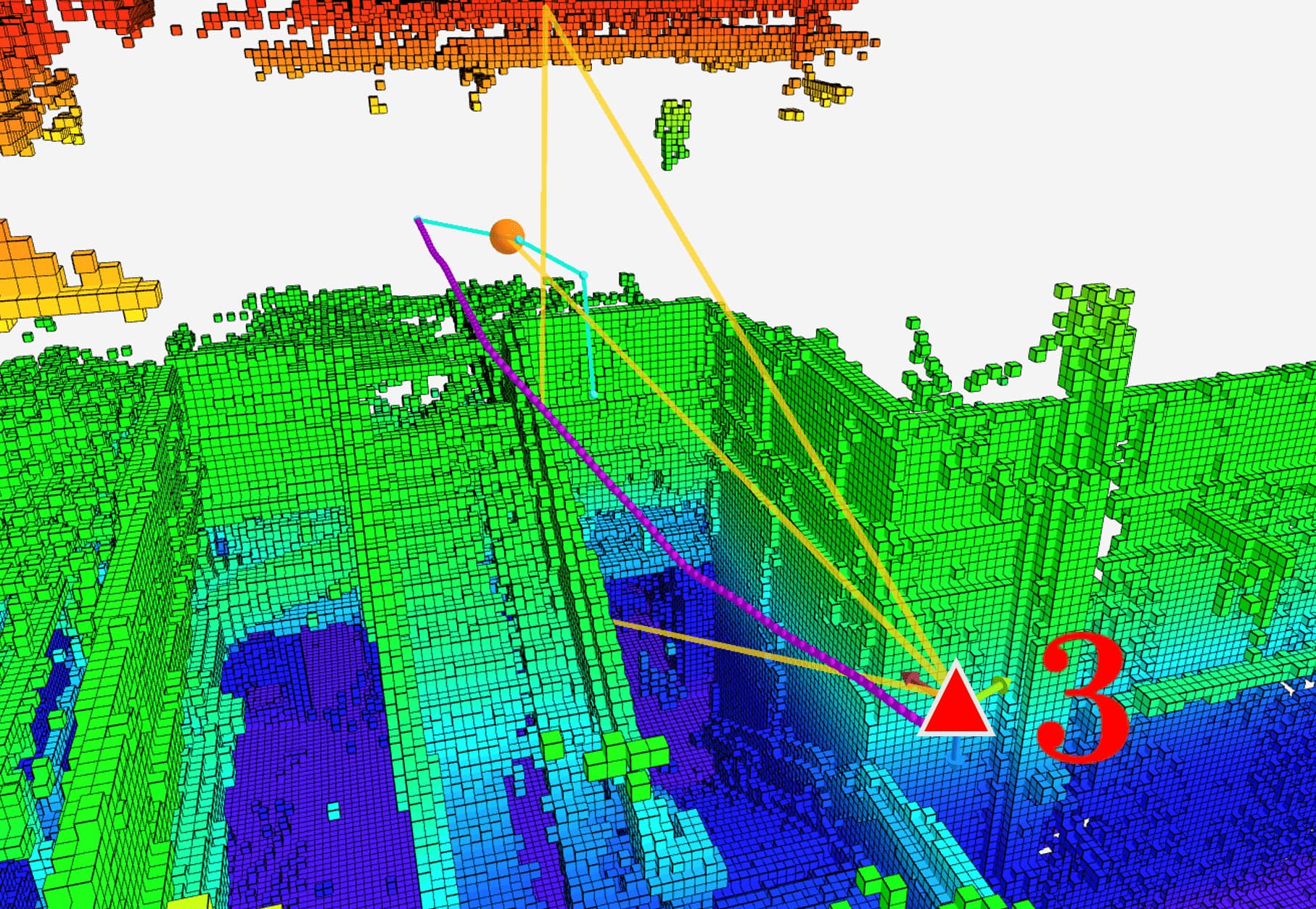}
        \caption{Certified route to the upper goal.}
        \label{fig:real_robot_snapshot_3}
    \end{subfigure}

    \vspace{0.15cm}

    \begin{subfigure}[t]{0.24\textwidth}
        \centering
        \includegraphics[
            trim={0cm 0cm 0cm 0cm},
            clip,
            width=\linewidth
        ]{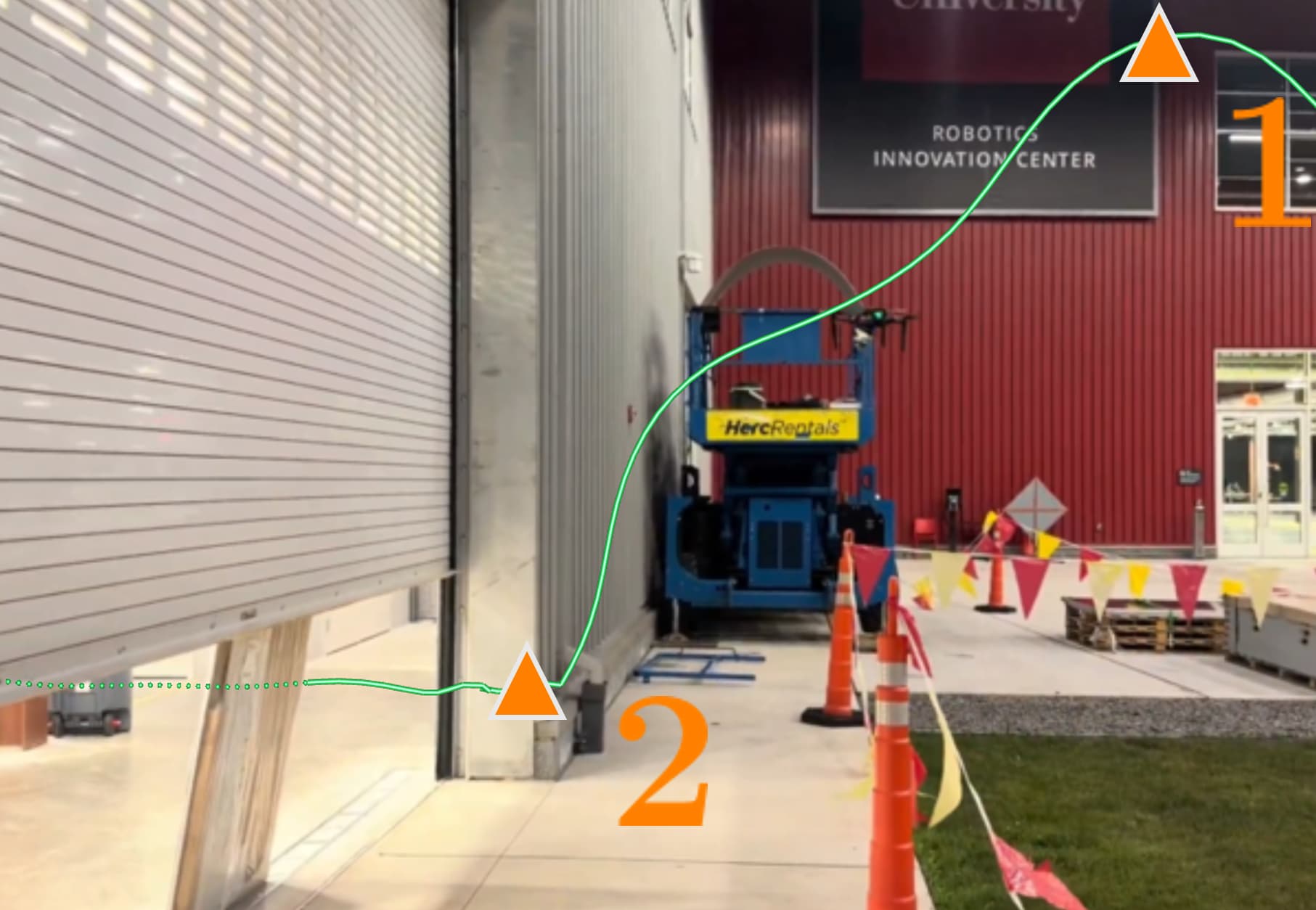}
        \caption{Low-clearance entrance.}
        \label{fig:real_robot_gate_overview}
    \end{subfigure}
    \hfill
    \begin{subfigure}[t]{0.24\textwidth}
        \centering
        \includegraphics[
            trim={0cm 0cm 0cm 0cm},
            clip,
            width=\linewidth
        ]{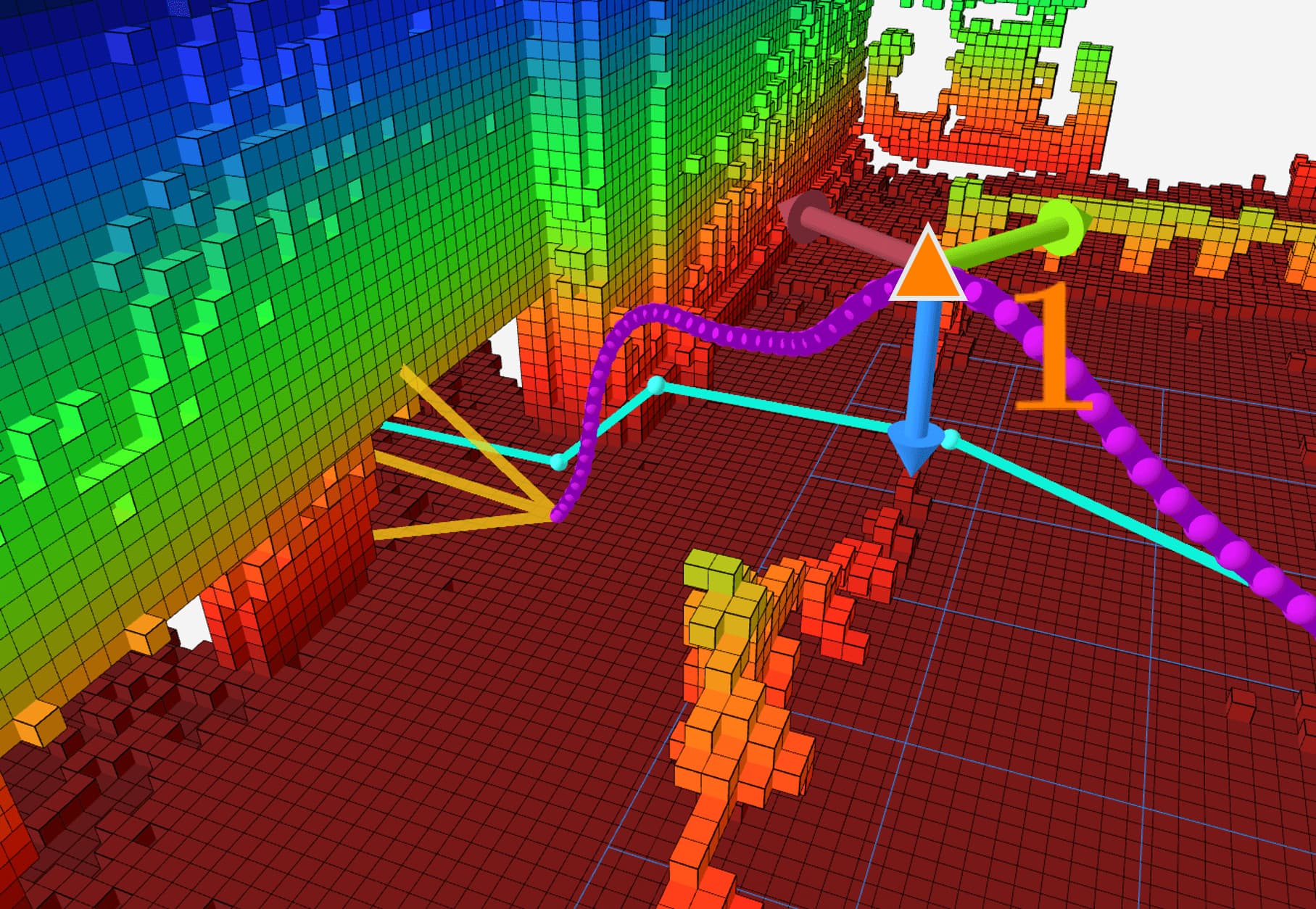}
        \caption{Clearing at the entrance.}
        \label{fig:real_robot_gate_snapshot_1}
    \end{subfigure}
    \hfill
    \begin{subfigure}[t]{0.24\textwidth}
        \centering
        \includegraphics[
            trim={0cm 0cm 0cm 0cm},
            clip,
            width=\linewidth
        ]{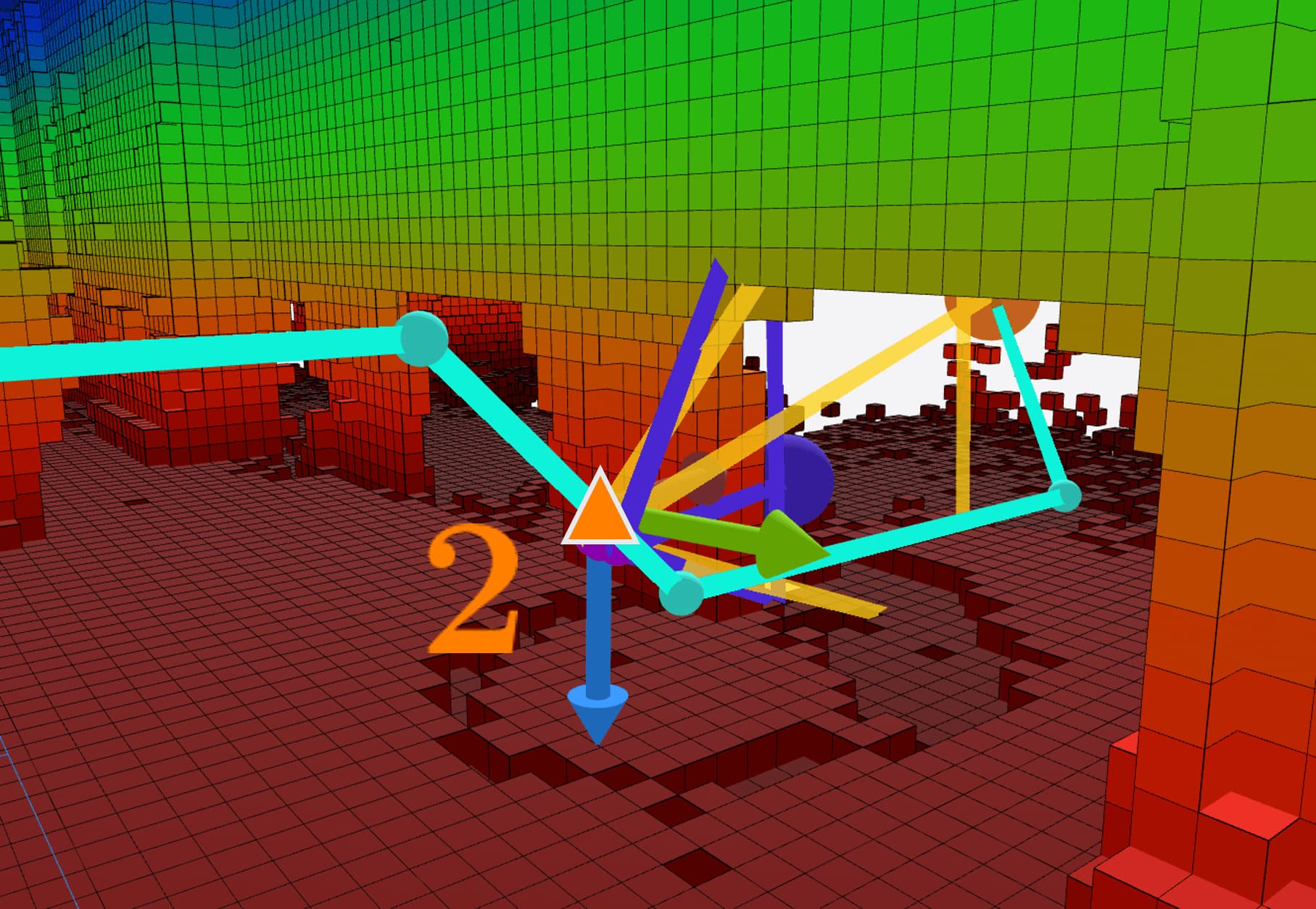}
        \caption{Unknown observation before entry.}
        \label{fig:real_robot_gate_snapshot_2}
    \end{subfigure}
    \hfill
    \begin{subfigure}[t]{0.24\textwidth}
        \centering
        \includegraphics[
            trim={0cm 0cm 0cm 0cm},
            clip,
            width=\linewidth
        ]{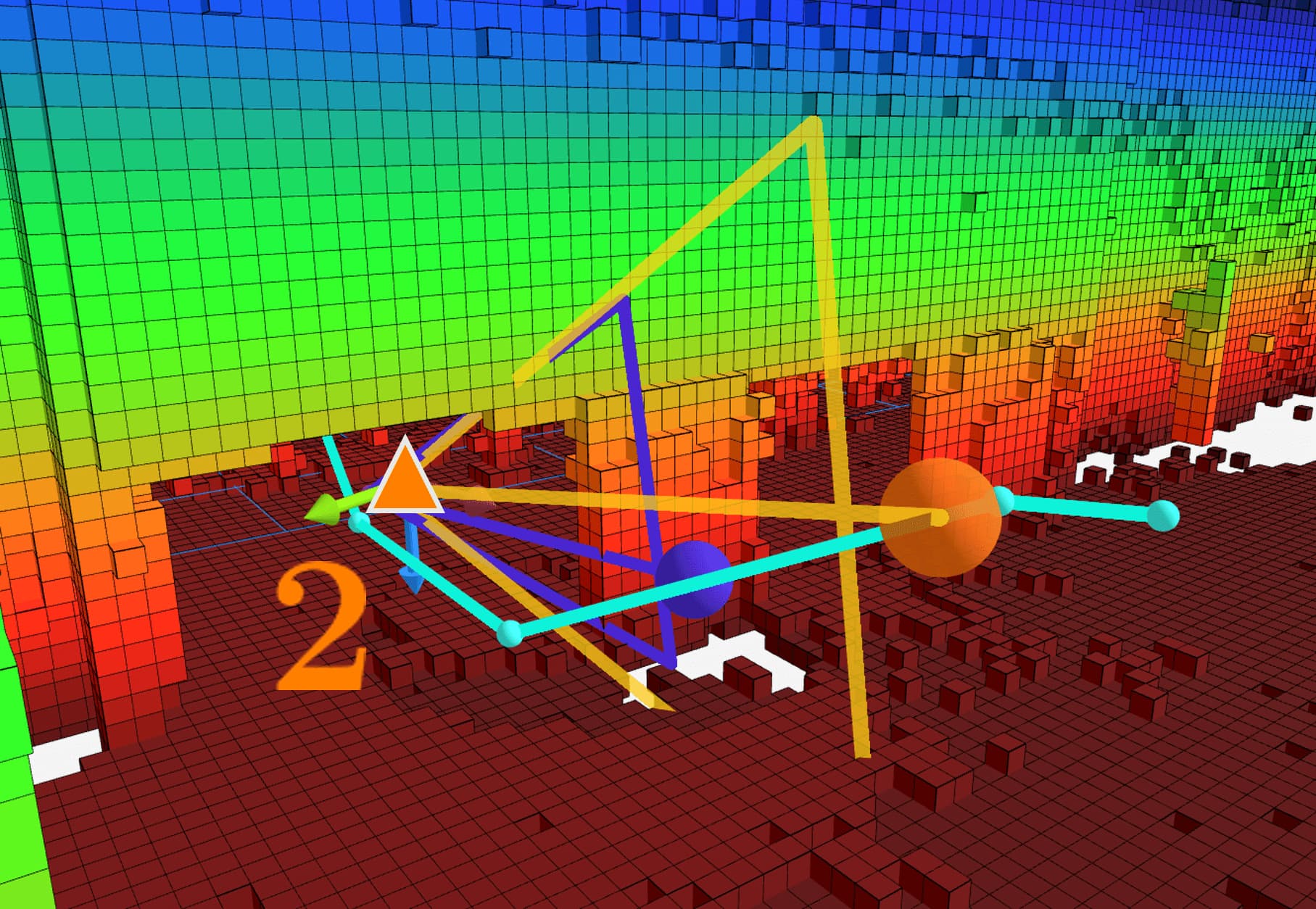}
        \caption{Opposite-side view of the traversal.}
        \label{fig:real_robot_gate_snapshot_3}
    \end{subfigure}

\caption{
Real-robot demonstrations in two representative scenarios.
The upper row shows a multi-level ascent in a factory environment, and the
lower row shows an outdoor-to-indoor transition through a low-clearance
opening. Cyan denotes the optimistic guidance path, deep magenta denotes the
planned trajectory, and yellow, red, and dark-purple markers denote clearing,
preview, and unknown-observation FOVs and targets, respectively. Each row
shows the complete executed path and representative online replanning states.
}
    \label{fig:real_robot_demo}
    \vspace{-0.3cm}
\end{figure*}

\begin{figure}[t]
    \centering

    \begin{subfigure}[t]{0.4814\columnwidth}
        \centering
        \includegraphics[
            trim={0cm 0cm 0cm 0cm},
            clip,
            width=\linewidth
        ]{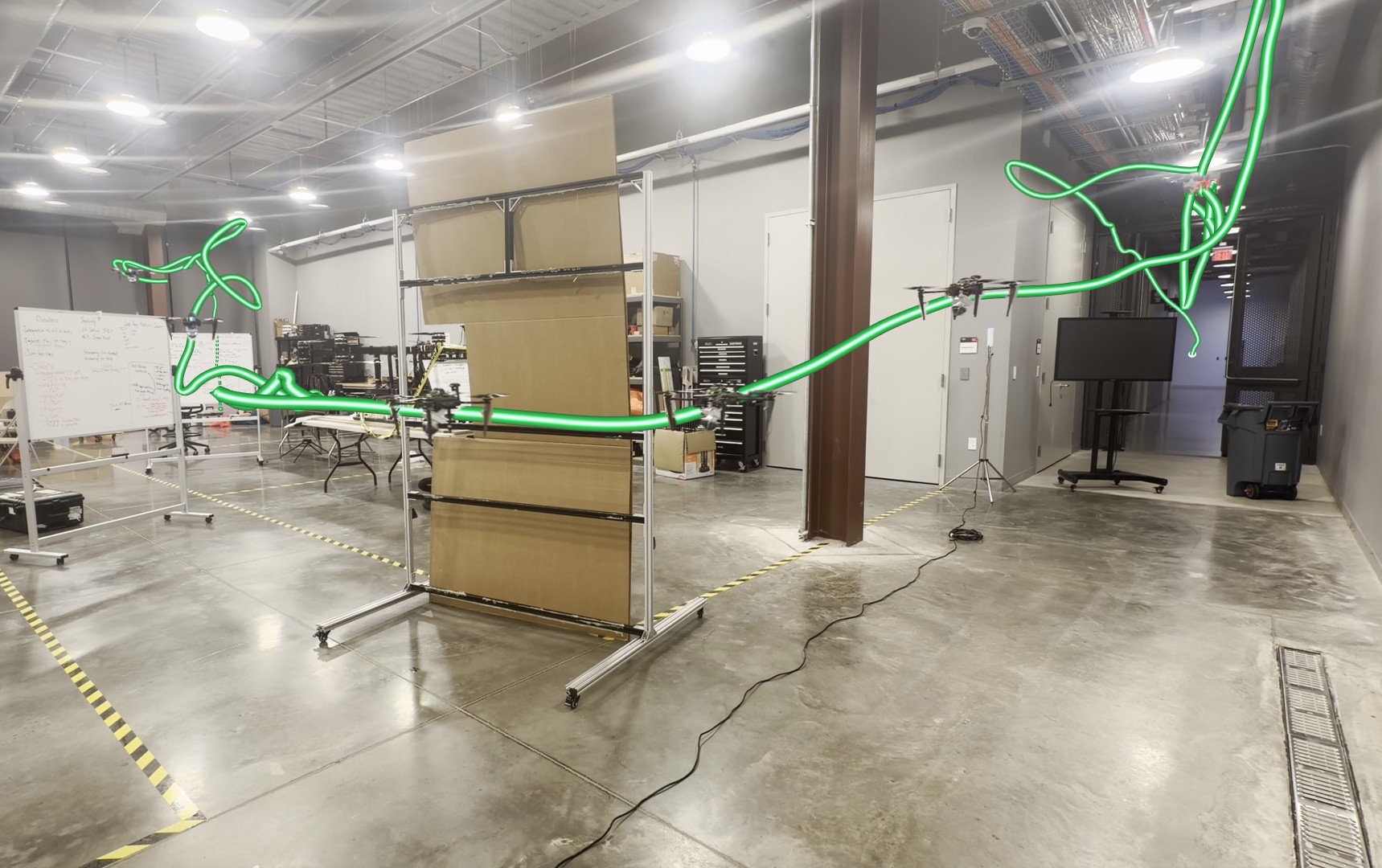}
        \caption{}
        \label{fig:ric_2d_photo}
    \end{subfigure}
    \hfill
    \begin{subfigure}[t]{0.4786\columnwidth}
        \centering
        \includegraphics[
            trim={0cm 0cm 0cm 0cm},
            clip,
            width=\linewidth
        ]{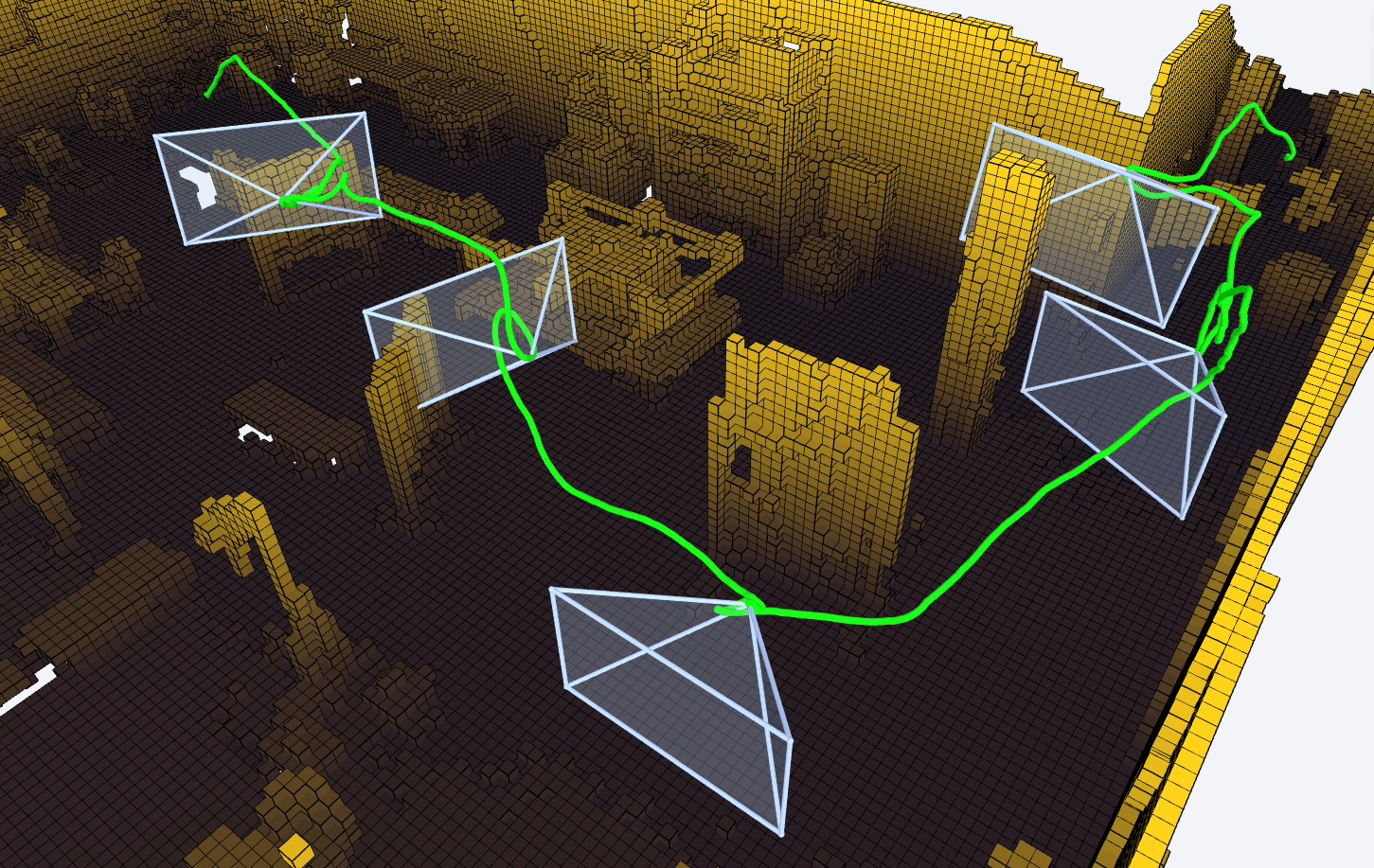}
        \caption{}
        \label{fig:ric_2d_rviz}
    \end{subfigure}

    \vspace{0.5em}

    \begin{subfigure}[t]{0.2249\columnwidth}
        \centering
        \includegraphics[
            trim={0cm 0cm 0cm 0cm},
            clip,
            width=\linewidth
        ]{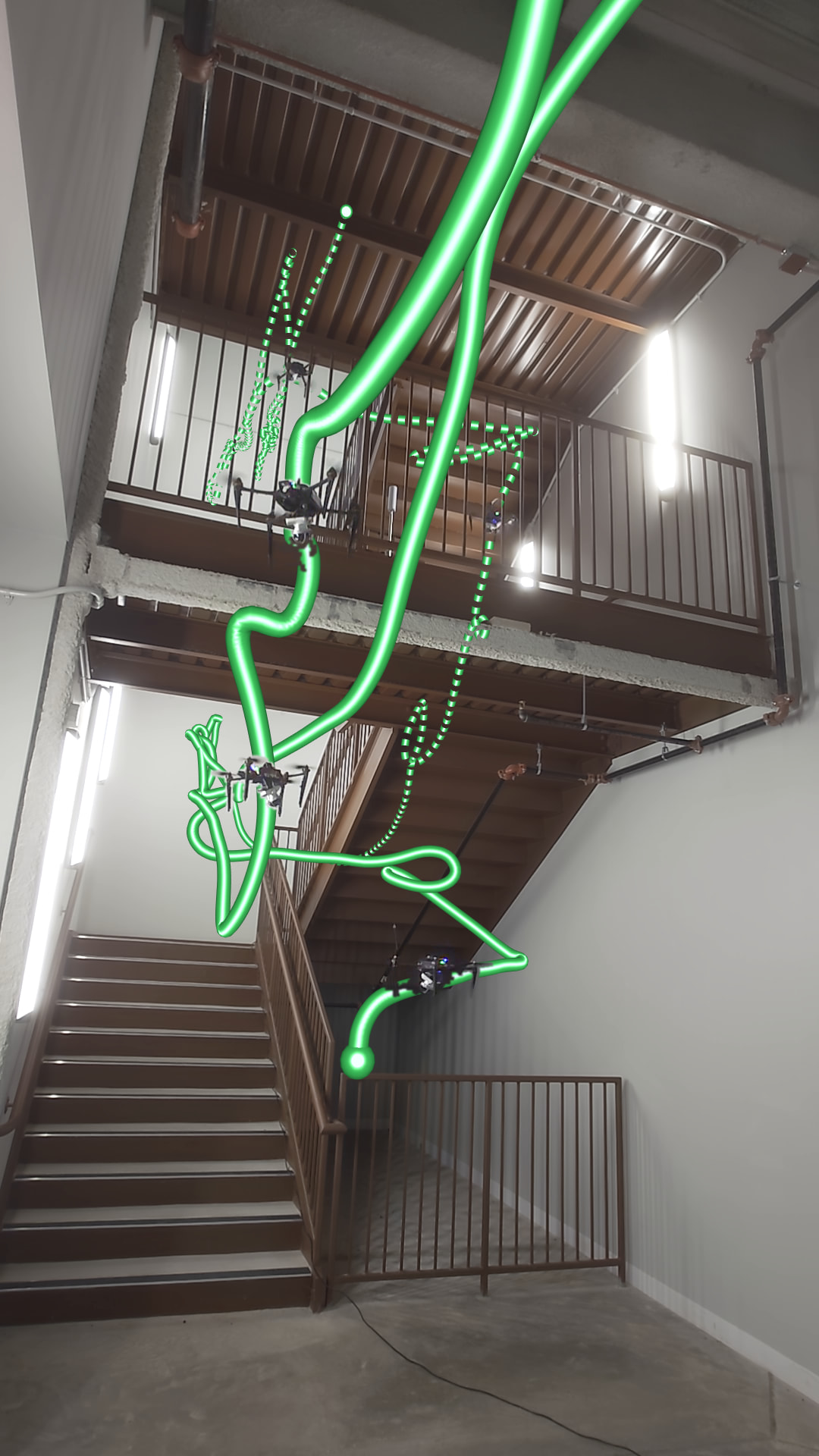}
        \caption{}
        \label{fig:stairwell_down}
    \end{subfigure}
    \hfill
    \begin{subfigure}[t]{0.3651\columnwidth}
        \centering
        \includegraphics[
            trim={0cm 0cm 0cm 0cm},
            clip,
            width=\linewidth
        ]{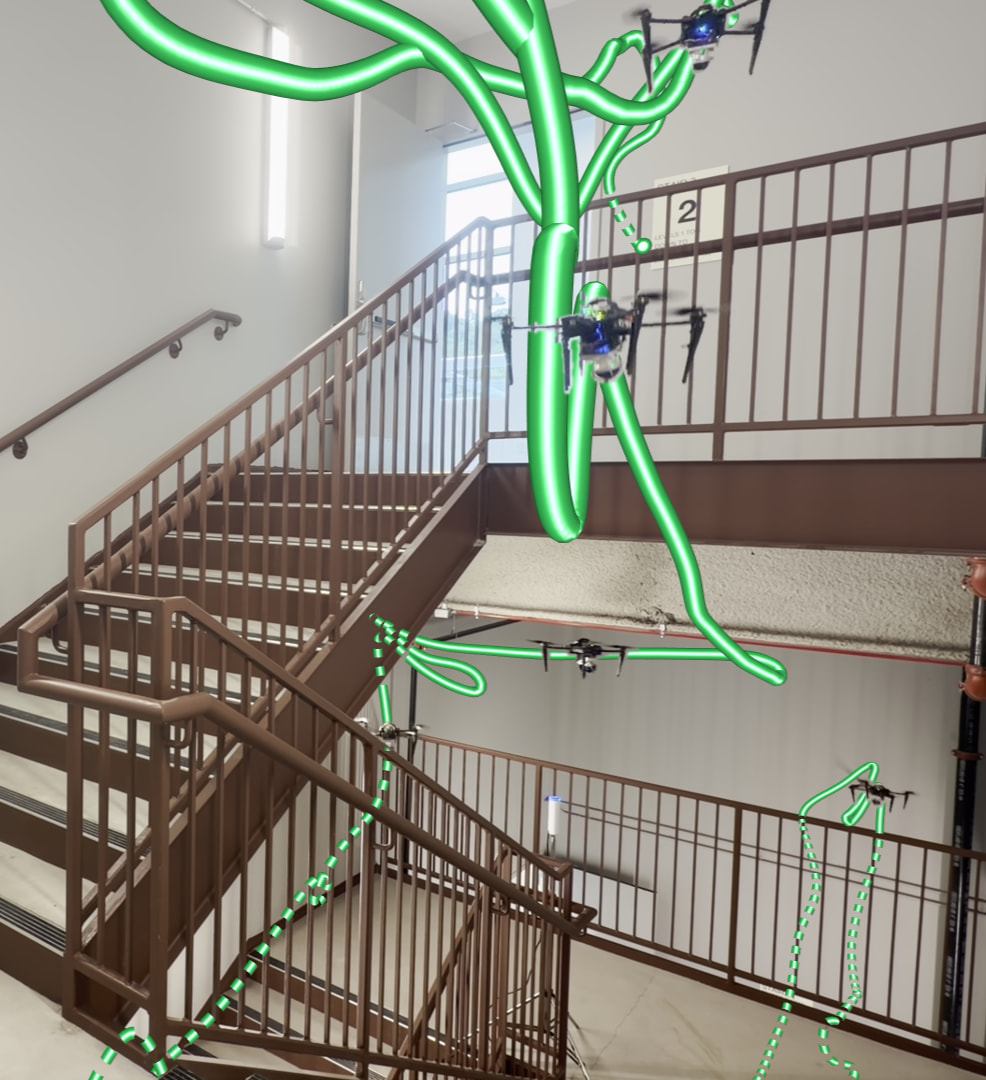}
        \caption{}
        \label{fig:stairwell_up}
    \end{subfigure}
    \hfill
    \begin{subfigure}[t]{0.3700\columnwidth}
        \centering
        \includegraphics[
            trim={0cm 0cm 0cm 0cm},
            clip,
            width=\linewidth
        ]{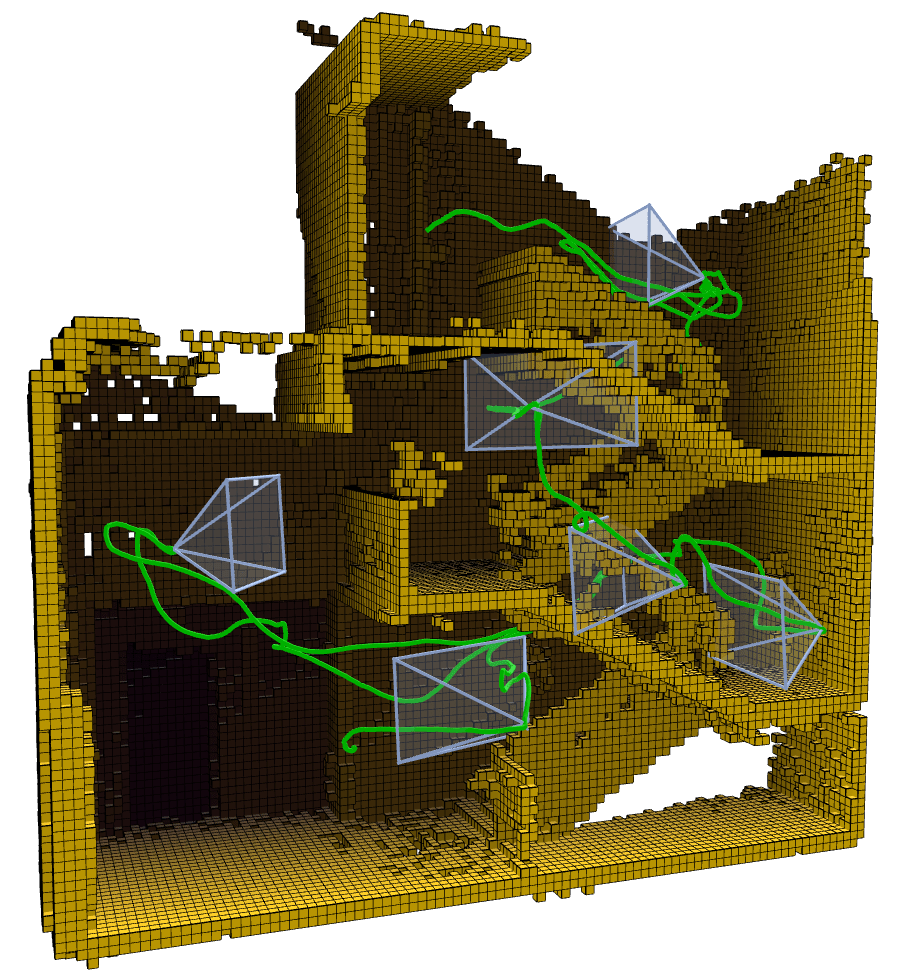}
        \caption{}
        \label{fig:stairwell_rviz}
    \end{subfigure}

    \caption{
            Additional real-robot demonstrations of SCOPE.
            (\subref{fig:ric_2d_photo}--\subref{fig:ric_2d_rviz})
            demonstrates navigation over obstacles in a cluttered indoor environment,
            while (\subref{fig:stairwell_down}--\subref{fig:stairwell_rviz})
            demonstrates multi-level ascent through a narrow stairwell using the limited onboard planning FOV.
            }
    \label{fig:ric_experiments}
    \vspace{-1.0em}
\end{figure}

\section{Conclusion}
\label{sec:conclusion}

This paper presented \PlannerName, a field-of-view-aware framework for
safe goal-directed quadrotor navigation in unknown 3D
environments. By formulating navigation as online
safety-volume certification, \PlannerNameSpaced separates optimistic progress
from certified execution and actively observes the specific
robot-inflated volume that blocks further motion. Target-centric
viewpoint search, recursive subgoals, unknown-voxel observation,
and certified preview together enable safe progress through
vertical and partially observed passages.

In simulation, both \PlannerNameSpaced variants reached all $100$ goals while
maintaining near-zero entry into non-certified inflated space,
and preview reduced mean mission time by 27\%. Four real-robot
demonstrations further validated the complete mapping,
planning, trajectory-generation, and control pipeline. The
formal certification guarantee applies directly to the search
output; future work will further tighten its preservation through
continuous trajectory realization and extend the framework to
more general sensing and dynamic environments.

\appendices
\section{Proof of Completeness}
\label{app:completeness}

This appendix proves Theorem~\ref{thm:completeness}. We first show that the
exclusion memory stores only sound failure certificates and that these
certificates remain valid as the blocked set grows. We then prove finite
recursive progress for a fixed map, and finally combine this result with
monotone sensing progress and Assumption~(A1).

\subsubsection{Fixed-Map Analysis Setting}
\label{sec:completeness-model}

We use the fixed-map setting, Assumptions~(A1)--(A2), and
Property~(P1) from Sec.~\ref{sec:theory-completeness}. During an episode
$e$, the planner searches a finite induced domain $D_e\subseteq V$; let
$H_e$ be the finite set of hitpoints that can arise in this domain. The
context-local sets $Q_{\mathrm{rej}}(S)$ and $H_{\mathrm{exh}}(S)$ start
empty, grow monotonically while the map is fixed, and are invalidated by a
planner-relevant map update. Thus, certificates are reused only under the
belief for which they were derived.

\subsubsection{Soundness of the Exclusion Memory}
\label{sec:nogood-soundness}

Fix an episode. For an observation target $\tau$, let
$Q_{\mathrm{cert}}(\tau)$ and $Q_{\mathrm{cont}}(\tau)$ denote the full
candidate families generated by $\Cand(\mathbf h,\tau;\emptyset)$. These
families are fixed during the episode and independent of the context blocked
set, because candidate expansion uses $G^{\mathrm{opt}}$ with only $O^+$
blocked.

For $B\subseteq V$, let $\mathcal P_B(\mathbf v)$ be the optimistic guidance
paths from the current execution start to $\mathbf v$ that avoid $B$, and
let $\mathcal P_B^{\mathrm c}$ and $\mathcal P_B^{\mathrm u}$ denote its
certified and non-certified subsets. Define the planning-time failure
certificates
\begin{align*}
\Fail_B(\mathbf v)
&:\Leftrightarrow
\mathcal P_B^{\mathrm c}(\mathbf v)=\emptyset\\[-0.2em]
&\qquad\wedge
\forall\Pi\in\mathcal P_B^{\mathrm u}(\mathbf v):
\Exh_B(\Hit(\Pi)),\\
\Exh_B(\mathbf h)
&:\Leftrightarrow
\Refut_B(\mathcal I_{\mathcal R}(\{\mathbf h\});\mathbf h)\\[-0.2em]
&\qquad\wedge
\exists\mathbf u\in\Omega(\mathbf h):
\Refut_B(\{\mathbf u\};\mathbf h),\\
\Refut_B(\tau;\mathbf h)
&:\Leftrightarrow
Q_{\mathrm{cert}}(\tau)=\emptyset\\[-0.2em]
&\qquad\wedge
\forall q\in Q_{\mathrm{cont}}(\tau):
\Fail_{B\cup\{\mathbf h\}}(\mathbf v_q).
\end{align*}
Here $\Fail_B$ means that a subgoal cannot be made certified-reachable,
$\Refut_B$ means that no candidate can realize an observation target, and
$\Exh_B$ means that a hitpoint cannot be cleared. Blocking $\mathbf h$ in
$\Refut_B$ prevents circularly reaching a viewpoint through the hitpoint it
is meant to clear. One refuted voxel in $\Omega(\mathbf h)$ is sufficient
for exhaustion because clearing $\mathbf h$ requires resolving every voxel
in that set. These are search certificates, not failed physical trials.

The predicates $\Exh_B(\mathbf h)$ and
$\Refut_B(\cdot;\mathbf h)$ are invoked only for $\mathbf h\notin B$.
Their recursion is well founded because each descent replaces $B$ by
$B\cup\{\mathbf h\}$ within the finite set $H_e$. By Property~(P1) and the
control flow of Algorithm~\ref{alg:runtime_pipeline}, entries are added to
$Q_{\mathrm{rej}}$ and $H_{\mathrm{exh}}$ only after the corresponding
$\Fail$ and $\Exh$ certificates have been established.

\begin{lemma}[Monotonicity of failure certificates]
\label{lem:nogood-monotone}
Within a fixed-map episode, if $B\subseteq B'$, then
$\Fail_B(\mathbf v)\Rightarrow\Fail_{B'}(\mathbf v)$. For
$\mathbf h\notin B'$,
$\Exh_B(\mathbf h)\Rightarrow\Exh_{B'}(\mathbf h)$ and
$\Refut_B(\tau;\mathbf h)\Rightarrow\Refut_{B'}(\tau;\mathbf h)$.
\end{lemma}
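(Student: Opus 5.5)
Since the three predicates are defined by mutual recursion, I would prove all three implications simultaneously by well-founded induction. The measure is $|H_e\setminus B|$, the number of not-yet-blocked hitpoints of the episode; this is the measure that makes the recursion well founded, since each descent replaces $B$ by $B\cup\{\mathbf h\}$ with $\mathbf h\notin B$. Concretely, the induction hypothesis is: for every pair $B\subseteq B'$ with $|H_e\setminus B|<k$, all three implications hold. I then prove them for $|H_e\setminus B|=k$. The inputs are two structural facts fixed within the episode. First, the candidate families $Q_{\mathrm{cert}}(\tau)$ and $Q_{\mathrm{cont}}(\tau)$ are independent of the blocked set, as noted above. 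Second, the path sets are antitone in $B$: $\mathcal P_{B'}(\mathbf v)\subseteq\mathcal P_B(\mathbf v)$, and likewise for the certified and non-certified subsets. This holds because a path avoiding $B'$ also avoids $B$, while certified status and the hitpoint $\Hit(\Pi)$ depend only on $\Pi$ and the fixed map, not on $B$.

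\emph{$\Refut$.} Suppose $\Refut_B(\tau;\mathbf h)$ holds and $\mathbf h\notin B'$. The condition $Q_{\mathrm{cert}}(\tau)=\emptyset$ carries over unchanged. For each $q\in Q_{\mathrm{cont}}(\tau)$ we know $\Fail_{B\cup\{\mathbf h\}}(\mathbf v_q)$. Since $B\cup\{\mathbf h\}\subseteq B'\cup\{\mathbf h\}$ and $\mathbf h\notin B$, the set $B\cup\{\mathbf h\}$ has strictly smaller measure, so the induction hypothesis for $\Fail$ gives $\Fail_{B'\cup\{\mathbf h\}}(\mathbf v_q)$.

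\emph{$\Exh$.} This follows directly from the $\Refut$ case, applied to the volume-clearing target and to the same witness $\mathbf u\in\Omega(\mathbf h)$. Note that $\Omega(\mathbf h)$ depends only on the map.

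\emph{$\Fail$.} Suppose $\Fail_B(\mathbf v)$ holds. Emptiness of the certified paths transfers by antitonicity. Now take $\Pi\in\mathcal P^{\mathrm u}_{B'}(\mathbf v)$. Then $\Pi\in\mathcal P^{\mathrm u}_B(\mathbf v)$, so $\Exh_B(\Hit(\Pi))$ holds. Let $\mathbf h=\Hit(\Pi)$. Because $\Pi$ avoids $B'$ and $\mathbf h$ lies on $\Pi$, we get $\mathbf h\notin B'$. So the side condition required for the $\Exh$ step is met automatically, and $\Exh_{B'}(\mathbf h)$ follows.

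The main obstacle is ordering the induction correctly. The $\Fail$ case invokes $\Exh$ at the same $B$, so no measure decrease occurs at that step; the decrease happens only inside $\Refut$. I would therefore order the argument within a fixed measure level $k$ as follows. First prove $\Refut$, which uses only the hypothesis at the smaller level via $\Fail$. Then derive $\Exh$ from $\Refut$. Finally derive $\Fail$ from $\Exh$. If a lexicographic measure is cleaner, I would use (measure, predicate rank) with $\Refut<\Exh<\Fail$.

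I must also check the base case: when $H_e\setminus B=\emptyset$, $\Refut_B(\cdot;\mathbf h)$ is never invoked with $\mathbf h\notin B$, so the claim is vacuous. A second point to check is whether hitpoints of paths in $\mathcal P_B$ all lie in $H_e$. I would argue this from the definition of $H_e$ as the hitpoints that can arise in the finite domain $D_e$.
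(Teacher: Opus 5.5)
Your proposal is correct and matches the paper's proof: induction on $|H_e\setminus B|$ in the order \Refut, \Exh, \Fail{}, with the measure dropping only inside \Refut{} (through $\Fail_{B\cup\{\mathbf h\}}$), and with the side condition $\mathbf h\notin B'$ in the \Fail{} step supplied by the fact that the path avoids $B'$. Your additional checks fill in details the paper leaves implicit: antitonicity of the path sets, independence of $\Hit$ and of the candidate families from $B$, the base case, and membership of hitpoints in $H_e$.
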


\begin{proof}
Use induction on $|H_e\setminus B|$, in the order $\Refut$, $\Exh$,
$\Fail$. For $\Refut$, the certified-candidate clause is independent of
$B$, and the induction hypothesis applies to every
$\Fail_{B\cup\{\mathbf h\}}$. The $\Exh$ claim follows immediately from
the two $\Refut$ clauses. Finally, every path avoiding $B'$ also avoids
$B$; $\Fail_B$ therefore supplies an exhausted first hitpoint, which remains
exhausted under $B'$ by the preceding case. The hitpoint is outside $B'$
because the path avoids $B'$.
\end{proof}

\begin{corollary}[Sound reuse and transfer of the exclusion memory]
\label{cor:nogood-reuse}
Descendants may inherit $Q_{\mathrm{rej}}(S)$ and
$H_{\mathrm{exh}}(S)$. If
$S_+=\Child(S,q,\mathbf h,\tau)$ returns \texttt{FAIL}, its complete
exclusion memory may also be adopted by any context whose blocked set
contains $H_{\mathrm{anc}}(S)\cup\{\mathbf h\}$, including sibling
children and the parent after $\mathbf h\in H_{\mathrm{exh}}(S)$.
\end{corollary}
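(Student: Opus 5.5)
The corollary follows by applying Lemma~\ref{lem:nogood-monotone} to each stored entry, once we know which blocked set that entry's certificate was derived under. The plan has two steps: identify the blocked set of each entry, then check that the receiving context's blocked set contains it.

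\emph{Inheritance by descendants.} Every entry of $Q_{\mathrm{rej}}(S)$ and $H_{\mathrm{exh}}(S)$ was added only after a $\Fail_{B}$ or $\Exh_{B}$ certificate had been established. Here $B$ is the blocked set in force in $S$ at insertion time (property~(P1) and the control flow of Algorithm~\ref{alg:runtime_pipeline}). That set is contained in $B(S)$, since $B(S)$ only grows during the episode. A descendant $S'$ satisfies
\[
H_{\mathrm{anc}}(S')\supseteq H_{\mathrm{anc}}(S)\cup\{\mathbf h\},
\]
and it inherits $H_{\mathrm{exh}}(S)$, so $B(S)\subseteq B(S')$. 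Lemma~\ref{lem:nogood-monotone} then turns $\Fail_{B}(\mathbf v)$ into $\Fail_{B(S')}(\mathbf v)$. For exhausted hitpoints, the lemma needs $\mathbf h'\notin B(S')$. If $\mathbf h'$ already lies in $B(S')$, it is blocked anyway, and inheriting it changes nothing.

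\emph{Transfer from a failed child.} Consider a context $S_+=\Child(S,q,\mathbf h,\tau)$ that returns \texttt{FAIL}. Every certificate it produced was derived under some blocked set contained in $B(S_+)$ at that moment. That set is $H_{\mathrm{anc}}(S)\cup\{\mathbf h\}$ together with exhausted hitpoints; those hitpoints were either inherited from $S$ or themselves certified under the same ancestry. I would prove, by induction on the order of insertion, that each recorded certificate holds relative to
\[
D:=H_{\mathrm{anc}}(S)\cup\{\mathbf h\}\cup E,
\]
where $E$ is the set of exhausted hitpoints recorded so far. Each such hitpoint carries its own certificate relative to $D$, so blocking it is justified. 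For a target context $T$ whose blocked set contains $H_{\mathrm{anc}}(S)\cup\{\mathbf h\}$, adopting the whole memory means $B(T)$ absorbs $E$. The containment $D\subseteq B(T)$ then holds, and monotonicity transfers every certificate. Two cases are covered. Sibling children contain the required set by construction of $\Child$. The parent contains it once $\mathbf h\in H_{\mathrm{exh}}(S)$.

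\emph{Main obstacle.} The delicate point is this joint adoption. The child's exhausted hitpoints were certified partly by blocking one another and the parent hitpoint, so the adopted set must be shown self-consistent rather than circular. I would handle this with a well-founded argument on insertion order: each certificate depends only on blocked hitpoints inserted earlier or on $H_{\mathrm{anc}}(S)\cup\{\mathbf h\}$. Adopting entries in insertion order then validates each entry from already-valid predecessors, with the base dependency supplied by the containment hypothesis on $T$. Every certificate is also only considered within the current episode, so a map update invalidates the memory and nothing is reused under a different belief.
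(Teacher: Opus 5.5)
Your descendant-inheritance step and your insertion-order adoption of the child's exhausted hitpoints match the paper's argument. The transfer of $Q_{\mathrm{rej}}(S_+)$, however, has a gap. Suppose $S_+$ rejects a viewpoint $\mathbf v_{q'}$ while resolving its own active hitpoint $\mathbf h_1$. That rejection is certified only as $\Fail_{B(S_+)\cup\{\mathbf h_1\}}(\mathbf v_{q'})$: this is the $\Refut$ clause, and the grandchild blocks $\mathbf h_1$ through its ancestor set. Moreover, $\mathbf h_1\notin B(S_+)$ at that moment, because guidance in $S_+$ has just routed through it. Two of your claims are therefore false for rejections: that every certificate of $S_+$ was derived under a blocked set contained in $B(S_+)$, and that each recorded certificate holds relative to $D=H_{\mathrm{anc}}(S)\cup\{\mathbf h\}\cup E$. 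Indeed, $\mathbf v_{q'}$ may well be optimistically reachable through $\mathbf h_1$. Your insertion-order induction also breaks here. $\mathbf h_1$ enters $H_{\mathrm{exh}}(S_+)$ only \emph{after} the rejections made on its behalf, so a rejection depends on a later insertion and is not validated by adopting entries in insertion order. Your descendant paragraph has the same slip, writing $\Fail_B$ instead of $\Fail_{B\cup\{\mathbf h\}}$. There it is harmless, because a descendant blocks $\mathbf h$ through $H_{\mathrm{anc}}(S')$ and earlier exhausted active hitpoints of $S$ through the inherited $H_{\mathrm{exh}}(S)$.

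The missing step is exactly where the hypothesis that $S_+$ returns \texttt{FAIL} is needed, and your argument never uses it. In Algorithm~\ref{alg:runtime_pipeline}, \texttt{FAIL} is returned only by a failed $\Guide$ call at the top of the outer loop. The inner loop is exited without returning only through the \texttt{break} that follows inserting the active hitpoint into $H_{\mathrm{exh}}$. Hence, when $S_+$ fails, every hitpoint it ever made active lies in the final $H_{\mathrm{exh}}(S_+)$. The correct order is to adopt all of $H_{\mathrm{exh}}(S_+)$ first, successively over $H_{\mathrm{anc}}(S_+)$ as you describe. Only then adopt $Q_{\mathrm{rej}}(S_+)$, whose dependency sets (blocked set plus active hitpoint) are by then contained in the receiver; this is what the paper's last two sentences do. Without the \texttt{FAIL} hypothesis the transfer would be unsound, since a rejection made for a still-open hitpoint cannot be moved to a context that does not block that hitpoint.
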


\begin{proof}
Associate each certificate $e$ with the blocked set $\Dep(e)$ under which it
was derived. Lemma~\ref{lem:nogood-monotone} permits reuse whenever the
receiving blocked set contains $\Dep(e)$, which immediately proves descendant
inheritance.

For child-to-context transfer, list the child's exhausted hitpoints
$\mathbf h_1,\ldots,\mathbf h_m$ in insertion order. Their dependencies
satisfy
\[
\Dep(\mathbf h_k)\subseteq
H_{\mathrm{anc}}(S_+)\cup\{\mathbf h_1,\ldots,\mathbf h_{k-1}\}.
\]
A receiver containing $H_{\mathrm{anc}}(S_+)=
H_{\mathrm{anc}}(S)\cup\{\mathbf h\}$ may therefore adopt these
certificates successively. Each viewpoint rejection additionally depends on
its active hitpoint; because the child returned \texttt{FAIL}, every such
hitpoint is among the adopted exhausted hitpoints. The receiver then contains
the dependency set of every viewpoint rejection and may adopt
$Q_{\mathrm{rej}}(S_+)$ as well.
\end{proof}

\subsubsection{Finite Fixed-Map Search Progress}

\begin{lemma}[Finite fixed-map search progress]
\label{lem:fixed-map-exhaustion}
Every invocation of Algorithm~\ref{alg:runtime_pipeline} in a fixed-map
episode resolves after finite planning work by returning a sensing action or
certified terminal path, or by propagating \texttt{FAIL}. While an
observation obligation remains active, every generated non-rejected
contaminated candidate is processed unless an earlier candidate succeeds.
\end{lemma}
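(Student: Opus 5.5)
I would prove Lemma~\ref{lem:fixed-map-exhaustion} by well-founded induction on a finite measure attached to each context, using only the finiteness of the episode domain $D_e$ and of $H_e$ together with Property~(P1).

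\emph{Measure.} For a context $S$ I take the pair
\[
\mu(S)=\bigl(|H_e\setminus H_{\mathrm{anc}}(S)|,\ |H_e\setminus H_{\mathrm{exh}}(S)|+|D_e\setminus Q_{\mathrm{rej}}(S)|\bigr),
\]
ordered lexicographically. This order is well founded since both coordinates are bounded nonnegative integers.

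\emph{Base facts.} By (P1), each single call to $\Guide$, $\Cand$ or $\Conn$ terminates, and $\Cand$ returns finite families, since candidates are poses over the finite domain $D_e$ with yaw determined by the target. A call to $\Hit$ ray-marches a finite path, so it also terminates. Hence every step of Algorithm~\ref{alg:runtime_pipeline} that is not a recursive call performs finite work.

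\emph{Recursive calls.} A recursive call is made from $S$ on $S_+=\Child(S,q,\mathbf h,\tau)$, where $\mathbf h=\Hit(\Pi_g)$ and $\Pi_g$ avoids $B(S)\supseteq H_{\mathrm{anc}}(S)$. Therefore $\mathbf h\notin H_{\mathrm{anc}}(S)$, and $H_{\mathrm{anc}}(S_+)$ is strictly larger than $H_{\mathrm{anc}}(S)$, so the first coordinate of $\mu$ drops. The recursion depth is thus at most $|H_e|$. By the induction hypothesis each child call resolves in finite work, returning an action or \texttt{FAIL}.

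\emph{Loops at a fixed context.} This is the main obstacle.
\begin{itemize}
\item The \textbf{foreach} over $Q_{\mathrm{cont}}$ is finite because the family is finite.
\item The inner \textbf{while} iterates at most twice: once for the volume-clearing target, then once after switching to $\Obs(\mathbf u)$, after which it either returns or breaks.
\item Each iteration of the outer \textbf{while} that does not return ends by inserting $\mathbf h$ into $H_{\mathrm{exh}}(S)$.
\end{itemize}
The key claim is that this inserted $\mathbf h$ is new. It lies on a path that avoids $B(S)\supseteq H_{\mathrm{exh}}(S)$, and the map together with $B(S)$ were fixed when $\Pi_g$ was computed, so $\mathbf h\notin H_{\mathrm{exh}}(S)$. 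Insertion therefore strictly decreases the second coordinate of $\mu$ while the first stays fixed. Two further points need care. First, $\PropagateNoGood$ only enlarges the memory of $S$, which never increases $\mu$. Second, the blocked set grows strictly between outer iterations, so $\Guide$ cannot return the same path forever. Consequently the outer loop runs at most $|H_e|$ times before $\Guide$ either returns \texttt{FAIL} or a path with no hitpoint, or the loop returns an action. This gives the first sentence of the lemma.

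\emph{Second claim.} While $\mathbf h$ stays active within one inner iteration, the \textbf{foreach} enumerates the whole family $Q_{\mathrm{cont}}$ returned by $\Cand(\mathbf h,\tau;Q_{\mathrm{rej}}(S))$. By (P1) this family is exhaustive up to rejected voxels. The loop exits early only through \textbf{return} $r$, that is, when an earlier candidate succeeds. So every generated non-rejected contaminated candidate is processed unless an earlier one succeeds, which is exactly the statement.
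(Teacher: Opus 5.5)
Your proposal is correct and follows essentially the same route as the paper's proof: induction on $|H_e\setminus H_{\mathrm{anc}}(S)|$ for the recursive calls, a finite candidate family for the \textbf{foreach} loop, at most two observation targets per hitpoint, and strict growth of $H_{\mathrm{exh}}(S)\subseteq H_e$ to bound the guidance restarts. You also spell out why the active hitpoint lies outside $B(S)$, so that $H_{\mathrm{anc}}$ and $H_{\mathrm{exh}}$ genuinely grow, which the paper leaves implicit; the second coordinate of your lexicographic measure only serves as a loop variant and is not needed for the induction itself.
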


\begin{proof}
Use induction on
$r(S)=|H_e\setminus H_{\mathrm{anc}}(S)|$. If $r(S)=0$, all possible
hitpoints are blocked, so Property~(P1) makes $\Guide$ return either
\texttt{FAIL} or a certified terminal path.

For $r(S)>0$, Property~(P1) first resolves $\Guide$. A missing path gives
\texttt{FAIL}; a path without a hitpoint is terminal. Otherwise, for the
active hitpoint $\mathbf h$, $\Cand$ produces a finite candidate family. A
certified candidate is connected by $\Conn$ and returned. If none exists,
Algorithm~\ref{alg:runtime_pipeline} processes the non-rejected contaminated
candidates sequentially. Each candidate creates a child with
$H_{\mathrm{anc}}(S_+)=H_{\mathrm{anc}}(S)\cup\{\mathbf h\}$ and hence
$r(S_+)<r(S)$. By induction, the child resolves in finite work; on failure
the candidate enters $Q_{\mathrm{rej}}(S)$ and the loop advances. Thus the
finite candidate family is exhausted unless a child succeeds.

After volume-clearing candidates are exhausted, the planner applies the same
argument once to an unknown-observation target. If that target is also
refuted, $\mathbf h$ is added to $H_{\mathrm{exh}}(S)$ and guidance
restarts. Since $H_{\mathrm{exh}}(S)\subseteq H_e$ grows strictly, only
finitely many such restarts are possible. The context must therefore resolve
as claimed.
\end{proof}

\subsubsection{Sensing and Episode Progress}

\begin{lemma}[Monotone observation progress]
\label{lem:map-progress}
Every planner-relevant map update strictly decreases
$u_t=|U_t|$; hence at most $|V|$ such updates can occur.
\end{lemma}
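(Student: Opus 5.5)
The plan is to define a ``planner-relevant map update'' precisely and then use Assumption~(A2) to show that such an update must shrink the unknown set. Under (A2), sensor updates only move voxels from \(U\) to \(F\) or \(O\), and no observed voxel returns to \(U\). Hence \(U_{t+1}\subseteq U_t\), so \(u_t\) is non-increasing. It remains to show that every planner-relevant update removes at least one voxel.

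For this, observe that every planner-relevant object is a deterministic function of the belief \((F_t,U_t,O_t)\): the inflation sets \(O^+\) and \(\Phi^+\), the graphs \(G^{\mathrm{opt}}\) and \(G^{\mathrm{cert}}\), the candidate predicates \(q\models\tau\), and the blocked sets. By the definition of a fixed-map episode, an update is planner-relevant exactly when this belief changes. Because \(V=F\cup U\cup O\) is a partition and (A2) allows only the transitions \(U\to F\) and \(U\to O\), the belief can change only if some voxel leaves \(U\). A transition \(F\leftrightarrow O\) would contradict consistency with a fixed ground-truth map. Therefore \(U_{t+1}\subsetneq U_t\), and so \(u_{t+1}\le u_t-1\).

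Since \(u_0\le|V|\), \(u_t\ge 0\), and \(V\) is finite because the workspace is bounded, at most \(u_0\le|V|\) strict decreases can occur. This proves the bound.

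The main obstacle is the middle step: excluding changes of the planner state that do not come from resolving an unknown voxel. Two candidates need to be ruled out, namely simulator occupancy flicker and re-labelling of already observed voxels. I would handle both by appealing explicitly to (A2) together with the ground-truth consistency clause, and I would note that flicker is excluded by this idealization. The execution-time fallback rule, which guarantees that each executed action actually triggers such an update, belongs to the separate Lemma~\ref{lem:finite-exec} and is not needed here.
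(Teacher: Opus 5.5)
Your proposal is correct and uses the same argument as the paper: (A2) forbids any voxel from entering $U$, and a planner-relevant update must change the belief, so $U_{t+1}\subsetneq U_t$ and at most $|V|$ such updates can occur. The paper's one-line proof attributes ``at least one voxel leaves $U$'' directly to (A2), and you spell that step out (planner-relevant objects are functions of the belief, and ground-truth consistency rules out $F\leftrightarrow O$ flips); only the implication from a planner-relevant change to a belief change is needed, not the ``exactly when'' you state.
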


\begin{proof}
Assumption~(A2) moves at least one voxel from $U_t$ to $F$ or $O$ and never
returns a voxel to $U$. Thus $U_{t+1}\subsetneq U_t$ and
$u_{t+1}<u_t$.
\end{proof}

\begin{lemma}[Executed observation progress]
\label{lem:obs-progress}
An executed unknown-observation action at a pose
$q\models\{\mathbf u\}$ resolves at least one previously unknown voxel.
\end{lemma}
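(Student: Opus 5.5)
The plan is to unpack the relation $q\models\{\mathbf u\}$ and combine it with the ideal sensing model (A2), so that the sensor provably returns a measurement along the ray to $\mathbf p_\tau=\mathrm{center}(\mathbf u)$. By definition, $q\models\{\mathbf u\}$ holds when two conditions are met. First, the unknown-observation heuristic $h_o(q;\mathbf u)$ vanishes. Second, the viewing ray from the sensor origin $\mathbf x_s$ to $\mathrm{center}(\mathbf u)$ crosses no currently known occupied voxel.

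\textbf{Step 1: $\mathbf u$ lies inside the sensing volume.} From $h_o=0$ we get
\[
\eta^o_{\max},\eta^o_{\min},\eta^o_u,\eta^o_l\le 0 .
\]
This gives $r_{\min}^{\mathrm{fov}}\le\|\mathbf d_s\|\le r_{\max}^{\mathrm{fov}}$ and $\theta_l\le\theta_o\le\theta_u$. The yaw is chosen so that the sensor faces $\mathbf p_\tau$, which places the horizontal component of $\mathbf d_s$ on the sensor axis, so the horizontal FOV constraint holds trivially. Hence $\mathrm{center}(\mathbf u)$ lies in the usable frustum at pose $q$.

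\textbf{Step 2: the ray reaches or stops before $\mathbf u$.} Walk the voxels $\mathbf w_0,\mathbf w_1,\ldots,\mathbf w_m=\mathbf u$ traversed by the ray, in order from $\mathbf x_s$. Under (A2), measurements are consistent with a fixed ground-truth map. The ideal sensor therefore reports the ray as free up to the first ground-truth-occupied voxel $\mathbf w_k$, which it reports as occupied, or up to $\mathbf u$ if there is no such voxel. Two cases follow.
\begin{itemize}
\item If some voxel on the ray before $\mathbf u$ is truly occupied, its previous belief cannot have been $O$, because the ray avoids known occupied voxels. If it was in $U$, it is now resolved. If it was in $F$, a contradiction arises: consistency with the fixed ground truth and monotonicity mean a voxel marked $F$ is truly free.
\item Otherwise the ray reaches $\mathbf u$ itself, and the ideal sensor resolves $\mathbf u$ to $F$ or $O$.
\end{itemize}
In either case the resolved voxel was previously in $U$. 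This holds for $\mathbf u$ by $\mathbf u\in\Omega(\mathbf h)\subseteq U$, and for the first occupied voxel by the argument above. So at least one previously unknown voxel is resolved.

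\textbf{Main obstacle.} The delicate point is the idealization in Step 2. We need (A2) to say that any in-range, in-FOV voxel whose sight line is physically unobstructed is actually measured. We also need the discretized ray test at $\mathrm{center}(\mathbf u)$ to coincide with the sensor's ray. The remedy is to read ``ideal sensing'' in (A2) as exactly this: a ray-cast model on the same voxel grid. With that reading, Steps 1 and 2 close the argument.

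One further requirement is that the robot actually executes the pose $q$. The action is $\Traj(\Pi_1;q_1,\mathbf h)$, and unknown-observation targets are excluded from preview, so the pose is attained exactly.
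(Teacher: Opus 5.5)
Your proposal is correct and follows the paper's own argument. The ray to $\mathbf u$ lies in the frustum and meets no known-occupied voxel, so either it reaches $\mathbf u$ and resolves it, or its first ground-truth obstacle lies neither in $F$ (by consistency with the ground truth) nor in $O$ (by the ray test in $q\models\{\mathbf u\}$), and is therefore an unknown voxel newly classified as occupied; your derivation of frustum membership from $h_o=0$ and your explicit ray-cast reading of (A2) only make precise what the paper's short proof takes for granted.
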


\begin{proof}
The ray to $\mathbf u\in U$ lies in the sensor frustum and contains no known
occupied voxel. If it reaches $\mathbf u$, that voxel is resolved. Otherwise,
the first ground-truth obstacle on the ray cannot be known free or known
occupied and is therefore an unknown voxel newly classified as occupied.
Either outcome changes the map belief.
\end{proof}

\begin{lemma}[Finite executions per fixed-map episode]
\label{lem:finite-exec}
Each fixed-map episode contains finitely many sensing executions.
\end{lemma}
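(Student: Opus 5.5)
The plan is to show that any sensing execution performed while the planner-relevant belief is still the episode's fixed belief must itself produce a map update, and that updates end the episode. Then each fixed-map episode contains at most one sensing execution that completes without changing the belief, in fact none, and the count is trivially finite. First, Lemma~\ref{lem:fixed-map-exhaustion} shows that each invocation of Algorithm~\ref{alg:runtime_pipeline} in the episode resolves after finite planning work. So between consecutive executions only finitely many planning steps occur, and the planner never stalls without emitting an action or \texttt{FAIL}. The remaining work is to show that every emitted sensing action $\Traj(\Pi_1;q_1,\mathbf h)$ ends the episode, or that only finitely many such actions can be emitted before it ends.

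I would split by the type of target. If $\tau=\{\mathbf u\}$ is an unknown-observation target, Lemma~\ref{lem:obs-progress} says that executing at $q_1\models\{\mathbf u\}$ resolves at least one previously unknown voxel. Under (A2) this is a strict map update, which terminates the episode. If $\tau$ is a volume-clearing target, there are two outcomes. Either the execution changes the belief, and the episode ends. Or it leaves the belief unchanged, so $\mathbf h$ is not cleared, because it was uncertified before and the map did not change. In that second case the execution-time fallback rule of Sec.~\ref{sec:planning-loop} applies. The context switches the active hitpoint's target to $\Obs(\mathbf u)$ with $\mathbf u\in\Omega(\mathbf h)$, and volume clearing for $\mathbf h$ is not retried until the belief changes.

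Next comes the counting within a single episode. Volume-clearing executions that do not change the map are tied to distinct hitpoints in the finite set $H_e$: each hitpoint loses its clearing option after one fruitless attempt, so there are at most $|H_e|$ of them. Every other execution is either an unknown-observation action or a productive clearing action, and each of these ends the episode by Lemma~\ref{lem:obs-progress} or by assumption. Hence the episode contains at most $|H_e|+1$ sensing executions.

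The main obstacle is the case where a later replanning step in the same episode selects a different hitpoint $\mathbf h'$, or selects $\mathbf h$ again in another context. I need the fallback state to be attached to the hitpoint under the fixed belief rather than to a particular context instance, so that clearing for $\mathbf h$ is suppressed globally until a map change. I would argue this from the rule's wording: retries are forbidden "until new observations change the map belief". Then the injection from fruitless clearing executions into $H_e$ holds regardless of context pruning or necessity checks.
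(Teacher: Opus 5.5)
Your case split matches the paper's. Unknown-observation executions and map-changing clearing executions both end the episode, so only zero-update volume-clearing executions need counting, and the execution-time fallback allows one such execution per occurrence.

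The gap is the step that indexes these occurrences by hitpoints alone. Your injection into $H_e$ needs volume clearing for $\mathbf h$ to be suppressed in \emph{every} context. The rule of Sec.~\ref{sec:planning-loop} is context-local, however. It is the context that switches the target of its active hitpoint to $\Obs(\mathbf u)$, target selection is $\Target(\mathbf h;S)$, and the paper's proof accordingly speaks of ``that context--hitpoint pair''. Under this rule the same hitpoint can be active in several contexts of one episode. Take two sibling children $\Child(S,q,\mathbf h_0,\tau)$ and $\Child(S,q',\mathbf h_0,\tau)$ whose subgoal guidance paths both first meet $\mathbf h'$. After the first child has made a fruitless clearing attempt on $\mathbf h'$ and later failed, the second child is a fresh context. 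It therefore again selects the volume-clearing target for $\mathbf h'$, and on the unchanged map nothing prevents it from returning another clearing action that produces no update. So zero-update executions do not inject into $H_e$, and the bound $|H_e|+1$ does not follow. Reading ``not retried until new observations change the map belief'' as a global suppression changes the planner rather than proving the lemma for it; for that modified planner your tighter bound would be valid.

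The missing ingredient is that only finitely many contexts arise in the episode. The paper takes this from the finiteness of $D_e$ (hence $H_e$) and Lemma~\ref{lem:fixed-map-exhaustion}: ancestor sets grow strictly along any chain, and each context draws its children from finite candidate families. Count zero-update clearing executions against context--hitpoint pairs instead of hitpoints, and your argument closes.

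Also remove the opening claim that every execution in the episode changes the belief (``in fact none''). Zero-update clearing executions are exactly what the lemma has to count, as your own second paragraph acknowledges.
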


\begin{proof}
An unknown-observation execution changes the map by
Lemma~\ref{lem:obs-progress} and ends the episode. A volume-clearing action
either changes the map or leaves the hitpoint uncleared; in the latter case,
the execution-time rule of Sec.~\ref{sec:planning-loop} switches that
context--hitpoint pair to unknown observation and does not retry volume
clearing on the unchanged map. Hence each pair causes at most one zero-update
execution. The number of contexts and hitpoints in the episode is finite by
$D_e$ and Lemma~\ref{lem:fixed-map-exhaustion}.
\end{proof}

\subsubsection{Proof of Theorem~\ref{thm:completeness}}

\begin{proof}
Lemma~\ref{lem:fixed-map-exhaustion} gives finite planning resolution within
each fixed-map episode, and Lemma~\ref{lem:finite-exec} gives finitely many
executions within that episode. Every map-changing execution strictly reduces
$|U|$ by Lemma~\ref{lem:map-progress}, so at most $|V|$ map-changing
episodes can occur.

Fix the finite feasible sensing sequence from Assumption~(A1), and consider
its next unresolved action. Property~(P1) generates its candidate pose and,
when certified-reachable, its connection. If the pose is initially
contaminated, the finite sequential candidate loop and
Lemma~\ref{lem:fixed-map-exhaustion} ensure that it is eventually processed
unless an earlier candidate already resolves the same obligation.
Lemma~\ref{lem:nogood-monotone} and
Corollary~\ref{cor:nogood-reuse} ensure that exclusion memory removes only
certified failures, so it cannot prune this action while the action remains
feasible.

The returned action either changes the map immediately or is a zero-update
volume-clearing action. In the latter case, the fallback selects an
unknown-observation action, which changes the map by
Lemmas~\ref{lem:obs-progress}--\ref{lem:finite-exec}. Thus each unresolved
obligation in the finite sequence is resolved after finite planning and
execution progress. After the sequence is completed, Assumption~(A1) places
the remaining route to the goal in $G^{\mathrm{cert}}$, so the planner
returns a certified terminal path. Therefore the goal is reached after
finitely many fixed-map episodes.
\end{proof}

\bibliographystyle{IEEEtran}
\bibliography{refs}

@article{lumelsky1986dynamic,
title = {Dynamic Path Planning for a Mobile Automaton with Limited Information on the Environment},
author = {Lumelsky, Vladimir J. and Stepanov, Alexander A.},
journal = {IEEE Transactions on Automatic Control},
volume = {31},
number = {11},
pages = {1058--1063},
year = {1986},
doi = {10.1109/TAC.1986.1104175},
publisher = {IEEE}
}

@article{kamon1997distbug,
title = {Sensory-Based Motion Planning with Global Proofs},
author = {Kamon, Ishay and Rivlin, Ehud},
journal = {IEEE Transactions on Robotics and Automation},
volume = {13},
number = {6},
pages = {814--822},
year = {1997},
publisher = {IEEE}
}

@article{kamon1998tangentbug,
title = {{TangentBug}: A Range-Sensor-Based Navigation Algorithm},
author = {Kamon, Ishay and Rimon, Elon and Rivlin, Ehud},
journal = {The International Journal of Robotics Research},
volume = {17},
number = {9},
pages = {934--953},
year = {1998},
doi = {10.1177/027836499801700903},
publisher = {Sage Publications}
}

@inproceedings{stentz1995dstar,
title = {The Focussed {D*} Algorithm for Real-Time Replanning},
author = {Stentz, Anthony},
booktitle = {Proceedings of the Fourteenth International Joint Conference on Artificial Intelligence},
pages = {1652--1659},
year = {1995}
}

@inproceedings{koenig2002dstar,
title = {{D*} Lite},
author = {Koenig, Sven and Likhachev, Maxim},
booktitle = {Proceedings of the Eighteenth National Conference on Artificial Intelligence},
pages = {476--483},
year = {2002},
publisher = {AAAI Press}
}

@article{zhou2019fastplanner,
title = {Robust and Efficient Quadrotor Trajectory Generation for Fast Autonomous Flight},
author = {Zhou, Boyu and Gao, Fei and Wang, Luqi and Liu, Chuhao and Shen, Shaojie},
journal = {IEEE Robotics and Automation Letters},
volume = {4},
number = {4},
pages = {3529--3536},
year = {2019},
publisher = {IEEE}
}

@inproceedings{zhou2020topotraj,
title = {Robust Real-Time {UAV} Replanning Using Guided Gradient-Based Optimization and Topological Paths},
author = {Zhou, Boyu and Gao, Fei and Pan, Jie and Shen, Shaojie},
booktitle = {2020 IEEE International Conference on Robotics and Automation},
pages = {1208--1214},
year = {2020},
organization = {IEEE},
doi = {10.1109/ICRA40945.2020.9196996}
}

@article{WANG2022GCOPTER,
    title={Geometrically Constrained Trajectory Optimization for Multicopters}, 
    author={Wang, Zhepei and Zhou, Xin and Xu, Chao and Gao, Fei}, 
    journal={IEEE Transactions on Robotics}, 
    year={2022}, 
    volume={38}, 
    number={5}, 
    pages={3259-3278}, 
    doi={10.1109/TRO.2022.3160022}
}

@article{zhou2021ego,
title = {{EGO}-Planner: An {ESDF}-Free Gradient-Based Local Planner for Quadrotors},
author = {Zhou, Xin and Wang, Zhepei and Ye, Hongkai and Xu, Chao and Gao, Fei},
journal = {IEEE Robotics and Automation Letters},
volume = {6},
number = {2},
pages = {478--485},
year = {2021},
publisher = {IEEE}
}

@inproceedings{ren2022bubble,
title = {Bubble Planner: Planning High-Speed Smooth Quadrotor Trajectories Using Receding Corridors},
author = {Ren, Yunfan and Zhu, Fangcheng and Liu, Wenyi and Wang, Zhepei and Lin, Yi and Gao, Fei and Zhang, Fu},
booktitle = {2022 IEEE/RSJ International Conference on Intelligent Robots and Systems},
pages = {6332--6339},
year = {2022},
organization = {IEEE},
doi = {10.1109/IROS47612.2022.9981518}
}

@article{tordesillas2021faster,
title = {{FASTER}: Fast and Safe Trajectory Planner for Navigation in Unknown Environments},
author = {Tordesillas, Jesus and Lopez, Brett T. and Everett, Michael and How, Jonathan P.},
journal = {IEEE Transactions on Robotics},
volume = {38},
number = {2},
pages = {922--938},
year = {2022},
doi = {10.1109/TRO.2021.3100142},
publisher = {IEEE}
}

@inproceedings{yang2021far,
title = {{FAR} Planner: Fast, Attemptable Route Planner Using Dynamic Visibility Update},
author = {Yang, Fan and Cao, Chao and Zhu, Hongbiao and Oh, Jean and Zhang, Ji},
booktitle = {2022 IEEE/RSJ International Conference on Intelligent Robots and Systems},
pages = {9--16},
year = {2022},
organization = {IEEE},
doi = {10.1109/IROS47612.2022.9981574}
}

@article{ren2025super,
title = {Safety-Assured High-Speed Navigation for {MAV}s},
author = {Ren, Yunfan and Zhu, Fangcheng and Lu, Guozheng and Cai, Yixi and Yin, Longji and Kong, Fanze and Lin, Jiarong and Chen, Nan and Zhang, Fu},
journal = {Science Robotics},
volume = {10},
number = {98},
pages = {eado6187},
year = {2025},
doi = {10.1126/scirobotics.ado6187},
publisher = {American Association for the Advancement of Science}
}

@inproceedings{yamauchi1997frontier,
title = {A Frontier-Based Approach for Autonomous Exploration},
author = {Yamauchi, Brian},
booktitle = {Proceedings of the 1997 IEEE International Symposium on Computational Intelligence in Robotics and Automation},
pages = {146--151},
year = {1997},
organization = {IEEE}
}

@inproceedings{connolly1985nbv,
title = {The Determination of Next Best Views},
author = {Connolly, C. Ian},
booktitle = {Proceedings of the 1985 IEEE International Conference on Robotics and Automation},
pages = {432--435},
year = {1985},
organization = {IEEE}
}

@inproceedings{bircher2016nbvp,
title = {Receding Horizon ``Next-Best-View'' Planner for 3D Exploration},
author = {Bircher, Andreas and Kamel, Mina and Alexis, Kostas and Oleynikova, Helen and Siegwart, Roland},
booktitle = {2016 IEEE International Conference on Robotics and Automation},
pages = {1462--1468},
year = {2016},
organization = {IEEE},
doi = {10.1109/ICRA.2016.7487281}
}

@article{zhou2021fuel,
title = {{FUEL}: Fast {UAV} Exploration Using Incremental Frontier Structure and Hierarchical Planning},
author = {Zhou, Boyu and Zhang, Yichen and Chen, Xinyi and Shen, Shaojie},
journal = {IEEE Robotics and Automation Letters},
volume = {6},
number = {2},
pages = {779--786},
year = {2021},
publisher = {IEEE}
}

@inproceedings{cao2021tare,
title = {{TARE}: A Hierarchical Framework for Efficiently Exploring Complex 3D Environments},
author = {Cao, Chao and Zhu, Hongbiao and Choset, Howie and Zhang, Ji},
booktitle = {Robotics: Science and Systems},
year = {2021}
}

@article{dang2019gbplanner,
title = {Graph-Based Subterranean Exploration Path Planning Using Aerial and Legged Robots},
author = {Dang, Tung and Tranzatto, Marco and Khattak, Shehryar and Mascarich, Frank and Alexis, Kostas and Hutter, Marco},
journal = {Journal of Field Robotics},
volume = {37},
number = {8},
pages = {1363--1388},
year = {2020},
publisher = {Wiley}
}

@article{chin1988watchman,
title = {Optimum Watchman Routes},
author = {Chin, Wei-Pang and Ntafos, Simeon C.},
journal = {Information Processing Letters},
volume = {28},
number = {1},
pages = {39--44},
year = {1988},
publisher = {Elsevier}
}

@inproceedings{guibas1997pursuit,
title = {Visibility-Based Pursuit-Evasion in a Polygonal Environment},
author = {Guibas, Leonidas J. and Latombe, Jean-Claude and LaValle, Steven M. and Lin, David and Motwani, Rajeev},
booktitle = {Algorithms and Data Structures},
series = {Lecture Notes in Computer Science},
volume = {1272},
pages = {17--30},
year = {1997},
publisher = {Springer}
}

@inproceedings{lavalle1997visibility,
title = {Motion Strategies for Maintaining Visibility of a Moving Target},
author = {LaValle, Steven M. and Gonzalez-Banos, Hector H. and Becker, Craig and Latombe, Jean-Claude},
booktitle = {Proceedings of the 1997 IEEE International Conference on Robotics and Automation},
volume = {1},
pages = {731--736},
year = {1997},
organization = {IEEE}
}

@incollection{latombe1997visibility,
title = {Motion Planning with Visibility Constraints: Building Autonomous Observers},
author = {Gonzalez-Banos, Hector H. and Guibas, Leonidas J. and Latombe, Jean-Claude and LaValle, Steven M. and Lin, David and Motwani, Rajeev and Tomasi, Carlo},
booktitle = {Robotics Research},
pages = {95--101},
year = {1998},
publisher = {Springer},
note = {Presented at the Eighth International Symposium on Robotics Research}
}

@incollection{goretkin2018look,
title = {Look Before You Sweep: Visibility-Aware Motion Planning},
author = {Goretkin, Gustavo and Kaelbling, Leslie Pack and Lozano-P{'e}rez, Tom{'a}s},
booktitle = {Algorithmic Foundations of Robotics XIII},
series = {Springer Proceedings in Advanced Robotics},
volume = {14},
pages = {373--388},
year = {2020},
publisher = {Springer},
doi = {10.1007/978-3-030-44051-0_22},
note = {Proceedings of WAFR 2018}
}

@phdthesis{goretkin2022vamp,
title = {Visibility-Aware Motion Planning},
author = {Goretkin, Gustavo Nunes},
school = {Massachusetts Institute of Technology},
year = {2022},
url = {https://hdl.handle.net/1721.1/143248}
}

@inproceedings{falanga2018pampc,
title = {{PAMPC}: Perception-Aware Model Predictive Control for Quadrotors},
author = {Falanga, Davide and Foehn, Philipp and Lu, Peng and Scaramuzza, Davide},
booktitle = {2018 IEEE/RSJ International Conference on Intelligent Robots and Systems},
year = {2018},
organization = {IEEE}
}

@inproceedings{bartolomei2020semantic,
title = {Perception-Aware Path Planning for {UAV}s Using Semantic Segmentation},
author = {Bartolomei, Luca and Teixeira, Lucas and Chli, Margarita},
booktitle = {2020 IEEE/RSJ International Conference on Intelligent Robots and Systems},
year = {2020},
organization = {IEEE}
}

@inproceedings{wang2021visibility,
title = {Visibility-Aware Trajectory Optimization with Application to Aerial Tracking},
author = {Wang, Qianhao and Gao, Yuman and Ji, Jialin and Xu, Chao and Gao, Fei},
booktitle = {2021 IEEE/RSJ International Conference on Intelligent Robots and Systems},
pages = {5249--5256},
year = {2021},
organization = {IEEE}
}

@article{tordesillas2022panther,
title = {{PANTHER}: Perception-Aware Trajectory Planner in Dynamic Environments},
author = {Tordesillas, Jesus and How, Jonathan P.},
journal = {IEEE Access},
volume = {10},
pages = {22662--22677},
year = {2022},
publisher = {IEEE}
}

@article{wang2022gpa,
title = {{GPA}-Teleoperation: Gaze Enhanced Perception-Aware Safe Assistive Aerial Teleoperation},
author = {Wang, Qianhao and He, Botao and Xun, Zhiren and Xu, Chao and Gao, Fei},
journal = {IEEE Robotics and Automation Letters},
volume = {7},
number = {2},
pages = {5631--5638},
year = {2022},
publisher = {IEEE}
}

@inproceedings{liu2022starconvex,
title = {Star-Convex Constrained Optimization for Visibility Planning with Application to Aerial Inspection},
author = {Liu, Tianyu and Wang, Qianhao and Zhong, Xingguang and Wang, Zhepei and Xu, Chao and Zhang, Fu and Gao, Fei},
booktitle = {2022 International Conference on Robotics and Automation},
pages = {7861--7867},
year = {2022},
organization = {IEEE}
}

@inproceedings{wu2024globalyaw,
title = {Trajectory Optimization with Global Yaw Parameterization for Field-of-View Constrained Autonomous Flight},
author = {Wu, Yuwei and Tao, Yuezhan and Spasojevic, Igor and Kumar, Vijay},
booktitle = {2024 IEEE/RSJ International Conference on Intelligent Robots and Systems},
pages = {10590--10596},
year = {2024},
organization = {IEEE}
}

@article{zhang2025spot,
title = {{SPOT}: Sensing-Augmented Trajectory Planning via Obstacle Threat Modeling},
author = {Zhang, Chi and Huang, Xian and Dong, Wei},
journal = {arXiv preprint arXiv:2510.16308},
year = {2025},
doi = {10.48550/arXiv.2510.16308}
}

@inproceedings{liu2016limited,
title = {High Speed Navigation for Quadrotors with Limited Onboard Sensing},
author = {Liu, Sikang and Watterson, Michael and Tang, Sarah and Kumar, Vijay},
booktitle = {2016 IEEE International Conference on Robotics and Automation},
pages = {1484--1491},
year = {2016},
organization = {IEEE}
}

@inproceedings{lopez2017aggressive3d,
title = {Aggressive 3-{D} Collision Avoidance for High-Speed Navigation},
author = {Lopez, Brett T. and How, Jonathan P.},
booktitle = {2017 IEEE International Conference on Robotics and Automation},
pages = {5759--5765},
year = {2017},
organization = {IEEE}
}

@inproceedings{lopez2017limitedfov,
title = {Aggressive Collision Avoidance with Limited Field-of-View Sensing},
author = {Lopez, Brett T. and How, Jonathan P.},
booktitle = {2017 IEEE/RSJ International Conference on Intelligent Robots and Systems},
pages = {1358--1365},
year = {2017},
organization = {IEEE}
}

@inproceedings{florence2018nanomap,
title = {{NanoMap}: Fast, Uncertainty-Aware Proximity Queries with Lazy Search over Local 3D Data},
author = {Florence, Peter R. and Carter, John and Ware, Jake and Tedrake, Russ},
booktitle = {2018 IEEE International Conference on Robotics and Automation},
pages = {7631--7638},
year = {2018},
organization = {IEEE}
}

@inproceedings{nieuwenhuisen2019sensorvisibility,
title = {Search-Based 3D Planning and Trajectory Optimization for Safe Micro Aerial Vehicle Flight Under Sensor Visibility Constraints},
author = {Nieuwenhuisen, Matthias and Behnke, Sven},
booktitle = {2019 International Conference on Robotics and Automation},
pages = {9123--9129},
year = {2019},
organization = {IEEE}
}

@article{yu2022cpa,
title = {{CPA}-Planner: Motion Planner with Complete Perception Awareness for Sensing-Limited Quadrotors},
author = {Yu, Qiuyu and Qin, Chao and Luo, Lingkun and Liu, Hugh H.-T. and Hu, Shiqiang},
journal = {IEEE Robotics and Automation Letters},
volume = {8},
number = {2},
pages = {720--727},
year = {2023},
doi = {10.1109/LRA.2022.3231827},
publisher = {IEEE}
}

@inproceedings{wang2024multifov,
title = {Multi-{FOV}-Constrained Trajectory Planning for Multirotor Safe Landing},
author = {Wang, Dong and Wang, Jingping and He, Suqin and Huang, Jinxin and Zhang, Bangyan and Mao, Yinian and Huang, Guoquan and Xu, Chao and Gao, Fei},
booktitle = {2024 IEEE/RSJ International Conference on Intelligent Robots and Systems},
pages = {5356--5363},
year = {2024},
organization = {IEEE},
doi = {10.1109/IROS58592.2024.10802806}
}

@inproceedings{baek2025pipe,
  author={Baek*, Seungjae and Moon, Brady and Kim, Seungchan and Cao, Muqing and Ho, Cherie and Scherer, Sebastian and Jeon, Jeong Hwan},
  booktitle={2025 IEEE/RSJ International Conference on Intelligent Robots and Systems (IROS)}, 
  title={PIPE Planner: Pathwise Information Gain with Map Predictions for Indoor Robot Exploration}, 
  year={2025},
  pages={7684-7691},
  doi={10.1109/IROS60139.2025.11246190}
}

@INPROCEEDINGS{harutyunyan2025mapexrl,
          title={MapExRL: Human-Inspired Indoor Exploration with Predicted Environment Context and Reinforcement Learning},
          author={Harutyunyan, Narek and Moon, Brady and Kim, Seungchan and Ho, Cherie and Hung, Adam and Scherer, Sebastian},
          booktitle = {2025 IEEE International Conference on Advanced Robotics (ICAR)},
          year={2025},
          doi={10.1109/ICAR65334.2025.11338661}
        }

@ARTICLE{FIRI,
  author={Wang, Qianhao and Wang, Zhepei and Wang, Mingyang and Ji, Jialin and Han, Zhichao and Wu, Tianyue and Jin, Rui and Gao, Yuman and Xu, Chao and Gao, Fei},
  journal={IEEE Transactions on Robotics}, 
  title={Fast Iterative Region Inflation for Computing Large 2-D/3-D Convex Regions of Obstacle-Free Space}, 
  year={2025},
  volume={41},
  number={},
  pages={3223-3243},
  doi={10.1109/TRO.2025.3562482}}

@misc{zacharia2026omni,
      title={OmniPlanner: Universal Exploration and Inspection Path Planning across Robot Morphologies}, 
      author={Angelos Zacharia and Mihir Dharmadhikari and Mohit Singh and Kostas Alexis},
      year={2026},
      eprint={2603.04284},
      archivePrefix={arXiv},
      primaryClass={cs.RO},
      url={https://arxiv.org/abs/2603.04284}, 
}

\end{document}